\let\NAT@parse\undefined
\newcolumntype{C}[1]{>{\Centering}m{#1}}
\renewcommand*{\@opargbegintheorem}[3]{\trivlist
  \item[\hskip \labelsep{\it\quad  #1\ #2:}] {\it(#3)}\ }
\newtheorem{theorem}{Theorem}[section]
\newtheorem{cor}[theorem]{Corollary}
\newtheorem{prop}[theorem]{Proposition}
\newtheorem{problem}{Problem}
\newtheorem{asmp}[theorem]{Assumption}
\newtheorem{exmp}{Example}
\newtheorem{defn}[theorem]{Definition}
\newtheorem{rem}[theorem]{Remark}
\newcommand{\level}[2]{\phi_{#2}^{#1}}
  \newenvironment{cexmp}[2]
  {\begin{exmp}{\textit{continued} (#2)}}
  {\end{exmp} \addtocounter{exmp}{-1}}
\newcommand{\state}[3]{#1_{#2}^{#3}}
\newcommand{\switch}[2]{\zeta_{\text{#1}}^{#2}}
\newcommand{\symtext}[2]{#1_{\text{#2}}}
\newcommand{\modifycolor}{black}  
\newcommand{\hltl}{{\color{\modifycolor}H-LTL$_f$}}  
\newcommand{\ltl}{{\color{\modifycolor}LTL$_f$}}  
\newcommand{\bhline}{\noalign{\hrule height 1.2pt}}
\newenvironment{sizeddisplay}[1]
 {\par\nopagebreak#1\noindent\ignorespaces}
 {\nopagebreak\ignorespacesafterend}
\definecolor{somegray}{rgb}{0.5, 0.5, 0.5}
\newcommand{\darkgrayed}[1]{\textcolor{somegray}{#1}}
\newcommand*\titleheader[1]{\gdef\@titleheader{#1}}
  \let\st@red@title\@title
  \def\@title{%
    \vskip-2em
    \bgroup\normalfont\large\centering\@titleheader\par\egroup
    \vskip0.5em\st@red@title}
\title{\LARGE \bf Simultaneous Task Allocation and Planning for Multi-Robots \\ under Hierarchical Temporal Logic Specifications
}%
\author{Xusheng Luo$^{1}$ and Changliu Liu$^{1}$
\thanks{This work was supported in part by Siemens, in part by Manufacturing Futures Institute, Carnegie Mellon University, through a grant from the Richard King Mellon Foundation, and in part by the National Science Foundation under Grant No. 2144489.}
\thanks{$^{1}$Xusheng Luo and Changliu Liu are with Robotics Institute, Carnegie Mellon University, Pittsburgh, PA 15213, USA (e-mail: {\tt\small\{xushengl, cliu6\}@andrew.cmu.edu})}%
}
\begin{document}


\maketitle

\begin{abstract}
Research in robotic planning with temporal logic specifications, such as Linear Temporal Logic (LTL), has relied on single formulas. However, as task complexity increases, LTL formulas become lengthy, making them difficult to interpret and generate, and straining the computational capacities of planners. {\color{\modifycolor}To address this, we introduce a hierarchical structure for a widely used specification type—LTL on finite traces (\ltl). The resulting language, termed H-LTL$_f$, is defined with both its syntax and semantics. We further prove that H-LTL$_f$ is more expressive than its standard “flat” counterparts.} {\color{\modifycolor}Moreover}, we conducted a user study that compared the {\color{\modifycolor}standard} \ltl\ with our hierarchical version and found that users could more easily comprehend complex tasks using the hierarchical structure. We develop a search-based approach to synthesize plans for multi-robot systems, achieving simultaneous task allocation and planning. This method approximates the search space by loosely interconnected sub-spaces, each corresponding to an \ltl\ specification. The search primarily focuses on a single sub-space, transitioning to another under conditions determined by the decomposition of {\color{\modifycolor}automata}. We develop multiple heuristics to significantly expedite the search. Our theoretical analysis, conducted under mild assumptions, addresses completeness and optimality. Compared to existing methods used in various simulators for service tasks, our approach improves planning times while maintaining comparable solution quality.
\end{abstract}
\begin{IEEEkeywords}
Formal Methods in Robotics and Automation; Planning, Scheduling and Coordination; Path Planning for Multiple Mobile Robots or Agents; Multi-Robot Systems
\end{IEEEkeywords}
\IEEEpeerreviewmaketitle

\section{Introduction}
In the field of multi-robot systems, two challenges are consistently at the forefront of research interest: (1) task allocation~\cite{korsah2013comprehensive}, determining which robot should perform which task, and (2) planning~\cite{lavalle2006planning}, the strategy for executing these tasks. Traditionally, these problems have been tackled separately by researchers from various fields. To simplify these issues, certain assumptions are often employed, such as the existence of  low-level controllers for task allocation or the use of pre-defined tasks in motion planning. These approaches, however, address only portions of the whole problem. Over the past two decades, methods have emerged that address these issues by combining task allocation and planning. Our work aligns with this integrative approach, but distinguishes itself by focusing on tasks defined by temporal logic specifications.

Formal methods, characterized by their mathematical rigor, are essential for specifying, developing, analyzing, and verifying software and hardware systems~\cite{woodcock2009formal}. The recent trend of utilizing formal specifications, such as Linear Temporal Logic (LTL)~\cite{pnueli1977temporal}, as high-level task specifications for multi-robots has garnered significant attention. This is because temporal logic specifications can encapsulate not just conventional point-to-point Boolean goals but also complex temporal requirements. 
While offering an expressive framework for task descriptions, temporal logic specifications also introduce computational challenges for solvers due to their complexity. Take LTL for an example. A typical method involves translating the formula into an automaton, a graphic representation. As shown in~\cite{kurtz2023temporal}, which involves a task of collecting 5 keys and then opening 5 doors, it took approximately half an hour on a laptop with an Intel i7 CPU and 32 GB of RAM to convert the corresponding LTL into an automaton consisting of 65 nodes and 792 edges. Similarly, in our simulations, an automaton could not be generated for a complex task within an hour on a laptop with Apple M2 Pro and 16G RAM. This issue stems from the usage of the {\color{\modifycolor}standard ``flat'' form of LTL}, that is, putting all requirements on robot behaviors into a single LTL formula, which tends to become difficult to solve for complex tasks. However, one key observation is that robot tasks often have a loose connection and can be broken down into smaller components. Studies suggest that humans prefer hierarchical task specification, which improves interpretability of planning and execution, making it easier to identify ongoing work and conveniently adjust unfeasible parts without affecting other components~\cite{tenenbaum2011grow, kemp2007learning}. {\color{\modifycolor}Inspired by this, we propose a hierarchical form of a widely used specification type—LTL on finite traces (LTL$_f$)~\cite{de2013linear}, which can be satisfied by finite sequence of system states. This characteristic makes \ltl\ apt for modeling and reasoning about systems with finite durations, such as those found in the robotics field.}

In this work, we first establish syntax and semantics for \hltl. To do this, considering the context of multiple specifications, we introduce the concept of {\it state-specification sequence}, which pairs the state of a system with a  specification it seeks to fulfill at a given time. Theoretically, we prove that the hierarchical form is more expressive compared to the standard form. Practically, we conducted a user study to show that using a hierarchical structure makes it easier to express complex tasks with temporal constraints.  Next, we propose a bottom-up planning algorithm that involves searching within a product graph that combines the environment and task space. This method's advantage is it can simultaneously achieve  task allocation and planning (STAP).
Particularly, inspired by~\cite{schillinger2018simultaneous}, we approximate the entire search space as a set of loosely interconnected sub-spaces, each corresponding to an individual \ltl\ specification in the hierarchical structure. The search process primarily occurs within a single sub-space, with transitions to adjacent sub-spaces happening under conditions determined by the decomposition of the {\color{\modifycolor}automata}. Under certain assumptions regarding the state-specification sequence, we demonstrate that our planning algorithm is sound, complete, and optimal. Additionally, we introduce several heuristics to expedite the search process, including minimizing the loose connections between sub-spaces and leveraging task-level progress to guide the search. We evaluate on extensive simulations on robot service tasks. Comparisons between our approaches with and without heuristics highlight a trade-off between computational time and solution quality. Our method shows significant improvements in computational time than work~\cite{schillinger2018simultaneous}. Notably, for complex tasks,~\cite{schillinger2018simultaneous} failed to produce solutions within a one-hour timeout. In scalability tests, our method demonstrated the ability to generate sub-optimal solutions in approximately three minutes for complex tasks involving up to 30 robots. The standard version of these tasks could not be solved using graph-theoretic methods due to the inability to generate an automaton within a one-hour timeout. A glossary of key notations is provided in Tab.~\ref{tab:glossary}.

{\bf Contributions} The contributions are listed as follows:
\begin{enumerate}
\item We introduce a hierarchical form of \ltl\ and prove it is more expressive than the standard form;
\item We conduct a user study to show that the \hltl\  is easier to interpret compared to the standard \ltl;
\item We develop a search-based planning algorithm, achieving simultaneous task allocation and planning. This planner is, to the best of our knowledge, the first to offer both completeness and optimality for \hltl;
\item We devise multiple heuristics to expedite the search;
\item Under mild assumptions, we theoretically analyze the completeness and optimality of our approach;
\item We conduct extensive comparative simulations focusing on service tasks to showcase the efficiency and scalability of our proposed method.
\end{enumerate}
\section{Related Work}

\subsection{Multi-robots Planning under Temporal Logic Specifications}\label{sec:lit_mr}
In the existing body of research on optimal control synthesis from LTL specifications, two primary approaches are present in handling LTL tasks for multi-robot systems. One approach, as seen in works~\cite{guo2015multi,tumova2016multi,yu2021distributed}, involves assigning LTL tasks {\it locally} to individual robots within the team. The other approach assigns a {\it global} LTL specification to the entire team. In scenarios where global LTL specifications are used, these specifications can either explicitly assign tasks to individual robots~\cite{loizou2004automatic,smith2011optimal,saha2014automated,kantaros2017sampling,kantaros2018distributedOpt,kantaros2018sampling,kantaros2020stylus,kantaros2022perception,luo2019transfer,luo2021abstraction}, or the tasks may not be explicitly designated to specific robots~\cite{kloetzer2011multi,shoukry2017linear,moarref2017decentralized,lacerda2019petri},  similar to our problem in this work.

 As temporal logic formulas are used to tackle tasks involving multiple robots and complex environments, they inevitably become lengthy. Some attempts have been made to simplify this, such as merging multiple atomic propositions into one using logical operators~\cite{kantaros2020stylus,luo2021abstraction}, or combining multiple sub-formulas into one formula using logical operators~\cite{saha2014automated,smith2011optimal,sahin2019multirobot}. Despite these efforts, we still categorize these as the standard form. Notably, temporal operators, the feature distinguishing LTL from propositional logic, only appear on one side, either for the integration of merged propositions or inside the sub-formulas.  Works~\cite{shoukry2017linear,sahin2019multirobot,leahy2021scalable,luo2022temporal,liu2022time,li2023fast} have defined propositions involving more than one robot to encapsulate collaborative tasks. In our study, we introduce {\it composite propositions} that encompass more than one sub-formula and can be combined using temporal operators. Besides, various extensions of temporal logic have been introduced:~\cite{sahin2019multirobot} developed {\color{\modifycolor}Counting Linear Temporal Logic (cLTL)} to express the collective behavior of multiple robots.~\cite{djeumou2020probabilistic,yan2019swarm} utilized Graph Temporal Logic (GTL) and Swarm Signal Temporal Logic (SwarmSTL) respectively to specify swarm characteristics like centroid positioning and density distribution.~\cite{leahy2021scalable} introduced Capability Temporal Logic (CaTL) to encapsulate the diverse capabilities of robots. However, all works mentioned focus on using standard LTL specifications to describe the behaviors in multi-robot systems.

Global specifications that do not explicitly assign tasks to robots generally require decomposition to derive the task allocation, which can be accomplished in three ways: The most common method, utilized in works such as~\cite{schillinger2018simultaneous,luo2019transfer,camacho2017non,camacho2019ltl,schillinger2019hierarchical,luo2022temporal,liu2024time}, involves decomposing a global specification into multiple tasks, which leverages the transition relations within the automaton, which is the graphical representation of an LTL formula. As demonstrated in~\cite{shoukry2017linear,sahin2019multirobot}, the second approach builds on Bounded Model Checking (BMC) methods~\cite{biere2006linear} to create a Boolean Satisfaction or Integer Linear Programming (ILP) model, which simultaneously addresses task allocation and implicit task decomposition in a unified formulation. Another method, proposed by~\cite{leahy2022fast}, directly interacts with the syntax tree of LTL formulas, which segments the global specification into smaller, more manageable sub-specifications. 

The works most closely related to ours are~\cite{schillinger2018simultaneous,schillinger2018decomposition,faruq2018simultaneous,robinson2021multiagent,luo2024decomposition}. Works~\cite{schillinger2018simultaneous,schillinger2018decomposition} approached STAP by breaking down temporal logic tasks into  independent tasks, each of which can be completed by a robot.~\cite{faruq2018simultaneous} built upon this idea, incorporating environmental uncertainties and  failures into the planning.~\cite{robinson2021multiagent} expanded further into a multi-objective setting, integrating conflicting objectives such as cost and success probability into a Markov Decision Process (MDP). Our work presents key differences: (i) While the aforementioned studies focus on standard LTL specifications, we consider hierarchical specifications and can solve tasks which are unsolvable by these earlier methods. (ii) In the mentioned studies, tasks between robots are independent thus robot can execute their tasks in parallel. In contrast, our approach also assigns tasks with temporal dependencies to different robots, thus robots may depend on others. Note that~\cite{luo2024decomposition}, developed around the same time, is the only existing work that investigated decision-making under \hltl. This work differs from ours in that, algorithmically,~\cite{luo2024decomposition} proposed a hierarchical approach that first focused on task allocation and then planning, unlike STAP here. 
Theoretically, the planning algorithm introduced in~\cite{luo2024decomposition} lacks formal guarantees of either completeness or optimality.

\subsection{Task Allocation and Planning}

The domain of Multi-Robot Task Allocation (MRTA) is thoroughly studied, and various classifications of MRTA have been developed~\cite{korsah2013comprehensive,gerkey2004formal}, organizing the extensive research in this field. In terms of MRTA, the problem is this work can be  is characterized by four aspects: (i) single-task robots (each robot is limited to executing one task at a time), (ii) single-robot tasks (only one robot is required for each task), (iii) instantaneous assignment (tasks are allocated under all available information at this moment), and (iv) cross-schedule interdependence (the assignment of a robot to a task is influenced not solely by its own schedule but also by the schedules of other robots within the system).

Methods for MRTA primarily fall into two categories: market-based~\cite{quinton2023market} and optimization-based approaches~\cite{chakraa2023optimization}. Market-based methods use economic principles, employing auction and bidding techniques to distribute tasks according to cost and resource. Of these,  auction-based methods see robots acting selfishly, each calculating bids for tasks based on their self-interest~\cite{zavlanos2008distributed}, while consensus-based methods~\cite{choi2009consensus} combine an auction stage with a consensus stage, using this as a conflict resolution mechanism to establish agreement on the final bids. Moreover, MRTA problems can also be addressed using various optimization techniques, such as Integer Linear Programming (ILP) or Mixed Integer Linear Programming (MILP)~\cite{koes2005heterogeneous}. Additionally, meta-heuristic algorithms like genetic algorithms~\cite{patel2020decentralized} or particle swarm optimization~\cite{wei2020particle} are employed for more intricate MRTA challenges where exact methods are unfeasible. 
Generally, research on MRTA assumes that the cost of a robot executing a task is known or known with some uncertainty, focusing less on the planning.


 The domain of integrated task assignment and path planning in multi-robot systems involves both task-level reasoning and motion-level planning. The primary objective is to create collision-free paths for robots, enabling them to accomplish a variety of reach-avoid tasks pending assignment. This field is often associated with the unlabeled version of multi-agent path planning (MAPF), as discussed by~\cite{okumura2023solving}. Various formulations have been proposed, including those by~\cite{edison2011integrated,chen2021integrated,aggarwal2022extended,ma2016optimal}, to name a few.~\cite{chen2021integrated} focused on simultaneous task assignment and path planning for Multi-agent Pickup and Delivery (MAPD) in warehouse settings, where the goal is to efficiently manage agents transporting packages.~\cite{aggarwal2022extended} tackled the combined challenge of task allocation and path planning, where an operator must assign multiple tasks to each vehicle in a fleet, ensuring collision-free travel and minimizing total travel costs.~\cite{ma2016optimal} explored this issue for teams of agents, each assigned the same number of targets as there are agents in the team. Our work differs from these studies in that: (i) We do not assume a predefined set of point-to-point navigation tasks. Instead, tasks, which may include navigation and manipulation, are implicitly defined within the temporal logic specifications. (ii) The presence of logical and temporal constraints between tasks adds complexity to our problem. (iii) In our work, a single robot might be assigned multiple tasks, while others might remain stationary. This contrasts with scenarios where each robot is assigned exactly one task in works mentioned above.

\subsection{Hierarchical Task Models}
{\color{\modifycolor}Studies~\cite{tenenbaum2011grow, kemp2007learning} suggest that hierarchical reasoning enhances humans’ ability to understand the world more effectively.} In classical AI planning, researchers crafted task models reflecting hierarchical structures by employing  domain control knowledge~\cite{wilkins2014practical}. These models have proven to be superior to flat models in terms of interpretability and efficiency, due to the significant reduction in the search space. Hierarchical Task Network (HTN)~\cite{georgievski2015htn}, a commonly used task model, exemplifies this. It presents a hierarchy of tasks, each of which can be executed if it's primitive, or broken down into finer sub-tasks if it's complex. Its plan comprises a set of primitive tasks applicable to the initial world state. Owing to its expressiveness, HTN has been implemented in the planning~\cite{weser2010htn}. There are other hierarchical models such as \texttt{AND/OR} graphs~\cite{de1990and} and sequential/parallel graphs~\cite{cheng2021human}. 

Research combining hierarchical task models with LTL includes studies that use LTL to express temporally extended preferences over tasks and sub-tasks in HTN~\cite{baier2009htn}, as well as research into the expressive power of HTN in combination with LTL~\cite{lin2022expressive}.  However, despite the widespread use of hierarchical task models in classical AI planning, it's intriguing to note the lack of specification hierarchy in temporal logic robotic planning. Our work differs from these studies as we follow an inverse direction; instead of integrating \ltl\ into HTN to express the goal of the planning problem, we incorporate HTN into \ltl, allowing for hierarchical structures within multiple \ltl\ formulas, making them  more capable of expressing complex tasks than a single \ltl\ formula.

\newcolumntype{L}{>{\raggedright\arraybackslash}X}
\newcommand{\thickhline}{\noalign{\hrule height 1.2pt}}
\begin{table*}[!t]
\centering
\begin{tabularx}{\linewidth}{lL|lL}
\thickhline
\textbf{Notation} & \textbf{Description} & \textbf{Notation} & \textbf{Description} \\
\hline
$\sigma$ & finite word over the alphabet & $\ccalA (\phi)$ & NBA of specification $\phi$  \\
 $\ccalQ_\ccalA^0$ & set of initial automaton states & $\ccalQ_\ccalA$ & set of automaton states \\
$\Sigma = 2^{\ccalA\ccalP}$ & alphabet &   $\to_\ccalA$  & transition relation in $\ccalA$ \\
$\ccalA\ccalP$ & set of atomic propositions & $\ccalQ_\ccalA^F$ & set of final automaton states \\
$\ccalT (r)$ & transition system of robot $r$ & $\rho_\ccalA$ & finite run \\
$\ccalS_r$ & states of robot $r$ & $\ccalL_\ccalA$ & accepted language of $\ccalA$ \\
$s_r^0$ & initial state of robot $r$ & $\ccalP(r, \phi)$ & PBA of robot $r$ and specification $\phi$ \\
$\to_r$ & transition relation of robot $r$ & $\ccalQ_\ccalP$ & set of product states \\
$\ccalA\ccalP_r$ & atomic proposition of robot $r$ & $\ccalQ_\ccalP^0$ & set of initial product states \\
$\ccalL_r$ & observation function of robot $r$ & $\to_\ccalP$ & transition relation in $\ccalP$ \\
$\level{i}{k}$ & $i$-th specification at the $k$-th level & $\ccalQ_\ccalP^F$ & set of final states \\
$ \ccalG_h = (\ccalV_h, \ccalE_h)$ & {\color{\modifycolor}specification hierarchy tree} & $c_\ccalP$ & cost function in $\ccalP$ \\
$\Phi_k$ & set of specifications at the $k$-th level & $\tau = \tau_0 \ldots \tau_h$ & {\color{\modifycolor}state-specification sequence} \\
$\Phi_K$ & set of specifications at the lowest level & $\ccalD_\ccalA(\phi)$ & decomposition set of specification $\phi$ \\
$\Phi_{\text{leaf}}$ & set of leaf specifications & $\ccalP(\phi)$ & product team model associated with specification $\phi$ \\
$\zeta_{\text{in}}$ & in-spec switch transition & $\ccalP$ & hierarchical team models \\
$\zeta_{\text{inter}}^1$ & type I inter-spec switch transition & $\zeta_{\text{inter}}^2$ & type II inter-spec switch transition \\
$v = (r, \phi, \mathbf{s}, \mathbf{q})$ & search state & $\Lambda = v_0 \ldots v_n$ & search path \\
$\Lambda' = \Lambda_1\ldots \Lambda_l$ & search path after removing all switch states & $\ccalR_k$ & active robots of the $k$-th path segment \\
\thickhline
\end{tabularx}
\caption{Summary of notations.}
\label{tab:glossary}
 \vspace{-10pt}
\end{table*}

\section{Preliminaries}\label{sec:preliminaries}
{\bf Notation:} Let $\mathbb{N}$ denote the set of all integers, $[K] = \{0, \ldots, K\}$ and $[K]_+ = \{1, \ldots, K\}$ represent the sets of integers from 0 to $K$ and from 1 to $K$, respectively, and $|\cdot|$ denote the cardinality of a set.

In this section, we introduce \ltl\ by presenting its syntax and semantics, and automata-based \ltl\ model checking.

LTL~\cite{baier2008principles} is a type of formal logic whose basic ingredients are a set of atomic propositions $\mathcal{AP}$, the {\color{\modifycolor}Boolean} operators, conjunction $\wedge$ and negation $\neg$, and  temporal operators, next $\bigcirc$ and until $\mathcal{U}$. LTL formulas over $\mathcal{AP}$ abide by the grammar 
\begin{align}\label{eq:grammar}
\phi::=\text{true}~|~\pi~|~\phi_1\wedge\phi_2~|~\neg\phi~|~\bigcirc\phi~|~\phi_1~\mathcal{U}~\phi_2.    
\end{align}
{For brevity, we abstain from deriving  other Boolean and temporal operators, e.g., \textit{disjunction} $\vee$, \textit{implication} $\Rightarrow$, \textit{always} $\square$, \textit{eventually} $\lozenge$, which can be found in \cite{baier2008principles}.} 

{\color{\modifycolor}\ltl~\cite{de2013linear} is a variant of LTL that is interpreted over finite sequences of states while maintaining the same syntax as standard LTL. A finite \textit{word} $\sigma$ over the alphabet $2^{\mathcal{AP}}$ is defined as a finite sequence  $\sigma=\sigma_0\sigma_1\ldots\sigma_h$ with $\sigma_i \in 2^{\ccalA\ccalP}$ for $i \in [h]$. The satisfaction of an \ltl\ formula $\phi$ over a sequence $\sigma$ at an instant $i$, for $i\in [h]$, is inductively defined as follows:
\begin{itemize}
  \item  $\sigma, i  \models \pi\;\; \text{iff}\; \pi \in \sigma_i$.
  \item  $\sigma, i  \models \neg \phi\;\; \text{iff}\; \sigma, i \not\models \phi$.
  \item  $\sigma, i \models \phi_1 \wedge \phi_2 \;\; \text{iff}\; \sigma, i \models \phi_1 \;\text{and}\;  \sigma, i \models \phi_2$.
  \item  $\sigma, i  \models \bigcirc \phi \;\; \text{iff}\; i < h \;\text{and}\;  \sigma, i+1 \models \phi$.
  \item  $\sigma, i  \models \phi_1\,\ccalU\, \phi_2$ \text{iff for some} $j$ such that $i \leq j \leq h$, we have that $\sigma, j \models \phi_2$, and for all $k, i \leq k < j$, we have that $\sigma, k \models \phi_1$.
\end{itemize}

A formula $\phi$ is satisfied by $\sigma$, denoted by $\sigma \models \phi$, if $\sigma, 0 \models \phi$.  The language $\texttt{Words}(\phi)=\left\{\sigma|\sigma\models\phi\right\}$ is defined as the set of words that satisfy the formula $\phi$. An \ltl\ formula $\phi$ can be translated into a Nondeterministic Finite Automaton}:
\begin{defn}[NFA]
{\color{\modifycolor} A \textit{Nondeterministic Finite Automaton} (NFA) $\ccalA$ of an \ltl\ formula $\phi$ over $2^{\mathcal{AP}}$ is defined as a tuple $\ccalA(\phi)=\left(\ccalQ_{\ccalA}, \ccalQ_{\ccalA}^0,\Sigma,\rightarrow_{\ccalA},\mathcal{Q}_\ccalA^F\right)$, where 
\begin{itemize}
    \item $\ccalQ_{\ccalA}$ is the set of states;
    \item $\ccalQ_{\ccalA}^0\subseteq\ccalQ_{\ccalA}$ is a set of initial states;
    \item $\Sigma=2^{\mathcal{AP}}$ is an alphabet;
    \item $\rightarrow_{\ccalA}\, \subseteq\, \ccalQ_{\ccalA}\times \Sigma\times\ccalQ_{\ccalA}$ is the transition relation;
    \item $\ccalQ_{\ccalA}^F\subseteq\ccalQ_{\ccalA}$ is a set of accepting/final states.
\end{itemize} 
}
\end{defn}
{\color{\modifycolor}A \textit{finite run} $\rho_{\ccalA}$ of $\ccalA$ over a finite word $\sigma=\sigma_0\sigma_1\dots\sigma_h$, $\sigma_i =  2^{\mathcal{AP}}$, $\forall i\in [h]$, is a sequence $\rho_{\ccalA}=q_{\ccalA}^0q_{\ccalA}^1 \dots q_{\ccalA}^{h+1}$ such that $q_{\ccalA}^0\in \ccalQ_{\ccalA}^0$ and $(q_{\ccalA}^{i},\sigma_i,q_{\ccalA}^{i+1})\in\rightarrow_{\ccalA}$, $\forall i\in [h]$. A run $\rho_{\ccalA}$ is called \textit{accepting} if $ q_{\ccalA}^{h+1}  \in  \ccalQ_{\ccalA}^F$. The words $\sigma$ that produce an accepting run of $\ccalA$ constitute the accepted language of $\ccalA$, denoted by $\ccalL_{\ccalA}$. Then~\cite{baier2008principles} proves that the accepted language of $\ccalA$ is equivalent to the words of $\phi$, i.e., $\ccalL_{\ccalA}=\texttt{Words}(\phi)$.}

  {The dynamics of robot $r$ is captured by a Transition System:

\begin{defn}[TS]\label{def:ts}
  A {\it Transition System} for robot $r$ is a tuple $\ccalT(r) = \{\ccalS_r, \state{s}{r}{0}, \to_r, \ccalA\ccalP_r, \ccalL_r\}$ where: 
  \begin{itemize}
      \item {\color{\modifycolor}$\ccalS_r$ is the set of discrete states of robot $r$, and $s_r \in \ccalS_r$ denotes a specific state;}
      \item {\color{\modifycolor} $\state{s}{r}{0} \in \ccalS_r$ is the initial state of robot $r$;} 
      \item  $\to_r \,\subseteq\, \ccalS_r \times \ccalS_r$ is the transition relation;
      \item $\mathcal{AP}_r$ is the set of atomic propositions related to robot $r$;
      \item $\ccalL_r:\ccalS_r\rightarrow 2^{\mathcal{AP}_r}$  is the observation (labeling) function that returns a subset of atomic propositions that are satisfied, i.e., $\ccalL_r(s_{r}) \subseteq \mathcal{AP}_r$.
  \end{itemize}
\end{defn}
{\color{\modifycolor}A finite sequence of states $\state{s}{r}{0}, \state{s}{r}{1}, \ldots, \state{s}{r}{h}$ is referred to as a {\it trace}.} Given the transition system $\ccalT(r)$ of robot $r$ and the NFA $\ccalA$ of an \ltl\ formula $\phi$, we can define the \textit{Product Automaton} (PA) $\ccalP(r, \phi) = \ccalT(r) \times \ccalA(\phi)$ as follows~\cite{baier2008principles}:

\begin{defn}[PA]\label{defn:PA}
For a robot $r$ and an \ltl\ formula $\phi$, the \textit{Product Automaton} is defined by the tuple $\ccalP(r, \phi)=(\mathcal{Q}_\ccalP, \mathcal{Q}_\ccalP^0,\longrightarrow_{\ccalP},\mathcal{Q}_\ccalP^F)$, where
\begin{itemize}
    \item $\mathcal{Q}_\ccalP=\ccalS_r \times\mathcal{Q}_{\ccalA}$ is the set of product states;
    \item $\mathcal{Q}_\ccalP^0=\{ \state{s}{r}{0}\}\times\mathcal{Q}_\ccalA^0$ is a set of initial states;
    \item $\longrightarrow_{\ccalP}\subseteq\mathcal{Q}_\ccalP \times \mathcal{Q}_\ccalP$ is the transition relation defined by the rule: $\frac{(s_r \rightarrow_{r} s'_r)\wedge(q\xrightarrow{\ccalL_r\left(s_r \right)}_{\ccalA}q')}{q_{\ccalP}=\left(s_r,q\right)\longrightarrow_\ccalP q_{\ccalP}'=\left(s'_r,q'\right)}$. The transition from the state $q_\ccalP\in\mathcal{Q}_\ccalP$ to $q_\ccalP'\in\mathcal{Q}_\ccalP$, is denoted by $(q_\ccalP,q_\ccalP')\in\longrightarrow_\ccalP$, or $q_\ccalP\longrightarrow_\ccalP q_\ccalP'$;
    \item $\mathcal{Q}_\ccalP^F=\ccalS_r\times\mathcal{Q}_\ccalA^F$ is a set of accepting/final states;
\end{itemize}
\end{defn}

\section{Hierarchical \ltl}\label{sec:hltl}
We first present the syntax and semantics of \hltl, then we analyze its expressiveness.

\subsection{Syntax of \hltl}

{\color{\modifycolor}
\begin{defn}[Hierarchical \ltl]\label{def:hltl}
\hltl\ is structured into $K$ levels, labeled $L_1, \ldots, L_K$, arranged from the highest to the lowest. Each level $L_k$ with $k \in [K]_+$ contains $n_k$ \ltl\ formulas. The \hltl\ specification is represented as $\Phi = \left\{\level{i}{k} \,|\, k \in [K]_+, i \in [n_k]_+\right\}$, where $\level{i}{k}$ denotes the $i$-th \ltl\ formula at level $L_k$. Let $\Phi_k$ denote the set of formulas at level $L_k$, and let $\texttt{Prop}(\level{i}{k})$ represent the set of propositions appearing in formula $\level{i}{k}$. The \hltl\ follows these rules:
\begin{enumerate}
    \item \label{cond:highest} There is exactly one formula at the highest level: $n_1 = 1$.
    \item \label{cond:inclusion} Each formula at level $L_k$ consists either entirely of atomic propositions, i.e., $\texttt{Prop}(\level{i}{k}) \subseteq \ccalA\ccalP$, or entirely of formulas from the next lower level, i.e., $\texttt{Prop}(\level{i}{k}) \subseteq \Phi_{k+1}$.
    \item \label{cond:union} Each formula at level $L_{k+1}$ appears in exactly one formula at the next higher level: $\level{i}{k+1} \in \bigcup_{j \in [n_k]_+} \texttt{Prop}(\level{j}{k})$ and $\texttt{Prop}(\level{j_1}{k}) \cap \texttt{Prop}(\level{j_2}{k}) = \varnothing$, for $j_1, j_2 \in [n_k]_+$ and $j_1 \not= j_2$.
\end{enumerate}
\end{defn}

}
 
For a formula $\level{i}{k}$ at a non-highest level, we slightly bend the notation to use $\level{i}{k}$ to represent the same symbol at the higher level $L_{k-1}$, which we refer to as {\it composite proposition}. $\level{i}{k}$ represents not only the $i$-th formula at level $L_{k}$, but also the corresponding composite proposition at level $L_{k-1}$. When $\level{i}{k}$ appears at the right side in a certain formula, we consider it as a composite proposition. When it appears at the left side as a standalone formula, we refer to it as a specification.  

{\color{\modifycolor}
\begin{exmp}[\hltl]\label{exmp:hltl}
    Consider a task where robots are tasked with {\it first} picking and placing item $a$ and then fetching items $b$ and $c$ {\it in any order}. The \ltl\ specification is 
\begin{align*}
\phi  \,= \, & \Diamond (s_a \wedge \Diamond (t_a \wedge \Diamond (s_b \wedge \Diamond t_b) \wedge \Diamond (s_c \wedge \Diamond t_c))),
\end{align*}
where sub-formula $\Diamond (s_x \wedge \Diamond t_x)$ denotes the event of {\it first} picking item $x$ from its source location $s_x$ and {\it then} placing it at the target location $t_x$.  This formula is less interpretable by arranging all sub-formulas side by side. Furthermore, its automaton has 17 states and 102 transitions, which is unexpectedly large for a task of this complexity. The \hltl\ specifications of the same task are 
\begin{align}\label{eq:example_hltl}
\begin{aligned}
     L_1: \quad & \level{1}{1} = \Diamond (\level{1}{2} \wedge \Diamond \level{2}{2} \wedge \Diamond \level{3}{2}) \\
    L_2: \quad &  \level{1}{2} = \Diamond (s_a \wedge \Diamond t_a) \\
                & \level{2}{2} = \Diamond (s_b \wedge \Diamond t_b)\\
                & \level{3}{2} = \Diamond (s_c \wedge \Diamond t_c).
\end{aligned}
\end{align}
There are two levels, $L_1$ and $ L_2$, with $L_1$ having one formula and $L_2$ having three formulas.  The symbol $\level{1}{2}$, appearing on the left side of the equal sign, is a specification at level $L_2$ but is a composite proposition at level $L_1$ since it is on the right side of the equal sign. $\Phi_1 = \{\level{1}{1}\}$ and   $\Phi_2 = \{\level{1}{2}, \level{2}{2}, \level{3}{2}\}$, $\texttt{Prop}(\level{1}{1}) = \{\level{1}{2}, \level{2}{2}, \level{3}{2}\} \subseteq \Phi_2$ and $\texttt{Prop}(\level{1}{2})=\{s_a, t_a\} \subseteq \ccalA\ccalP$. This representation  reduces the length of each formula, leading to smaller {\color{\modifycolor}automata} and improving interpretability. Their {\color{\modifycolor}automata} have a total of 8 states and 18 transitions. Note that the hierarchical representation is not unique. Depending on the level of granularity in the hierarchy, the same task can be represented as:
\begin{align}\label{eq:fine_example_hltl}
     L_1: \quad & \level{1}{1} = \Diamond (\level{1}{2} \wedge \Diamond \level{2}{2}) \nonumber\\
    L_2: \quad &  \level{1}{2} = \Diamond (s_a \wedge \Diamond t_a), \quad
                  \level{2}{2} = \Diamond \level{1}{3} \wedge \Diamond \level{2}{3} \\
    \hspace*{1.cm}  L_3: \quad & \level{1}{3} = \Diamond (s_b \wedge \Diamond t_b) , \quad \,
                 \level{2}{3} = \Diamond (s_c \wedge \Diamond t_c). \hspace*{0.8cm} \square \nonumber
\end{align}
\end{exmp}}


{\color{\modifycolor}
\begin{defn}[Specification hierarchy tree]  The specification hierarchy tree, denoted as \(\ccalG_h = (\ccalV_h, \ccalE_h)\), is a tree where each node represents a specification within the \hltl, and an edge \((u, v)\) indicates that specification \(u\) contains specification \(v\) as a composite proposition. Any \hltl\ specifications can be turned into a specification hierarchy tree.
\end{defn}
}

\begin{figure}[!t]
    \centering
     \subfigure[Specification hierarchy tree]{
      \label{fig:hierarchy}
      \includegraphics[width=0.45\linewidth]{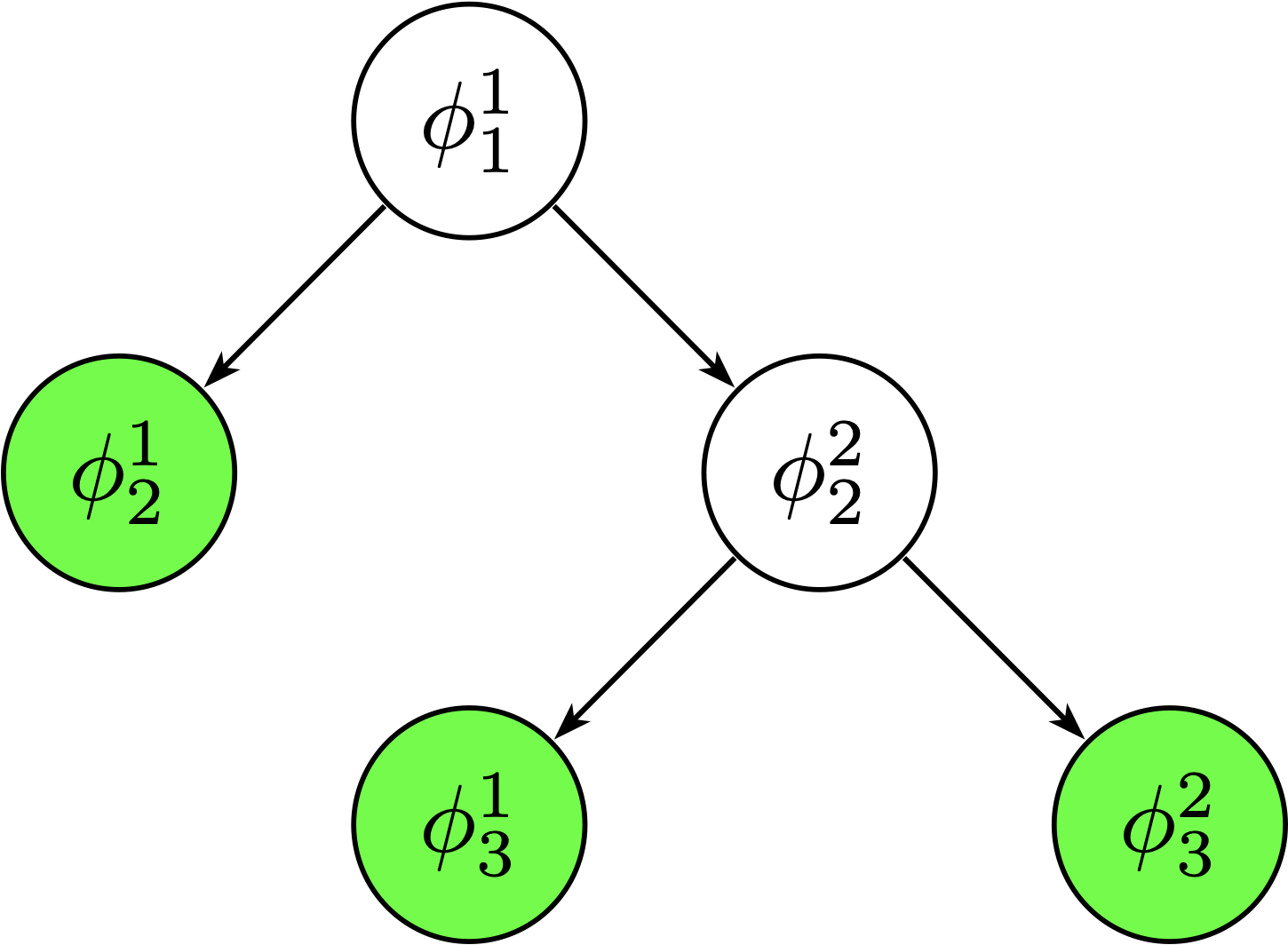}}
     \subfigure[NFA of \ltl\ $\Diamond a \wedge \Diamond b$]{
      \label{fig:nba}
      \includegraphics[width=0.49\linewidth]{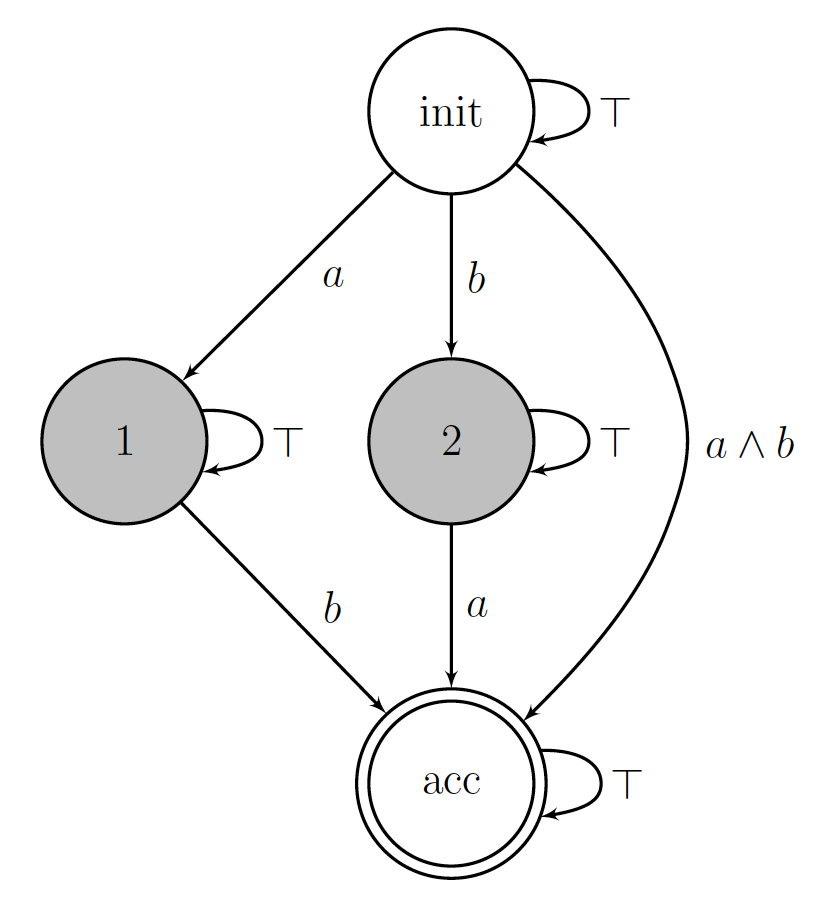}}
    \caption{The specification hierarchy tree of \hltl\ specifications in~\eqref{eq:fine_example_hltl} 
 {\color{\modifycolor} where green nodes represent leaf specifications}.  The NFA corresponds to the \ltl\ specification in Example~\ref{exmp:decomp}, where non-trivial decomposition states are colored in gray.}
    \label{fig:d}
 \end{figure}

    

{\color{\modifycolor}From a tree perspective, the level $k$ of a specification node \( \level{i}{k} \) corresponds to its depth, defined as the number of nodes along the longest path from the root to that node.}

\begin{defn}[Leaf and Non-leaf Specifications]
A specification is termed as a leaf specification if the associated node in the graph \(\ccalG_h\) does not have any children; otherwise, it is referred to as a non-leaf specification.
\end{defn}

{\color{\modifycolor}  Let $\Phi_{\text{leaf}}$ denote the set of leaf specifications. Based on Def.~\ref{def:hltl}, leaf specifications consist exclusively of atomic propositions, while non-leaf specifications consist solely of composite propositions. }
\begin{cexmp}{exmp:hltl}{Specification hierarchy tree}~\label{exmp:syntax}
The specification hierarchy tree is illustrated in Fig.~\ref{fig:hierarchy}.\hfill $\square$
\end{cexmp}



\subsection{Semantics of \hltl}
\hltl\ cannot be interpreted by simply substituting composite propositions with their corresponding specifications. For example, replacing composite propositions in Eq.~\eqref{eq:example_hltl} at level $L_1$ generates a standard specification:
$$
\level{1}{1} = \Diamond ( \underbrace{\Diamond (s_a \wedge \Diamond t_a)}_{\level{1}{2}} 
\wedge
\Diamond ( \underbrace{\Diamond (s_b \wedge \Diamond t_b)}_{\level{2}{2}} )
\wedge
\Diamond ( \underbrace{\Diamond (s_c \wedge \Diamond t_c)}_{\level{3}{2}} ) 
),
$$
which is challenging to interpret. 
\begin{cexmp}{exmp:hltl}{Interpretation}~\label{exmp:interpretation}
 The way of interpretation is to start from the bottom level and progress upwards, verifying if the satisfaction of lower specifications allows the upper ones to move forward. We provide two separate cases for specifications in~\eqref{eq:example_hltl}. Case (i): If $s_a$ now holds true, then $\level{1}{2}$ transitions to $\Diamond t_a$ based on the rules of progression~\cite{bacchus2000using} that generate sub-formulas to be satisfied, while $\level{1}{1}$ remains unaffected as none of its composite propositions are fulfilled. If $t_a$ subsequently holds true, $\level{1}{2}$ is satisfied. Consequently, $\level{1}{1}$ then simplifies to $\Diamond \level{2}{2} \wedge \Diamond \level{3}{2}$. Following this logic,  $\level{1}{1}$ will be fulfilled when both $\level{2}{2}$ and $\level{3}{2}$ are satisfied, regardless of the order. Case (ii): Assuming the robot is capable of carrying multiple items. If $s_b$ is satisfied, then $\level{2}{2}$ progresses to $\Diamond t_b$. Subsequently, as $s_a$ and $t_a$ sequentially become true, $\level{1}{1}$ transitions to $\Diamond \level{2}{2} \wedge \Diamond \level{3}{2}$. At this stage, $\level{2}{2}$ only requires $t_b$ to be satisfied. We allow the partial satisfaction of $\level{2}{2}$ provided that it achieves full satisfaction no earlier than $\level{1}{2}$. \hfill $\square$
\end{cexmp}

{\color{\modifycolor}Given that \hltl\ includes multiple specifications, we consider multiple systems, particularly transition systems. At any given state, a system may be addressing a particular specification. We associate the state  $s_r$  of a system  $\ccalT(r)$  with a leaf specification  $\psi_r \in \Phi_{\text{leaf}}$. This association establishes a state-specification pair  $(s_r, \psi_r)$, indicating that at state  $s_r$, the system  $\ccalT(r)$  is engaged in satisfying the leaf specification  $\psi_r$.}
\begin{defn}[State-Specification Sequence]\label{defn:traj}
{\color{\modifycolor}A state-specification sequence with a horizon \( h \), represented as \( \tau \), is a timed sequence \( \tau = \tau_0\tau_1\tau_2 \ldots\tau_h \). Here, \( \tau_i = ((\state{s}{1}{i}, \state{\psi}{1}{i}), (\state{s}{2}{i}, \state{\psi}{2}{i}), \ldots, (\state{s}{N}{i}, \state{\psi}{N}{i})) \) is the collective state-specification pairs of \( N \) systems at the \( i \)-th timestep, where \( \state{s}{r}{i} \in \ccalS_r \), and \( \state{\psi}{r}{i} \in \Phi_{\text{leaf}} \cup \{\epsilon\} \), with \( \epsilon \) indicating the system's non-involvement in any leaf specification at that time.}
\end{defn}

{\color{\modifycolor}

Given the state-specification sequence $\tau$ and a leaf specification $\phi$, we generate a word as $\sigma = \sigma_0,\sigma_1,\ldots,\sigma_n$, where $\sigma_i = \left\{\ccalL_r(s_r^i)\ |\  (\state{s}{r}{i}, \state{\psi}{r}{i}) \in \tau_i\; \text{and}\; \state{\psi}{r}{i} = \phi\right\}$, represents the collective observations generated by the systems engaging in the leaf specification $\phi$ at instant $i$. Furthermore, given a leaf (non-leaf) specification $\phi$ and an {\it input} word $\sigma = \sigma_0, \sigma_1, \ldots, \sigma_n$ with $\sigma_i \subseteq \texttt{Prop}(\phi)$, consisting of its atomic (composite) propositions, we construct an {\it output} word of $\phi$, denoted by $\sigma' =  \sigma'_0, \sigma'_1,\ldots, \sigma'_n$, where $\sigma'_i$ is either $\varnothing$ or $\{\phi\}$, tracking the satisfaction of $\phi$. Let $\sigma_{i, j} = \sigma_{i},\ldots,\sigma_{j}$ denote the segment of the input word between indices $i$ and $j$. 
\begin{defn}[Output word]
  Given a specification $\phi$ and an input word $\sigma = \sigma_0, \sigma_1, \ldots, \sigma_n$ with $\sigma_i \subseteq \texttt{Prop}(\phi)$, the output word $\sigma' = \sigma'_0, \sigma_1', \ldots, \sigma'_n$ of $\phi$ is defined as:
\begin{itemize}
    \item By default, $\sigma'_{-1} = \{\phi\} $.
    \item $\sigma'_j = \{\phi\}$ if $\sigma_{i+1, j}, i+1 \models \phi$, where $i$ (with $i < j$) is the most recent instant when $\phi$ was satisfied, i.e., $i = max \{k \,|\, \sigma'_k = \{\phi\}, k = -1, 0, \ldots, j-1 \}$.
\end{itemize}
\end{defn}

The satisfaction relation $\models$ follows the same definition as introduced in Section~\ref{sec:preliminaries}, treating composite propositions as atomic propositions for non-leaf specifications.
The progress of $\phi$ resets after it is satisfied at instant $i$, and it is considered satisfied again at instant $j$ if the input word segment $\sigma_{i+1, j}$ satisfies $\phi$. In the context of robots, this implies that the duration of task $\phi$ extends from $i+1$ to $j$, with the completion time at instant $j$. The task’s completion status is not persistent, meaning that once completed, it must start from scratch if repeated.  The output words of specifications at level $L_{k+1}$ serve as the input words and produce output words of specifications at level $L_k$. Thus, starting from the words of leaf specifications generated by the state-specification sequence $\tau$ and proceeding in a bottom-up manner, we ultimately obtain an output word of the root specification $\level{1}{1}$. The whole \hltl\ is satisfied by a state-specification sequence $\tau$ if the output word of $\level{1}{1}$ contains $\level{1}{1}$. Algorithmically, a post-order traversal can be used to traverse the specification hierarchy tree to compute the output word of  $\level{1}{1}$,
as outlined in Alg.~\ref{alg:outputword}.

\begin{defn}[Semantics]\label{def:semantics}
Given \hltl\ specifications $\Phi$ and a state-specification sequence $\tau$, the sequence $\tau$ satisfies $\Phi$ if the root specification $\level{1}{1}$ appears in the output word of $\level{1}{1}$ generated by Alg.~\ref{alg:outputword}, which is returned by the function $\texttt{GenerateOutputWord}(\tau, \level{1}{1}, \Phi)$.
\end{defn}
\begin{algorithm}[t]
{
\color{\modifycolor}\caption{\texttt{GenerateOutputWord}($\tau, \Phi, \phi$)}
\LinesNumbered
\label{alg:outputword}
\KwIn {state-specification sequence $\tau$, \ltl\  specification $\phi$, \hltl\ specifications $\Phi$}
\KwOut {output word $\sigma'$ of specification $\phi$}
\Comment*[r]{{\color{black}Get input word $\sigma$ [lines~\ref{semantics:leaf}-\ref{semantics:non-leaf}]}}
\Comment*[r]{{\color{black}Leaf specification}}
\If{$\phi \in \Phi_{\text{leaf}}$\label{semantics:leaf}}{
    $\sigma (\phi) = \sigma_0, \sigma_1, \ldots, \sigma_n$ where $\sigma_i = \left\{\ccalL_r(s_r^i)\ |\  (\state{s}{r}{i}, \state{\psi}{r}{i}) \in \tau_i\; \text{and}\; \state{\psi}{r}{i} = \phi\right\}$\;
}
\Comment*[r]{{\color{black}Non-leaf specification}}
\Else{ 
\For{$\phi' \in \texttt{Prop}(\phi)$}{
    $\sigma(\phi') = \texttt{GenerateOutputWord}(\tau, \Phi, \phi')$ \label{semantics:merge}\;
}

\Comment*[r]{{\color{black}Merge output words at the lower level as the input word of $\phi$}}
$\sigma(\phi) = \sigma_0, \sigma_1, \ldots, \sigma_n$ where 
    $\sigma_i = \left\{ \sigma_i(\phi') \;|\; \forall \phi' \in \texttt{Prop}(\phi)   \right\}$\label{semantics:non-leaf} \;
}
\Comment*[r]{{\color{black}Get output word $\sigma'$ [lines~\ref{semantics:i}-\ref{semantics:empty}] }}
$i = -1$ \label{semantics:i} \Comment*[r]{{\color{black}Latest time when $\phi$ is true}}
\For{$j \in \{0, 1, \ldots, n\}$}{
    \If{$\sigma_{i+1, j}, i+1 \models \phi$}{
        $\sigma'_j = \{\phi\}$\;
        $i = j$\;
    }\Else{
        $\sigma'_j = \varnothing$ \label{semantics:empty} \;
    }
}
\Return $\sigma'$\;
}
\end{algorithm}

\begin{figure}[t]  
    \centering
\resizebox{\linewidth}{!}{ 
\begin{tikzpicture}[node distance=1cm, >=stealth, auto]

    \node[font=\LARGE] (phi1) at (-1.5, 1.5) {\(\displaystyle t:\)};
    \node[font=\LARGE][font=\LARGE] (A) at (0, 1.5) {\(0\)};
    \node[font=\LARGE] (B) at (2, 1.5) {\(1\)};
    \node[font=\LARGE] (C) at (4, 1.5) {\(2\)};
    \node[font=\LARGE] (D) at (6, 1.5) {\(3\)};
    \node[font=\LARGE] (E) at (8, 1.5) {\(4\)};
    \node[font=\LARGE] (F) at (10, 1.5) {\(5\)};
    \node[font=\LARGE] (G) at (12, 1.5) {\(6\)};
    \node[font=\LARGE] (G1) at (14, 1.5) {\(7\)};
    \node[font=\LARGE] (G2) at (16, 1.5) {\(8\)};
    \node[font=\LARGE] (G3) at (18, 1.5) {\(9\)};

    \node[font=\LARGE] (phi1) at (-1.5, 0) {obsv:};
    \node[circle, draw, minimum size=1cm] (A) at (0, 0) {};
    \node[circle, draw, minimum size=1cm, fill=black!70, text=white, font=\large] (B) at (2, 0) {$\{s_a\}$};
    \node[circle, draw, minimum size=1cm, fill=black!70, text=white, font=\large] (C) at (4, 0) {$\{t_a\}$};
    \node[circle, draw, minimum size=1cm, fill=black!70, text=white, font=\large] (D) at (6, 0) {$\{s_b\}$};
    \node[circle, draw, minimum size=1cm] (E) at (8, 0) {};
    \node[circle, draw, minimum size=1cm, fill=black!70, text=white, font=\large] (F1) at (10, 0) {$\{t_b\}$};
    \node[circle, draw, minimum size=1cm] (G) at (12, 0) {};
    \node[circle, draw, minimum size=1cm, fill=black!70, text=white, font=\large] (G1) at (14, 0) {$\{s_c\}$};
    \node[circle, draw, minimum size=1cm, fill=black!70, text=white, font=\large] (G2) at (16, 0) {$\{t_c\}$};
    \node[circle, draw, minimum size=1cm] (G3) at (18, 0) {};

    \draw[->] (A) -- (B);
    \draw[->] (B) -- (C);
    \draw[->] (C) -- (D);
    \draw[->] (D) -- (E);
    \draw[->] (E) -- (F1);
    \draw[->] (F1) -- (G);
    \draw[->] (G) -- (G1);
    \draw[->] (G1) -- (G2);
    \draw[->] (G2) -- (G3);

    \node[font=\LARGE] (phi2) at (-1.5, -1.5) {\(\phi_2^1:\)};
    \node[circle, draw, minimum size=1cm] (E) at (0, -1.5) {};
    \node[circle, draw, minimum size=1cm] (F) at (2, -1.5) {};
    \node[circle, draw, minimum size=1cm,  fill=black!70, text=white, font=\large] (G) at (4, -1.5) {$\{\level{1}{2}\}$};
    \node[circle, draw, minimum size=1cm]  (H) at (6, -1.5) {};
    \node[circle, draw, minimum size=1cm] (I) at (8, -1.5) {};
    \node[circle, draw, minimum size=1cm]  (J1) at (10, -1.5) {};
    \node[circle, draw, minimum size=1cm] (K) at (12, -1.5) {};
    \node[circle, draw, minimum size=1cm] (K1) at (14, -1.5) {};
    \node[circle, draw, minimum size=1cm] (K2) at (16, -1.5) {};
    \node[circle, draw, minimum size=1cm] (K3) at (18, -1.5) {};

    \draw[->] (E) -- (F);
    \draw[->] (F) -- (G);
    \draw[->] (G) -- (H);
    \draw[->] (H) -- (I);
    \draw[->] (I) -- (J1);
    \draw[->] (J1) -- (K);
    \draw[->] (K) -- (K1);
    \draw[->] (K1) -- (K2);
    \draw[->] (K2) -- (K3);
    \draw[->, red, dashed, thick] (C) to[out=-45, in=45] (G);

    \node[font=\LARGE] (phi2) at (-1.5, -3) {\(\phi_2^2:\)};
    \node[circle, draw, minimum size=1cm] (L) at (0, -3) {};
    \node[circle, draw, minimum size=1cm] (M) at (2, -3) {};
    \node[circle, draw, minimum size=1cm] (N) at (4, -3) {};
    \node[circle, draw, minimum size=1cm] (O) at (6, -3) {};
    \node[circle, draw, minimum size=1cm] (P) at (8, -3) {};
    \node[circle, draw, minimum size=1cm, fill=black!70, text=white, font=\large] (Q) at (10, -3) {$\{\level{2}{2}\}$};
    \node[circle, draw, minimum size=1cm] (R) at (12, -3) {};
    \node[circle, draw, minimum size=1cm]  (S) at (14, -3) {};
    \node[circle, draw, minimum size=1cm]  (T) at (16, -3) {};
    \node[circle, draw, minimum size=1cm] (U) at (18, -3) {};

    \draw[->] (L) -- (M);
    \draw[->] (M) -- (N);
    \draw[->] (N) -- (O);
    \draw[->] (O) -- (P);
    \draw[->] (P) -- (Q);
    \draw[->] (Q) -- (R);
    \draw[->] (R) -- (S);
    \draw[->] (S) -- (T);
    \draw[->] (T) -- (U);
    \draw[->, red, dashed, thick] (F1) to[out=-45, in=45] (Q);

    \node[font=\LARGE] (phi2) at (-1.5, -4.5) {\(\level{3}{2}:\)};
    \node[circle, draw, minimum size=1cm] (L) at (0, -4.5) {};
    \node[circle, draw, minimum size=1cm] (M) at (2, -4.5) {};
    \node[circle, draw, minimum size=1cm]  (N1) at (4, -4.5) {};
    \node[circle, draw, minimum size=1cm] (O) at (6, -4.5) {};
    \node[circle, draw, minimum size=1cm] (P) at (8, -4.5) {};
    \node[circle, draw, minimum size=1cm]  (Q) at (10, -4.5) {};
    \node[circle, draw, minimum size=1cm] (R) at (12, -4.5) {};
    \node[circle, draw, minimum size=1cm] (R1) at (14, -4.5) {};
    \node[circle, draw, minimum size=1cm, fill=black!70, text=white, font=\large]  (R2) at (16, -4.5) {$\{\level{3}{2}\}$};
    \node[circle, draw, minimum size=1cm] (R3) at (18, -4.5) {};

    \draw[->] (L) -- (M);
    \draw[->] (M) -- (N1);
    \draw[->] (N1) -- (O);
    \draw[->] (O) -- (P);
    \draw[->] (P) -- (Q);
    \draw[->] (Q) -- (R);
    \draw[->] (R) -- (R1);
    \draw[->] (R1) -- (R2);
    \draw[->] (R2) -- (R3);
    \draw[->, red, dashed, thick] (G2) to[out=-45, in=45] (R2);

    \node[font=\LARGE] (phi2) at (-1.5, -6) {$\level{1}{1}:$};
    \node[circle, draw, minimum size=1cm] (L) at (0, -6) {};
    \node[circle, draw, minimum size=1cm] (M) at (2, -6) {};
    \node[circle, draw, minimum size=1cm] (N) at (4, -6) {};
    \node[circle, draw, minimum size=1cm] (O) at (6, -6) {};
    \node[circle, draw, minimum size=1cm] (P) at (8, -6) {};
    \node[circle, draw, minimum size=1cm] (Q) at (10, -6) {};
    \node[circle, draw, minimum size=1cm] (R) at (12, -6) {};
    \node[circle, draw, minimum size=1cm] (R1) at (14, -6) {};
    \node[circle, draw, minimum size=1cm, fill=black!70, text=white, font=\large]  (Z) at (16, -6) {$\{\level{1}{1}\}$};
    \node[circle, draw, minimum size=1cm] (R3) at (18, -6) {};

    \draw[->] (L) -- (M);
    \draw[->] (M) -- (N);
    \draw[->] (N) -- (O);
    \draw[->] (O) -- (P);
    \draw[->] (P) -- (Q);
    \draw[->] (Q) -- (R);
    \draw[->] (R) -- (R1);
    \draw[->] (R1) -- (Z);
    \draw[->] (Z) -- (R3);
    \draw[->, red, dashed, thick] (R2) to[out=-45, in=45] (Z);

\end{tikzpicture}
}
 \caption{\color{\modifycolor}{Each row illustrates either the observation or the output word of a specification. The filled nodes indicate scenarios where an atomic or composite proposition is satisfied. The observations ($\texttt{obsv}$) generate the input word for leaf specifications $\level{1}{2}\sim \level{3}{2}$, while the combination of output words for specifications $\level{1}{2}\sim \level{3}{2}$ serve as the input word for the non-leaf specification $\level{1}{1}$. The dashed arrows illustrate the correspondence between specific inputs that lead to the satisfaction of a given specification.}} 
    \label{fig:word}  
\end{figure}

\begin{cexmp}{exmp:hltl}{Semantics}~\label{exmp:word}
The output words for the specifications in~\eqref{eq:example_hltl} are illustrated in Fig.~\ref{fig:word}.\hfill $\square$
\end{cexmp}
}

\begin{rem}
The syntax and semantics for \hltl\ here are inspired by the domain of robots and may not be applicable to other domains. Our experience indicates that a top-down reasoning approach is effective in constructing \hltl\ specifications from task descriptions. This entails recognizing the inclusion relationships among tasks and organizing them into various levels of abstraction, ranging from general to specific. At each level, tasks that are closely related should be combined into single formulas, while those with weaker connections should be grouped into separate formulas. 
\end{rem}
{\color{\modifycolor}
\begin{rem}
    The hierarchical format can be applied to syntactically co-safe LTL (sc-LTL), a syntactic subset of LTL, to formulate hierarchical sc-LTL. Sc-LTL formulas can be satisfied by finite sequences followed by any infinite repetitions.
\end{rem}
}

\subsection{Expressiveness}
Temporal logics are interpreted over paths in finite Kripke structures. The Kripke structure is identical to the transition system as presented in Def.~\ref{def:ts} in this work. A Kripke structure, denoted by $\ccalK$, is deemed finite when it has a finite number of states. $\ccalK$ is considered a model for a temporal logic formula $\phi$ if every initial finite path within $\ccalK$ satisfies $\phi$.

\begin{defn}[Expressive power~\cite{bozzelli2018interval}]
Considering two logics $\ccalL_1$ and $\ccalL_2$, along with two formulas $\phi_1 \in \ccalL_1$ and $\phi_2 \in \ccalL_2$, the formula $\phi_1$ is {\it equivalent} to $\phi_2$ if, for any finite Kripke structure $\ccalK$, $\ccalK$ models $\phi_1$ if and only if it models $\phi_2$. The logic $\ccalL_2$ is {\it subsumed} by $\ccalL_1$, denoted as $\ccalL_1 \geq \ccalL_2$, when for every formula $\phi_2 \in \ccalL_2$, there is a corresponding formula $\phi_1 \in \ccalL_1$ that is equivalent to $\phi_2$. Additionally, $\ccalL_1$ is {\it (strictly) more expressive} than $\ccalL_2$ if $\ccalL_1 \geq \ccalL_2$ and $\ccalL_2 \not\geq \ccalL_1$.
\end{defn}

\begin{theorem}[Expressiveness]
   \hltl\ is more expressive than the standard \ltl.
\end{theorem}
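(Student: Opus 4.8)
The plan is to unpack the definition of ``(strictly) more expressive'' into two claims: (i) subsumption, that \hltl\ is at least as expressive as \ltl\ (i.e.\ every flat \ltl\ formula has an equivalent \hltl\ specification); and (ii) strictness, that some \hltl\ specification has no equivalent flat \ltl\ formula, so \ltl\ does not subsume \hltl. Before proving either, I fix a common semantic ground: since \hltl\ is evaluated over state-specification sequences whereas \ltl\ is evaluated over plain words/paths, I interpret both over the finite paths of a single Kripke structure by taking a single system permanently assigned to the relevant leaf specification, so that its observation projection is exactly the path read by a flat formula.

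Direction (i) is the easy half. Given a flat formula $\phi$, I take the one-level \hltl\ specification $\Phi = \{\level{1}{1} = \phi\}$ in which every proposition of $\phi$ is atomic, so that $\level{1}{1}$ is itself a leaf. On this single-node hierarchy tree, Alg.~\ref{alg:outputword} reduces \texttt{GenerateOutputWord} to a direct evaluation of the \ltl\ satisfaction relation of Section~\ref{sec:preliminaries} on prefixes of the observation word. The only delicate point is that the output-word condition fires whenever \emph{some} prefix satisfies $\phi$, whereas ordinary \ltl\ satisfaction asks the \emph{whole} word to satisfy $\phi$; I would reconcile the two under the ``all initial paths are models'' reading of the expressiveness definition (which already closes the set of paths under prefixes), after which the one-level specification and $\phi$ have exactly the same finite models, establishing subsumption.

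Direction (ii) is the crux and the main obstacle. The strategy is to exhibit a concrete \hltl\ specification whose language is provably not star-free, and then invoke the De~Giacomo--Vardi characterization that \ltl\ captures exactly the star-free (equivalently, first-order definable / aperiodic) languages; any non-aperiodic language is therefore beyond any flat \ltl\ formula. The witness exploits the \emph{reset} built into the output-word semantics: once a composite proposition is satisfied its progress restarts, so a leaf specification that consumes fixed-length blocks becomes a modulo counter. For instance, a leaf $\psi = \bigcirc\,\mathrm{true}$ fires in its output word at every other position, and a root $\level{1}{1} = \Diamond(\psi \wedge \neg\,\bigcirc\,\mathrm{true})$---read as ``$\psi$ is present at the final position''---is then satisfied exactly by traces of even length. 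The set of even-length traces has a non-aperiodic syntactic monoid, hence is not star-free and not \ltl-expressible, yet the displayed one-root, one-leaf \hltl\ specification defines it.

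The hard part is making the non-expressibility argument airtight, and it splits into two obligations. First, by unwinding Def.~\ref{def:semantics} and Alg.~\ref{alg:outputword} I must verify that the candidate specification really defines the intended modular-counting language; the bookkeeping of the reset index $i$ and of the $\bigcirc$ offset has to be tracked carefully, and it may be cleaner to use a block length $m > 2$ to avoid edge cases. Second, I must certify non-star-freeness rigorously---either algebraically, by computing the syntactic monoid of the even-length (or mod-$m$) language and exhibiting a nontrivial cyclic group, or combinatorially, via an Ehrenfeucht--Fra\"{\i}ss\'{e} / pumping argument showing that no flat \ltl\ formula of bounded temporal depth can separate traces of length $n$ and $n+1$ once $n$ is large. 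Either route, combined with the reduction fixed at the outset, yields that \ltl\ does not subsume \hltl\ and completes the proof.
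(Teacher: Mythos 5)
Your overall skeleton (subsumption plus a strictness witness) matches the paper's, and your subsumption half is essentially the paper's argument: the paper wraps a flat $\phi$ into the two-level specification $\level{1}{1}=\Diamond\level{1}{2}$, $\level{1}{2}=\phi$, whereas you use a one-level hierarchy; both constructions share the prefix-firing subtlety you flag, and your ``reconciliation'' of it is asserted rather than proved (note that over a prefix-closed set of paths, ``every path has a prefix satisfying $\phi$'' is still weaker than ``every path satisfies $\phi$'' --- consider $\phi=\square a$ and the path $\{a\}\{a\}\varnothing$). For the strictness half you take a genuinely different route from the paper: you try to exploit the reset in the output-word semantics to build a modulo counter and then invoke the star-free/aperiodic characterization of \ltl, whereas the paper's witness is $\level{1}{1}=\Diamond\level{1}{2}\wedge\Diamond\level{2}{2}$ with two \emph{mutually exclusive} leaves $\neg b\,\ccalU\,(a\wedge\neg b)$ and $\neg a\,\ccalU\,(b\wedge\neg a)$: the state-specification sequence lets different systems serve different leaves, while any single flat formula over $\{a,b\}$ forcing both collapses to $\texttt{False}$.

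The gap is in your strictness witness. The theorem's notion of equivalence is at the level of Kripke structures: $\ccalK$ models a formula iff \emph{every} initial finite path of $\ccalK$ satisfies it, and the set of initial finite paths of a Kripke structure is closed under taking prefixes. Your candidate specification defines the even-length traces, but no Kripke structure has the property that all of its initial finite paths have even length (the one-state prefix of any path is already an initial path of odd length). Hence your \hltl\ specification has an empty class of Kripke models and is therefore \emph{equivalent}, in the paper's sense, to the flat formula $\texttt{False}$ --- so it cannot witness $\ltl\not\geq\hltl$. The modular-counting idea is a real insight about the word-level languages induced by the reset semantics, but to make it prove this theorem you would have to either (a) find a non-aperiodic property that survives the prefix-closed, universally-quantified Kripke quotient (which is exactly where the difficulty reappears), or (b) switch to the paper's style of witness, which separates the logics not through counting but through the ability of the state-specification sequence to dispatch contradictory leaf specifications to different systems. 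As written, obligation two of your plan (``certify non-star-freeness'') is aimed at the wrong target: non-star-freeness of the word language does not imply inequivalence of Kripke-model classes.
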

\begin{proof}
    We first show that, for any standard \ltl, there exists an equivalent hierarchical counterpart. Therefore, the standard \ltl\ is subsumed by the \hltl. Given a standard \ltl\ \(\phi\), an equivalent two-level \hltl\ can be constructed in a straightforward manner, where the top level is \(\Diamond \phi\) and the bottom level is \(\phi\) itself, that is,
    \begin{align*}
        L_1:   \level{1}{1} =  \Diamond\level{1}{2}, \quad  L_2:   \level{1}{2} = \phi.    
    \end{align*}

Next, we introduce a counter-example to demonstrate the absence of the standard equivalent for certain \hltl\ specifications. Considering a Kripke structure with two atomic propositions $\{a, b\}$, the following \hltl\ specifications include two mutually exclusive leaf specifications:
\begin{align*}
     L_1: \quad &  \level{1}{1} =  \Diamond\level{1}{2} \wedge \, \Diamond \, \level{2}{2}\\
    L_2: \quad &  \level{1}{2} =   \neg b\,\mathcal{U}\, (a \wedge \neg b), \quad \level{2}{2} =   \neg a\,\mathcal{U}\, (b \wedge \neg a) .
 \end{align*}%
 While leaf specifications are mutually exclusive, the introduction of the state-specification sequence enables the association of each state with one specification, facilitating the separate treatment of leaf specifications. {\color{\modifycolor}For example, when these two specifications are associated with different Kripke structures,
 they can be realistically fulfilled.} However, when solely using atomic propositions \(a\) and \(b\), combining two leaf specifications into an \ltl\ specification is unfeasible, as it inevitably results in a contradiction, reducing the specification to \(\texttt{False}\).
\end{proof}



\section{Problem Formulation}
We first present an assumption on \hltl, and then introduce a variant of the product team model defined in work~\cite{schillinger2018simultaneous}, which aggregates the set of PA for a given specification. Following this, we develop the hierarchical team model that integrates a set of product team models.

{\color{\modifycolor}
\subsection{Assumptions on \hltl}
When considering the composite propositions in the context of  robot tasks, they correspond to sub-tasks. For example, one such task could be retrieving a beverage from the refrigerator, such as a bottle of apple juice or orange juice, where completing just one  sub-task suffices. Based on this observation, we assume that, for a task to be successfully completed, it is sufficient that each of its sub-tasks occurs at most once. This assumption applies only to non-leaf specifications.
 
\begin{asmp}[Bounded One-Time Satisfaction]\label{asmp:semantics}
    For any non-leaf specification \( \level{i}{k} \) at level $L_k$, there exists an accepting input word 
    $
    \sigma = \sigma_0, \sigma_1, \dots, \sigma_n
    $
    where \( \sigma_j \subseteq \texttt{Prop}(\level{i}{k}) \subseteq \Phi_{k+1} \) for $j \in [n]$, such that 
    $
    \sum_{j=0}^{n} \mathbb{I}(\level{l}{k+1} \in \sigma_j) \leq 1,
    $ for any \( l \in [|\Phi_{k+1}|]_+ \),
    where \( \mathbb{I} \) is the indicator function.
\end{asmp}
}

\subsection{Product Team Model}
Inspired by work~\cite{schillinger2018simultaneous}, we make the following assumption.
\begin{asmp}[Decomposition]\label{asmp:decomp}
     Every leaf specification $\phi \in \Phi_\text{leaf}$ can be decomposed into a set of tasks such that 
     \begin{itemize}
         \item {\bf Independence:} tasks can be executed independently;
         \item {\bf Completeness:} completion of all tasks, irrespective of the order, satisfies the leaf specification $\phi$.
     \end{itemize}
\end{asmp}

\begin{defn}[Decomposition Set~\cite{schillinger2018simultaneous}]
The decomposition set $\ccalD_\ccalA \subseteq \ccalQ_\ccalA$ of the NFA $\ccalA$ contains all states $q$ such that each state leads to a decomposition of tasks.
\end{defn}

\begin{exmp}[Decomposition Set]\label{exmp:decomp}
    Consider an \ltl\ specification \(\phi = \Diamond a \wedge \Diamond b\), with its corresponding NFA depicted in Fig.~\ref{fig:nba}. For any two-part satisfying word \(\sigma = \sigma_1 \sigma_2\), where \(\sigma_1\) leads to a run ending at state 2, the reverse sequence \(\sigma_2 \sigma_1\) also satisfies the specification by leading to a run through state 1. This indicates that the tasks before and after state 2 can be completed in any order. Therefore, state 2 belongs to the decomposition set. Similarly, state 1 also belongs to the decomposition set. By default, all initial and accepting states in an NFA are considered decomposition states. \hfill $\square$
\end{exmp}

\begin{defn}[Product Team Model \cite{schillinger2018simultaneous}]
   Given a leaf specification $\phi \in \Phi_{\text{leaf}}$, the product team model $\ccalP(\phi)$ consists of $N$ product models $\ccalP(r, \phi)$ and is given by the tuple $\ccalP(\phi) :=(\ccalQ_{\ccalP}, \ccalQ^0_\ccalP, \to_{\ccalP}, \ccalQ^F_{\ccalP})$: 
\begin{itemize}
\item $\ccalQ_{\ccalP} = \left\{(r, s, q) : r \in [N]_+, (s, q) \in  \ccalQ_\ccalP(r, \phi)\right\}$ is the set of state;
\item $\ccalQ^0_\ccalP = \left\{(r, s, q) \in  \ccalQ_\ccalP: r = 1, (s, q) \in \ccalQ_\ccalP^0 (1, \phi)\right\}$ is the set of initial states;
\item $\to_\ccalP \, \in \ccalQ_{\ccalP} \times \ccalQ_{\ccalP}$ is the transition relation. One type of transition, referred to as {\it in-spec transitions}, occurs inside a PA. That is, $ (r, s, q) \to_\ccalP (r, s', q')$ if $(s, q) \to_{\ccalP(r, \phi)} (s', q')$; the other type of transition relation, denoted by $\zeta_{\text{in}}$, occurs between two PAs and is defined below in Def.~\ref{defn:in-spec};
\item $\ccalQ^F_{\ccalP} = \{(r, s, q) \in \ccalQ_\ccalP: q \in \ccalQ^F_\ccalA(\phi)\}$ is the set of accepting/final states ; 
\end{itemize}
\end{defn}

\begin{defn}[In-Spec Switch Transition (modified from~\cite{schillinger2018simultaneous})]~\label{defn:in-spec}
   Given a specification $\phi$, the set of in-spec switch transitions in $\ccalP(\phi)$ is given by $\zeta_{\text{in}} \subset \ccalQ_\ccalP \times \ccalQ_\ccalP$. A transition $\left((r, s, q), (r', s', q') \right)$ belongs to $\zeta_{\text{in}}$ if and only if it: 
    \begin{itemize}
    \item points to the next robot, $r'= r + 1$; 
    \item preserves the NFA progress, $q= q'$; 
    \item represents a decomposition choice, $q \in \ccalD_\ccalA(\phi)$.
    \end{itemize}
\end{defn}

 \begin{figure}[!t]
    \centering
      \includegraphics[width=\linewidth]{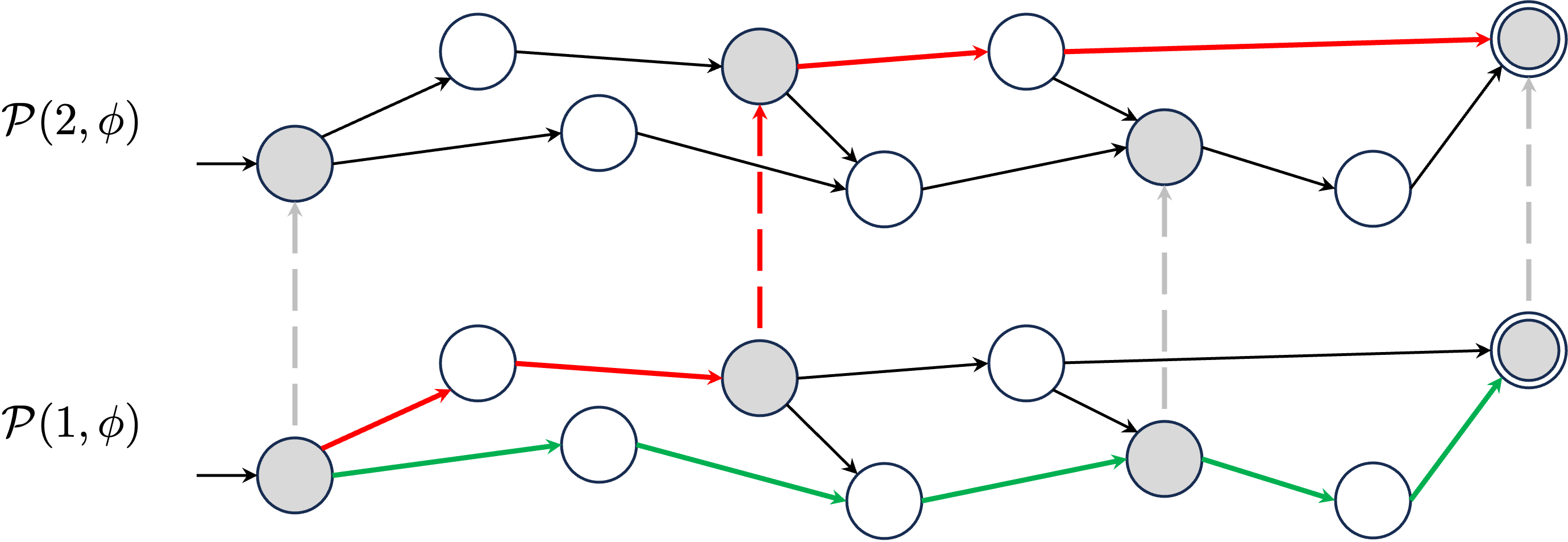}
    \caption{The product team model featuring two robots. PAs \(\ccalP(1, \phi)\) and \(\ccalP(2, \phi)\) may be different reflecting the robots' heterogeneity. Initial states are indicated with incoming arrows, and accepting states are depicted with double circles. The decomposition sets are highlighted in gray. In-spec switch transitions are represented by dashed arrows. A plan exclusive to robot 1 is traced by the green path, whereas the red path delineates a plan that involves the collaboration of both robots.}
    \label{fig:team}
 \end{figure}
 
The distinction between in-spec transitions and in-spec switch transitions lies in their impact on task progress: in-spec transitions advance the task through the same robot, while in-spec switch transitions involve changing to a different robot without altering the task's progress, as indicated by \(q= q'\). A visual representation of \(\ccalP(r, \phi)\) can be found in Fig.~\ref{fig:team}. The connections between product models are unidirectional, starting from the first robot and ending at the last robot. The specific arrangement of robots is irrelevant. Transitioning to the next product model occurs solely at in-spec switch transitions. Any path connecting an initial to an accepting state constitutes a feasible solution, comprising multiple path segments. Both the start and end state of each path segment have automaton states that belong to the decomposition set. Each segment results in a sequence of actions for an individual robot. These action sequences can be executed in parallel as they collectively decompose the task, as illustrated in Fig.~\ref{fig:team}.

\begin{rem}
Note that we construct team models exclusively for leaf specifications which directly involve atomic propositions. The switch transitions in~\cite{schillinger2018simultaneous} point to the initial state of the next robot, that is, \(s' = s_{r'}^0\), which is not the case for the in-spec switch transitions here. This difference arises because~\cite{schillinger2018simultaneous} addresses a single specification, where a robot typically starts a task from its initial state. In contrast, in scenarios involving multiple specifications, a robot can begin a new task from its last state at the end of the preceding task.
\end{rem}
 \begin{figure*}[!t]
    \centering
      \includegraphics[totalheight=0.3\textheight]{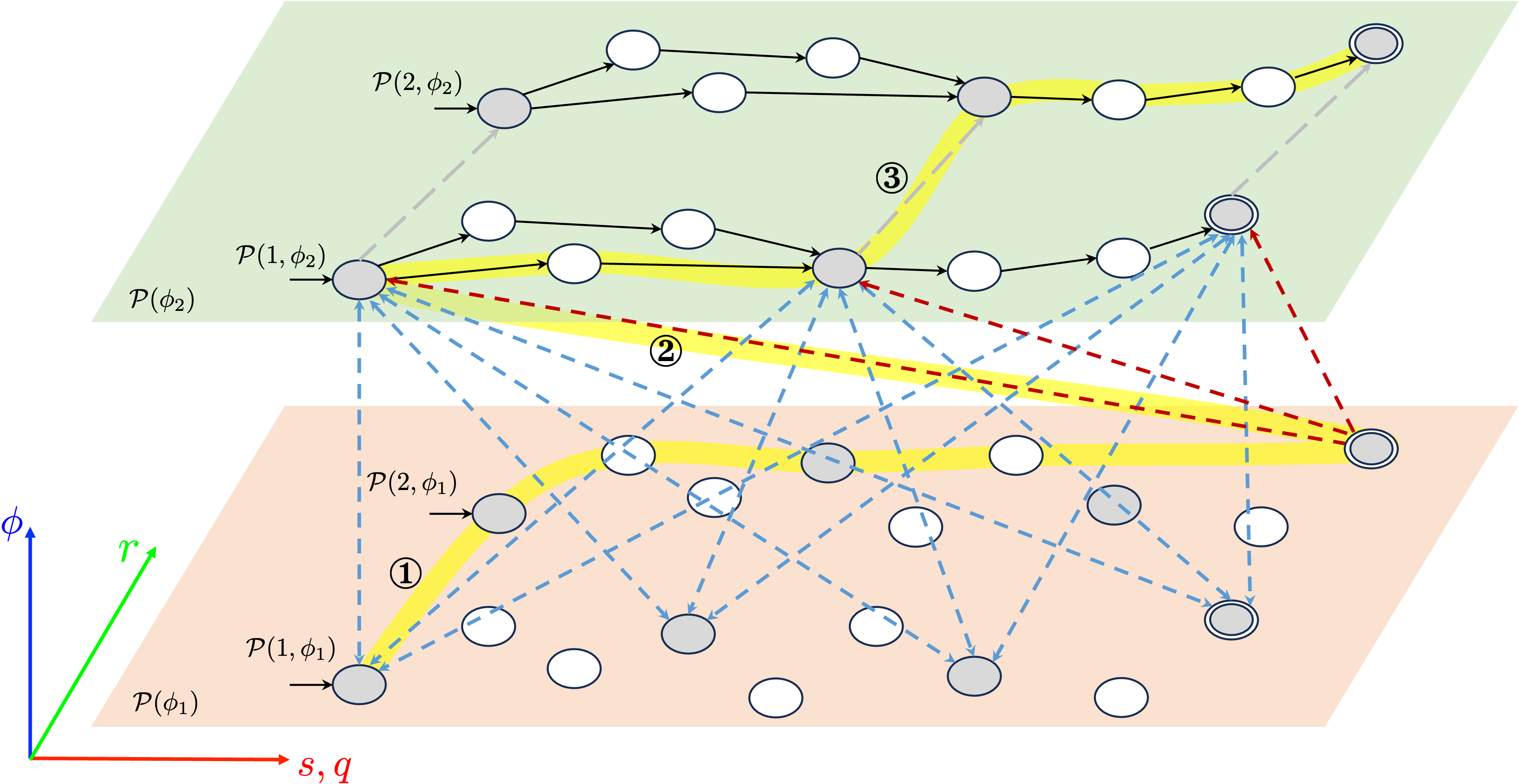}
    \caption{Hierarchical team models for two leaf specifications and two robots. The product team models for each leaf specification are depicted within areas shaded in light orange and green. To enhance clarity, some transitions within a single PA are omitted. Inter-spec type I and II switch transitions are illustrated using blue and red dashed arrows, respectively. It's important to note that type I switch transitions are bidirectional, while type II switch transitions are unidirectional. For the sake of readability, inter-spec type I switch transitions are specifically shown for \(\ccalP(1, \phi_1)\) and \(\ccalP(1, \phi_2)\). Inter-spec type II switch transitions are only shown from \(\ccalP(2, \phi_1)\) to \(\ccalP(1, \phi_2)\). The path marked in yellow highlights a specific plan. This plan originates from the initial state of \(\ccalP(1, \phi_1)\) and concludes at the accepting state of \(\ccalP(2, \phi_2)\). This path involves assigning robot 2 to fulfill the specification \(\phi_1\), which includes one in-spec switch transition~\ding{172}. Subsequently, robots 1 and 2 are tasked with satisfying specification \(\phi_2\), connected through a type II inter-spec switch transition~\ding{173}  and involving another in-spec switch transition~\ding{174}.}
    \label{fig:h-team}
    \vspace{-10pt}
 \end{figure*}

\subsection{Hierarchical Team Models}
To extend to \hltl, we construct a product team model per leaf specification. The goal here is to connect them to form a connectable space. We refer to the scenario of decomposition of tasks for one specifications as {\it in-spec independence}. In contrast, leaf specifications can exhibit dependencies. For example, one leaf specification \(\phi\) might need to be completed before another \(\phi'\). This scenario is referred to as {\it inter-spec dependence}. Another scenario is {\it inter-spec independence}, where two leaf specifications can be satisfied independently. We note that {\it in-spec dependence} does not exist, as precluded by Asm.~\ref{asmp:decomp}. To capture inter-spec dependency and independence, we introduce two additional types of switch transitions between different product team models.
\begin{defn}[Hierarchical Team Models]
   Given the \hltl\ specifications $\Phi$, the hierarchical team models $\ccalP$ consists of a set of product team models $\ccalP(\phi)$ with $\phi \in \Phi_{\text{leaf}}$ and is denoted by the tuple $\ccalP :=(\ccalQ_{\ccalP}, \ccalQ^0_\ccalP, \to_{\ccalP}, \ccalQ^F_{\ccalP})$: 
\begin{itemize}
\item $\ccalQ_{\ccalP} = \left\{(r, \phi, s, q) : \phi \in \Phi_\text{leaf}, (r, s, q) \in  \ccalQ_\ccalP(\phi)\right\}$ is the set of state;
\item $\ccalQ^0_\ccalP = \left\{(r, \phi, s, q) \in  \ccalQ_\ccalP: \phi \in \Phi_\text{leaf}, (r, s, q) \in  \ccalQ_\ccalP^0(\phi)\right\}$ is the set of initial states;
\item $\to_\ccalP \, \in \ccalQ_{\ccalP} \times \ccalQ_{\ccalP}$ is the transition relation. One type of transition occurs inside a product team model, that is, $ (r, \phi, s, q) \to_\ccalP (r', \phi, s', q')$ if $(r, s, q) \to_{\ccalP(\phi)} (r', s', q')$; another two types of transition relation, denoted by $\switch{inter}{1}$ and $\switch{inter}{2}$, occur between two product teams models and are defined below in Def.~\ref{defn:inter-spec1} and~\ref{defn:inter-spec2};
\item $\ccalQ^F_{\ccalP} = \{(r, \phi, s, q) \in \ccalQ_\ccalP: q \in \ccalQ^F_\ccalA(\phi)\}$ is the set of accepting/final states. 
\end{itemize}
\end{defn}

\begin{defn}[Inter-Spec Type I Switch Transition]~\label{defn:inter-spec1}
     A transition, denoted by $\switch{inter}{1} = \left( (r, \phi, s, q), ( r', \phi',  s', q') \right)$, is type I switch transition if and only if it: 
    \begin{itemize}
    \item connects the same robots, $r = r'$; 
    \item points to two different leaf specifications, $\phi \neq \phi'$;
    \item points to the same robot state, $s = s'$;
    \item connects decomposition states, $q \in \ccalD_\ccalA(\phi), q' \in \ccalD_\ccalA(\phi')$.
    \end{itemize}
\end{defn}

\begin{defn}[Inter-Spec Type II Switch Transition]~\label{defn:inter-spec2}
    A  transition, denoted by $\switch{inter}{2} = \left( (r, \phi, s, q) , ( r',  \phi',  s', q') \right)$, is type II switch transition if and only if it: 
    \begin{itemize}
    \item points to the first robot, $r' = 1$;
    \item points to two different leaf specifications, $\phi \neq \phi'$;
    \item connects an accepting state to a decomposition state, $q \in \ccalQ_\ccalA^F(\phi), q' \in \ccalD_\ccalA(\phi')$.
    \end{itemize}
\end{defn}

Inter-spec type I switch transitions enable switching to another PA with a different leaf specification, while retaining the same robot and pausing the current task. Conversely, inter-spec type II switch transitions create a connection from the accepting state of one PA to the decomposition state of another PA associated with the first robot, indicating the completion of one specification and the resumption of another. {\color{\modifycolor}It points toward the first robot, the starting robot of a product team model, because the in-spec switch transition is unidirectional, always pointing to the next robot.} These hierarchical team models can be graphically represented in a 3D space, with each product team model occupying a 2D plane; see Fig.~\ref{fig:h-team}. This structure has three axes: one representing the task progress for a specification (the \( (s, q) \)-axis), another indicating the task assignment among robots (the \( r \)-axis), and the last one tracking the  progress of different specifications (the \( \phi \)-axis).
\begin{rem}
The construction is not a product of all NFA and robot states. Instead, if each product team model is considered a distinct sub-space, the inter-spec switch transitions facilitate the connections between these sub-spaces. As a result, two consecutive product models are only loosely interconnected. The majority of states across different sub-spaces remain disconnected. Such a design significantly reduces the overall search space, enhancing the efficiency of the search.
\end{rem}

Given a state-specification sequence $\tau = \tau_0\tau_1\ldots\tau_h$ where $\tau_i = ((\state{s}{1}{i}, \state{\psi}{1}{i}), (\state{s}{2}{i}, \state{\psi}{2}{i}), \ldots, (\state{s}{N}{i}, \state{\psi}{N}{i}))$, the cost for robot $r$, such as energy consumption or completion time, is represented as $c_r =  \sum_{i=0}^{h-1}c_r(s^i_r, s^{i+1}_r)$, where $c_r(s^i_r, s^{i+1}_r)$ denotes the cost incurred transitioning between states for robot $r$.  The goal is to minimize the additive cost, expressed as
\begin{align}\label{eq:objective}
    J(\tau) = \sum_{r=1}^N  c_r 
\end{align}
Finally, the problem can be formulated as follow:
\begin{problem}
     Given the hierarchical team models $\ccalP$, derived from transition systems and the \hltl\ specifications $\Phi$ {\color{\modifycolor}that satisfy Asm.~\ref{asmp:semantics}}, find an optimal state-specification sequence $\tau^*$ that satisfies $\Phi$ and minimizes $J(\tau^*)$.
\end{problem}


\section{Planning under \hltl}\label{sec:search}
This section begins by introducing a search-based approach, which constructs hierarchical team models on the fly, ensuring both completeness and optimality; see Sec.~\ref{sec:thm}. Subsequently, we discuss several heuristics designed to expedite the search.
\subsection{On-the-fly Search}

The on-the-fly search algorithm is built upon the Dijkstra's algorithm, as detailed in Alg.~\ref{alg:search}. The search state is defined as \( v= (r, \phi, \bbs, \bbq) \), where \( \bbs \in \ccalS_1 \otimes \ldots \otimes \ccalS_N \) and \( \bbq = \ccalQ_\ccalA(\level{1}{K}) \otimes \ldots \otimes \ccalQ_\ccalA(\level{1}{1}) \). This state is comprised of three parts: (i) \( (r, \phi) \) indicates the specific PA currently being searched; (ii) \( \bbs \) represents the states of all robots; and  (iii) \( \bbq \) denotes the automaton states of all specifications (including leaf and non-leaf ones). Two states are considered {\it equivalent} if they share the same \( r \), \( \phi \), \( \bbs \), and \( \bbq \) values. The notation \( \bbs(r) \) and \( \bbq(\level{i}{k}) \) refer to the state of robot \( r \) and the automaton state of specification \( \level{i}{k} \) reached so far, respectively. We use \( \symtext{\ccalV}{explored} \) and \( \symtext{\ccalV}{seen} \) to denote the sets of states that have been explored and seen, and \( \symtext{\ccalH}{front} \) for the states awaiting exploration. States within \( \symtext{\ccalH}{front} \) are ordered by their costs. 
Lastly, \( \symtext{\Lambda}{explored} \) maps each explored state to the shortest path that led to it.

\begin{defn}[Initial State]\label{def:initial}
A state $v = (r, \phi, \bbs, \bbq)$ is an {\it initial} state if
\begin{itemize}
    \item $(r, \phi, \bbs(r), \bbq(\phi)) \in \ccalQ_\ccalP^0, \forall r \in [N]_+, \forall \phi \in \Phi_{\text{leaf}}$;
    \item $\bbq(\phi) \in \ccalQ_\ccalA^0(\phi), \forall \phi \in \Phi\setminus\Phi_\text{leaf}$.
\end{itemize}    
\end{defn}
That is, all robots begin at their initial states, and every specification starts from its initial automaton state [line~\ref{search:init}]. Whenever a state \( v \) is popped out from the set \( \symtext{\ccalH}{front} \) for exploration, it is added to the set of explored states if it hasn’t been already [lines~\ref{search:pop}-\ref{search:setexplore}]. The iteration process concludes when an accepting state of the topmost-level specification is reached. At this point, the function \texttt{ExtractPlan} is employed to parallelize the sequentially searched path, generating a state-specification sequence [line~\ref{search:accept}]. If the accepting state is not yet reached, the algorithm proceeds to determine the {\it successor} states of \( v \), along with their associated transition costs [line~\ref{search:get_succ}], through the function \texttt{GetSucc}, outlined in Alg.~\ref{alg:succ}. For each successor state that is either unseen or has a total cost lower than previously found, it is queued into \( \symtext{\ccalH}{front} \) for future exploration. Concurrently, the corresponding path leading to this state is recorded [lines~\ref{search:succ}-\ref{search:path}].

\begin{algorithm}[t]
\caption{Search-based STAP}
\LinesNumbered
\label{alg:search}
\KwIn {\hltl\ specifications $\Phi$, a set of robots $[N]_+$}
\KwOut {state-specification sequence}
$\ccalV_\text{seen} = \varnothing, \ccalV_\text{explored} = \varnothing, \ccalH_\text{front} = \varnothing, \Lambda_{\text{explored}} = \varnothing$ \label{search:varnothing}\; 
$\ccalV_{\text{init}} = \{v \,|\, (r, \phi, \bbs(r), \bbq(\phi)) \in \ccalQ_\ccalP^0, \forall r \in [N]_+, \forall \phi \in \Phi_{\text{leaf}} \wedge \bbq(\phi) \in \ccalQ_\ccalA^0(\phi), \forall \phi \in \Phi\setminus\Phi_{\text{leaf}} \}$ \label{search:init}\;
$\ccalH_\text{front}.\texttt{push}((\ccalV_{\text{init}}, 0)) $\;
$\symtext{\tau}{explored}(v) = v, \forall v \in \symtext{\ccalV}{init}$\;
Build specification hierarchy tree $\ccalG_h$\;
\While{$\ccalH_{\text{front}} \neq \varnothing$\label{search:while}}{
    $(v, c_v) = \ccalH_{\text{front}}.{\texttt{pop}}()$\label{search:pop}\;
    \If{$v \in \ccalV_\text{explored}$\label{search:explored}}{
        \textbf{continue}\;
    }
    $\ccalV_\text{explored}.\texttt{add}(v)$\label{search:setexplore}\;
    \If{$v.\bbq(\level{1}{1}) \in \ccalQ_\ccalA^F(\level{1}{1})$ \label{search:exit}}{
        \Return {$\texttt{ExtractPlan}(\Lambda_{\text{explored}}(v))$} \label{search:accept}\;
    }
    $\ccalV_{\text{succ}} = \texttt{GetSucc}(v, \ccalP, \ccalG_h)$ \label{search:get_succ}\;
    \For{$(u, c_{vu}) \in \ccalV_{\text{succ}}$ \label{search:succ}}{
        $c_u = c_v + c_{vu}$\;
        \If{$u \not \in \ccalV_\text{seen}$ or $c_u < \ccalV_\text{seen}(u) $ }{
             $\ccalV_\text{seen}(u) = c_u$\;
             $\ccalH_{\text{front}}.\texttt{push}((u, c_u))$\;
             $\Lambda_{\text{explored}}(u) = \Lambda_{\text{explored}}(v) + u$ \label{search:path}\;
        }
    }
}
\Return $\varnothing$\;
\end{algorithm}

\begin{algorithm}[t]
\caption{$\texttt{GetSucc}(v, \ccalP, \ccalG_h)$}
\LinesNumbered
\label{alg:succ}
\KwIn {state $v$, hierarchical team models $\ccalP$,  specification hierarchy tree $\ccalG_h$}
\KwOut {Set of successor states $\ccalV_{\text{succ}}$}
$\ccalV_{\text{succ}} = \varnothing$\;
$(r, \phi, s, q) = (v.r, v.\phi, v.\bbs(v.r), v.\bbq(v.\phi))$ \;

$\symtext{\Phi}{prt} = \texttt{Parents}(\ccalG_h, \phi) $ \label{succ:parents}\;
\If{$ q \not \in \ccalQ^F_\ccalA(\phi)$ and $ v.\bbq(\phi_{\text{prt}}) \not \in \ccalQ_\ccalA^F(\phi_{\text{prt}}), \forall \phi_{\text{prt}} \in \symtext{\Phi}{prt} $\label{succ:transition}}{
    \Comment*[r]{In-spec transitions in $\ccalP(r, \phi)$}
    \For{$((s, q), (s', q')) \in \to_{\ccalP(r, \phi)}$}{
        $u = v,  \ccalV_u = \varnothing$\;  
        $u.\bbs(r) = s', \; u.\bbq(\phi) = q'$ \label{succ:transition-rq}\;
        $\pi_c = \{\phi\}$ \textbf{if} {$q' \in \ccalQ^F_\ccalA(\phi)$} \textbf{else} {$\varnothing$} \label{succ:ap}\;
        $\texttt{UpdateNonLeafSpecs}(\pi_c, u, \symtext{\Phi}{prt}, \ccalV_u)$ \label{succ:nonleaf}\;
        $\ccalV_{\text{succ}}.\texttt{add}((\ccalV_u, c_r(s, s')))$\;
    }

    \Comment*[r]{In-spec switch transitions in $\ccalP(\phi)$}
    \If{$r \neq N$ and $((r, \phi, s, q), (r+1, \phi, v.\bbs(r+1), q)) \in \zeta_{\text{in}}$}{
        $u = v$\;  
        $u.r = r+1$ \label{succ:plus1}\;
        $\ccalV_{\text{succ}}.\texttt{add}((u, 0))$\;
    }
}
\Comment*[r]{Type I switch transitions in $\ccalP$}
\For{$\phi' \in \Phi_{\text{leaf}}\setminus\{\phi\}$ \label{succ:inter-spec1}}{
    \If{$((r, \phi, s, q), (r, \phi', s, v.\bbq(\phi')) \in \switch{inter}{1}$}{
        $u = v$\;
        $u.\phi = \phi'$ \label{succ:type1}\;
        $\ccalV_{\text{succ}}.\texttt{add}((u, 0))$\;
    }
}
\Comment*[r]{Type II switch transitions in $\ccalP$}
\For{$\phi' \in \Phi_\text{leaf}\setminus\{\phi\}$ \label{succ:inter-spec2}}{
    \If{$((r, \phi, s, q), (1, \phi', v.\bbs(1), v.\bbq(\phi')) \in \switch{inter}{2}$}{
        $u = v$\;
        $u.r = 1, \;u.\phi = \phi'$ \label{succ:type2}\;
        $\ccalV_{\text{succ}}.\texttt{add}((u, 0))$\;
    }
}
\Return $\symtext{\ccalV}{succ}$\;
\end{algorithm}

The function \texttt{GetSucc} determines the successor states of a given state \( v = (r, \phi, \bbs, \bbq) \) by considering in-spec transitions in \(\ccalP(r, \phi)\), in-spec switch transitions in \(\ccalP(v.\phi)\), and inter-spec type I and II switch transitions in \(\ccalP\). To identify in-spec transitions in \(\ccalP(r, \phi)\), the function initially checks if the specification \(\phi\) or any of its non-leaf parent specifications are satisfied. This is achieved using \texttt{Parents}(\(\ccalG_h\), \(\phi\)), which returns the path from the leaf specification \(\phi\) to the root specification in the specification hierarchy tree \(\ccalG_h\), excluding \(\phi\) itself [line~\ref{succ:parents}]. {\color{\modifycolor}If \(\phi\) or any of its parent specifications are satisfied, the search in \(\ccalP(r, \phi)\) and \(\ccalP(\phi)\) ceases, as further exploration is unnecessary. This is based on condition~\eqref{cond:union} in Definition~\ref{def:hltl} and~Assumption~\ref{asmp:semantics}, which state that each leaf specification appears only in one non-leaf specification and satisfying each specification at most once is adequate.} 
\begin{algorithm}[t]
\caption{$\texttt{UpdateNonLeafSpecs}(\pi_c, u, \symtext{\Phi}{prt}, \ccalV_u$)}
\LinesNumbered
\label{alg:nonleaf}
\KwIn {True composite propositions $\pi_c$, state $u$, parent specifications $\symtext{\Phi}{prt}$}
\KwOut {Updated set of states $\ccalV_u$}
$\phi_{\text{prt}} = \symtext{\Phi}{prt}.\texttt{pop}()$ \label{nonleaf:pop}\;
$\ccalQ_\ccalA^{\text{succ}} = \left\{  q' \,|\, (u.\bbq(\phi_{\text{prt}}), \pi_c, q') \in \to_\ccalA(\phi_{\text{prt}} ), \forall q' \in \ccalQ_\ccalA(\phi_{\text{prt}})  \right\}$\label{nonleaf:reach}\;
\For{$q' \in \ccalQ^{\text{succ}}_\ccalA$}{
    $u.\bbq(\phi_{\text{prt}}) = q'$ \label{nonleaf:update}\;
    \If{$\phi_{\text{prt}} = \level{1}{1}$}{
        $\ccalV_u.\texttt{add}(u)$ \label{nonleaf:topmost}\;
    }
    \Else{
        $\pi_c = \{\symtext{\phi}{prt}\}$ \textbf{if} {$q' \in \ccalQ^F_\ccalA(\phi_{\text{prt}})$} \textbf{else} {$\varnothing$}\;
        $\texttt{UpdateNonLeafSpecs}(\pi_c, u, \symtext{\Phi}{prt}, \ccalV_u)$\;
    }
}

\end{algorithm}

If none of the parent specifications are satisfied, the state of robot \( r \) in \( v.\bbs \) and the automaton state of leaf specification \(\phi\) in \( v.\bbq \) are updated based on in-spec transitions in \(\to_{\ccalP(r, \phi)}\) [line~\ref{succ:transition-rq}]. Subsequently, the automaton states of all parent specifications of \(\phi\) are also updated, as detailed in lines~\ref{succ:ap}-\ref{succ:nonleaf} and further elaborated in Alg.~\ref{alg:nonleaf}. For the three types of switch transitions, only the robot \( r \) (in the case of in-spec switch transitions [line~\ref{succ:plus1})], the leaf specification \(\phi\) (for inter-spec type I switch transitions [line~\ref{succ:type1}]), or both robot \( r \) and specification \(\phi\) (for inter-spec type II switch transitions [line \ref{succ:type2}]) are updated. This implies a shift in the search space to a different PA, without modifying \( v.\bbs \) and \( v.\bbq \). Since these transitions do not change robot states and automaton states, the function \texttt{UpdateNonLeafSpecs} is not invoked.

Given a state \( u \) and the set of parent specifications \( \symtext{\Phi}{prt} \) of the leaf specification \( u.\phi \) currently being explored, the function \texttt{UpdateNonLeafSpecs} utilizes a depth-first strategy to refresh the automaton states of non-leaf specifications within \( \symtext{\Phi}{prt} \). When considering a specific parent specification \( \symtext{\phi}{prt} \in \Phi_{\text{prt}} \), the function initially identifies all states that can be reached based on the acceptance of the preceding specification at the immediate lower level [lines~\ref{nonleaf:pop}-\ref{nonleaf:reach}]. The automaton state of \( \symtext{\phi}{prt} \) in \( u.\bbq \) is then updated using these reachable states [line~\ref{nonleaf:update}]. This process continues until the automaton state of the root specification is updated [line~\ref{nonleaf:topmost}].

\begin{algorithm}[t]
\caption{$\texttt{ExtractPlan}(\Lambda)$}
\LinesNumbered
\label{alg:exe}
\KwIn {Searched path $\Lambda$}
\KwOut{state-specification sequence $\tau$}
Remove switch states from $\Lambda$ and divide $\Lambda$ into $l$ path segments with each sharing the same specification\label{exe:divide}\;
\For{$k \in [l]_+$}{
    \For{$v \in \Lambda_k$}{
        $\tau_i = ((s_1^i, \psi_1^i), \ldots, (s_r^i, \psi_r^i), \ldots, (s_N^i, \psi_N^i))$ where $(s_r^i, \psi_r^i) = (v.\bbs(r), v.\phi)$ and $(s_{r'}^i, \psi_{r'}^i) = (v.\bbs(r'), \epsilon), \forall r' \not = v.r$ \label{exe:state-spec}\;
    }
}
\Return $\tau$\;
\end{algorithm}

  \begin{figure}[!t]
    \centering
      \includegraphics[width=\linewidth]{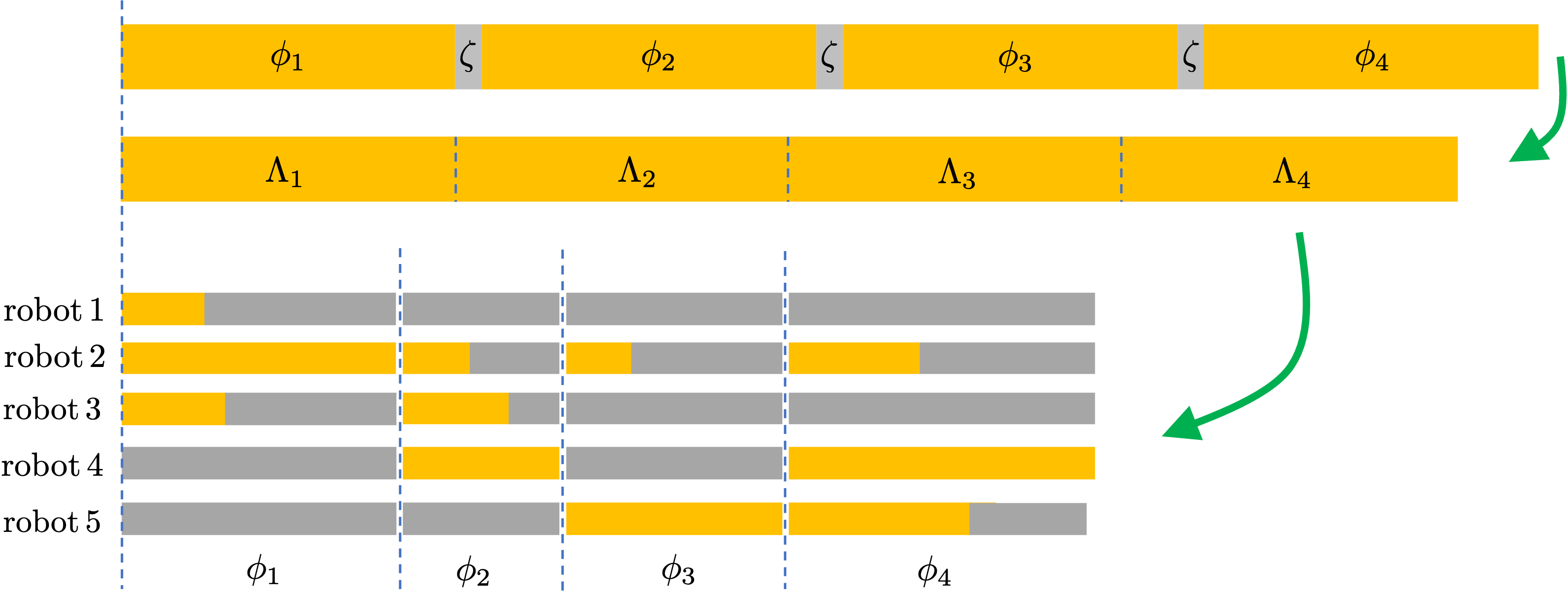}
    \caption{The process of plan extraction: The top bar represents the path, segmented by inter-spec switch transitions, with each segment corresponding to distinct specifications. The middle bar illustrates the path after removing switch states. At the bottom, the derived plan for a team of five robots is displayed. Here, orange segments indicate active periods for the robots, while gray segments represent their inactive phases.}
    \label{fig:plan_extraction}
    \vspace{-10pt}
 \end{figure}
 
The \texttt{ExtractPlan} function in Alg.~\ref{alg:exe} processes the path \(\Lambda = v_0 v_1\ldots v_n\), which leads to the state satisfying the root specification, and outputs a state-specification sequence \(\tau\). This process is visually represented in Fig.~\ref{fig:plan_extraction}. {\color{black}A state \( v \) within \(\Lambda\) is identified as a {\it switch state} if it is the end state of an inter-spec switch transition \(\symtext{\zeta}{inter}^1\cup \symtext{\zeta}{inter}^2\), which has the same robot state and automaton state as the corresponding start state.} The path \(\Lambda\) is segmented by these switch states, with each segment corresponding to states associated with the same specification, indicative of a search within a specific product team model. To create a concise path \(\Lambda'\), all switch states are removed, as they merely facilitate transitions between product team models and don't influence the plan's progress. This refined path is denoted as \(\Lambda' = \Lambda_1 \Lambda_2 \ldots \Lambda_l\), where \(\Lambda_k\) represents the \(k\)-th path segment, as shown in line~\ref{exe:divide}. The next step involves converting these sequential path segments into parallel executions. For each segment \(\Lambda_k\), we identify all {\it active} robots \(\ccalR_k\) participating in the specification \(\phi_k\), represented as \(\ccalR_k = \left\{v.r \,|\, v \in \Lambda_k \right\}\). Due to task decomposition~\cite{schillinger2018simultaneous}, each robot in \(\ccalR_k\) can execute its part of the plan independently. Robots not in $\ccalR_k$ are considered inactive for that segment. The plan horizon post-parallelization is determined by the longest path among all active robots. Note that an active robot might not maintain the active mode during the entire horizon if it completes its actions in a shorter time frame. For each robot state, we pair it with the relevant specification \(\phi_k\) if the robot is active, or with a null specification \(\epsilon\) if it's inactive [line~\ref{exe:state-spec}].

\begin{rem}
 The search state includes the joint robot states to keep track of the states of idle robots, allowing the search to resume from their previous states when necessary. In Fig.~\ref{fig:h-team}, after completing the search for robot 2 with respect to specification $\phi_1$, the search transitions to robot 1 while robot 2 remains idle. It then returns to robot 2 for specification $\phi_2$ via an in-spec switch transition~\ding{174}. At this stage, knowing the state of robot 2 is essential.
\end{rem}

\begin{rem}
Our method aligns with the decoupled approaches in the domain MAPF~\cite{van2005prioritized,wagner2011m}, where initially, individual agents independently plan their paths without considering potential collisions. Subsequently, the system assesses for collisions and modifies the paths to prevent conflicts. Resolving conflicts in temporal logic planning proves more difficult because, unlike MAPF, the actions of robots may be temporally correlated. When addressing collisions, it is crucial not only to consider robots that are physically close but also those that are temporally correlated. {\color{\modifycolor}Moreover, the modified path must be carefully adjusted to ensure that the temporal logic task remains satisfied.} Therefore, we leave the exploration of a  collision resolution strategy to future studies.
\end{rem}

\subsection{Search Acceleration with Heuristics}
We introduce three heuristics aimed at expediting the search.
\subsubsection{Temporal Order between Specifications}\label{sec:heuristic_order}
Consider the specification $\level{1}{1} = \Diamond (\level{1}{2} \wedge \Diamond \level{2}{2})$, where specification \(\level{2}{2}\) is required to be satisfied following~\(\level{1}{2}\), hence, a switch transition from the team model of \(\level{2}{2}\) to that of \(\level{1}{2}\) is unnecessary.  Given two leaf specifications $\phi_1, \phi_2 \in \Phi_\text{leaf}$, let $\phi_1 \bowtie \phi_2$, where $\bowtie\, \in \{\prec, \succ, \|\}$, denote the case where $\phi_1$ should be completed prior to ($\prec$), after ($\succ$) or independently ($\|$) of $\phi_2$. We base on~\cite{luo2022temporal} to determine these temporal relationships between leaf specifications. For each state $v$, we can determine the set of leaf specifications that have been fulfilled so far from component $v.\bbq$. When expanding a state $v$ in Alg.~\ref{alg:succ}, its inter-spec switch transitions only point to leaf specifications that (i) are not satisfied so far, and (ii) do not have any other unsatisfied leaf specifications except for $v.\phi$ that need to be completed beforehand.

\subsubsection{Essential Switch Transitions} \label{sec:heuristic_switch}
We introduce {\it essential switch transitions} to  refine the connections between PAs. 

\begin{defn}[Essential State]
An essential state for robot \( r \), denoted as \( (r, s) \), is defined either (a) $ s = s_r^0$ or (b) there exists a leaf specification \( \phi \) fulfilling the conditions:
\begin{itemize}
         \item $((\phi, r, s, q), (\phi, r, s, q')) \in \to_{\ccalP(r, \phi)}$;
        \item $q \neq q'$;
        \item $q'  \in \ccalD_\ccalA (\phi)$.
\end{itemize}
\end{defn}

This definition implies that an essential state is either (a) the initial state of robot \( r \) or (b) a state where robot \( r \) causes an in-spec transition to a decomposition state of a leaf specification {\it for the first time}. In the latter case, the intuition is that essential states are pivotal for achieving substantial progress in the task.
\begin{defn}[Essential Switch Transition]
{\color{black}A switch transition $\zeta = \left( (\phi, r, s, q) , (\phi', r', s', q') \right) \in \zeta_{\text{in}} \cup \switch{inter}{1} \cup \switch{inter}{2} $ is an essential switch transition if both $(r, s)$ and $(r', s')$ are essential states.}
\end{defn}

The leaf specifications \( \phi \) and \( \phi' \) might not be the exact ones that render the states \( (r, s) \) and \( (r', s') \) essential. In other words, \( \phi \) and \( \phi' \) could represent any leaf specifications, which is explained in Ex.~\ref{ex:essential} below. Consequently, any switch transitions that are not deemed essential will be excluded.


\begin{exmp}\label{ex:essential}
Consider a sequence of three transitions: 
\((\phi, r, s, q) \overset{\text{\ding{172}}}{\to} (\phi, r, s, q') \overset{\text{\ding{173}}}{\to} (\phi', r, s, q'') \overset{\text{\ding{174}}}{\to} (\phi', r+1, s', q'')\). 
Here, transition \ding{172} is an in-spec transition, transition \ding{173} is an inter-spec type I switch transition, and transition \ding{174} is an in-spec switch transition. Let's assume transition \ding{172} renders \((r, s)\) an essential state. If we strictly limit \((r, s)\) to be associated only with specification \(\phi\), then transition \ding{174} would not be classified as essential. Consequently, it would be excluded, potentially impacting the feasibility of the overall plan. \hfill $\square$
\end{exmp}
\subsubsection{Guidance by Automaton States}\label{sec:heuristic_automaton}
Unlike the first two heuristics, which focus on reducing the graph size, the last heuristic aims to speed up the search by prioritizing the expansion of more promising states — those that are closer to achieving satisfaction. For a state \( u = (r, \phi, \bbs, \bbq) \), we define its task-level cost as summation over all leaf specifications:
\begin{align}
\begin{aligned}
    c_q(\bbq) & = \sum_{\phi' \in \Phi_\text{leaf}} c_q(\bbq(\phi'))  \\
     =  \sum_{\phi' \in \Phi_{\text{leaf}}} & \max \left\{ \texttt{len} (q^0_\ccalA \to \bbq(\phi'))\, | \, \forall\, q^0_\ccalA \in \ccalQ_\ccalA^0 (\phi')  \right\}.
\end{aligned}
\end{align}
The cost \( c_q(\bbq(\phi')) \) is determined by the length of the simplest path from any initial state \( q^0_\ccalA \) to the automaton state \( \bbq(\phi') \) within the NFA \( \ccalA(\phi)\). A simple path is defined as one that does not include any repeated states, and the function \(\texttt{len}\) measures the path's length in terms of the number of transitions. Essentially, \( c_q(\bbq(\phi)) \) captures the maximal progress made in the task \( \phi \) since its beginning. {\color{black}A similar heuristic has been employed in~\cite{kantaros2020stylus,luo2021abstraction}.} The cost \( c_{vu} \) is now calculated as
\begin{align}\label{eq:new_cost}
    c_{vu} =  c_r\left(v.\bbs(r), u.\bbs(r)\right) + w \left[c_q(u.\bbq) - c_q(v.\bbq)\right],
\end{align}
where $c_r$ is the cost between robot states, $c_q(u.\bbq) - c_q(v.\bbq)$ is the cost between automaton states and $w$ is the weight.

\begin{rem}
The first two heuristics essentially alter the structure of the hierarchical team models. As a result, the completeness and optimality of the search approach, when these heuristics are applied, are no longer assured. However, employing exclusively the third heuristic still maintains the completeness of the search, as it ensures the full exploration of the search space eventually. 
Extensive simulations presented in Sec.~\ref{sec:sim} provide empirical evidence that our approach, incorporating all three heuristics, can generate solutions for complex tasks. 
The rationale behind this empirical completeness is twofold. Firstly,~\cite{luo2022temporal} prove that the inference on temporal order is  correct for a broad range of standard LTL specifications. By eliminating switch transitions that conflict with precedence orders, transitions that are in accordance with temporal orders remain unaffected. Secondly, the concept of essential switch transitions ensures that the search process only transitions to a different product model when there has been task progress in the current product model, which is reasonable as it is unnecessary to switch to a different task if the ongoing task has not been completed.
\end{rem}



        
\section{Theoretic Analysis}\label{sec:thm}
We conduct a theoretical analysis of the complexity, soundness, completeness, and optimality of our approach.

\begin{prop}
For every search state \( v = (r, \phi, \bbs, \bbq) \), among all the automaton states in \(\bbq\) related to leaf specifications, at most one automaton state does not belong to the decomposition set.
\end{prop}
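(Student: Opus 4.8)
The plan is to prove a stronger invariant from which the statement follows immediately, namely: \emph{for every reachable search state $v = (r, \phi, \bbs, \bbq)$ and every leaf specification $\phi' \in \Phi_{\text{leaf}}$ with $\phi' \neq \phi = v.\phi$, one has $\bbq(\phi') \in \ccalD_\ccalA(\phi')$.} In words, the only leaf specification whose automaton component is permitted to lie outside its decomposition set is the one currently being searched, $v.\phi$; hence at most one leaf automaton state can fail to be a decomposition state. I would establish this invariant by induction on the length of the search path $\Lambda_{\text{explored}}(v)$ leading to $v$, performing a case analysis over the four kinds of transitions that generate successors in Alg.~\ref{alg:succ}.

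For the base case, Def.~\ref{def:initial} forces $\bbq(\phi') \in \ccalQ_\ccalA^0(\phi')$ for every leaf specification; since initial states are decomposition states by default, the invariant holds vacuously (indeed with \emph{every} leaf state in the decomposition set). For the inductive step, let $u$ be a successor of a state $v$ satisfying the invariant. If $u$ arises from an in-spec transition in $\ccalP(r, \phi)$ or an in-spec switch transition in $\ccalP(\phi)$, then $u.\phi = v.\phi = \phi$ and the only leaf automaton component that changes is $\bbq(\phi)$ (the non-leaf updates performed by \texttt{UpdateNonLeafSpecs} are irrelevant, since the claim concerns leaf specifications only). Thus for every leaf $\phi' \neq \phi = u.\phi$ we have $u.\bbq(\phi') = v.\bbq(\phi')$, which lies in $\ccalD_\ccalA(\phi')$ by the induction hypothesis, so the invariant is preserved.

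The substantive cases are the two inter-spec switch transitions, where $u.\phi = \phi' \neq \phi = v.\phi$ and no automaton component is modified. Here I would exploit precisely the decomposition-state conditions built into Defs.~\ref{defn:inter-spec1} and~\ref{defn:inter-spec2}. For a type I transition the defining condition requires $v.\bbq(\phi) = q \in \ccalD_\ccalA(\phi)$; for a type II transition it requires $q \in \ccalQ_\ccalA^F(\phi)$, and accepting states are decomposition states by default, so again $v.\bbq(\phi) \in \ccalD_\ccalA(\phi)$. Consequently the previously-active specification $\phi$ re-enters the decomposition set, and for every other leaf $\phi'' \notin \{\phi, \phi'\}$ the component $u.\bbq(\phi'') = v.\bbq(\phi'')$ is already in $\ccalD_\ccalA(\phi'')$ by the induction hypothesis. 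Hence every leaf specification other than the new current one $\phi'$ is at a decomposition state, and the invariant carries over to $u$.

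I expect the only real obstacle to be the initial bookkeeping: recognizing that the bare statement ("at most one non-decomposition leaf state") must be sharpened to the \emph{identified} form ("the exceptional state, if any, is $v.\phi$") before the induction closes, and keeping straight which of $\phi$, $\phi'$, and the remaining leaves plays which role across the two switch cases. Once the invariant is stated in this sharpened form, each transition type is dispatched by reading off the corresponding guard in Alg.~\ref{alg:succ} together with the membership conditions in the switch-transition definitions, and no genuinely hard step remains.
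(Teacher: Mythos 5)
Your proposal is correct and follows essentially the same inductive argument as the paper: all leaf automaton states start as (default) decomposition states, only the currently-searched specification's component is ever updated, and every exit from a product automaton occurs through a switch transition whose guard forces that component back into the decomposition set (accepting states being decomposition states by default for the type II case). Your explicit sharpening of the invariant --- that the exceptional state, if any, is precisely $v.\phi$ --- is left implicit in the paper's proof but is the same underlying reasoning.
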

\begin{proof}
Based on Def.~\ref{def:initial}, all automaton states within an initial search state are initial automaton states. By default, these are recognized as decomposition states. When a PA \(\ccalP(r, \phi)\) is chosen for exploration, only the corresponding automaton state \(\bbq(\phi)\) is updated, leaving the automaton states from other leaf specifications unaffected. The search can exit \(\ccalP(r, \phi)\) only through three types of switch transitions, each, by definition, associated with decomposition states. Upon switching to a different PA \(\ccalP(r', \phi')\), the automaton state of \(\phi\), which is a decomposition state, remains unchanged. Therefore, by inductive reasoning, it follows that at any given point, at most one automaton state from the leaf specifications does not belong to the decomposition set.
\end{proof}

Note that while the set of automaton states \(\bbq\) encompasses states from non-leaf specifications, these are not considered into our analysis of planning complexity, as they entirely depend on the leaf specifications.

\begin{theorem}[Complexity] 
The total number of transitions in the hierarchical team models \(\ccalP\) is given as follows:
\begin{align}\label{eq:complexity}
 |\to_\ccalP| = \sum_{\phi \in \Phi_\text{leaf}} \sum_{r\in [N]_+} \left|\to_{\ccalP(r, \phi)}\right| + \underbrace{|\symtext{\zeta}{in}| + |\symtext{\zeta}{inter}^1| + |\symtext{\zeta}{inter}^2|}_{|\zeta|}, 
\end{align}
which is the total number of in-spec transitions and switch transitions. According to~\cite{schillinger2018simultaneous}, $|\to_{\ccalP(r, \phi)}|$ is bounded by $|\to_{\ccalP(r, \phi)}| \leq \left(|\ccalQ_\ccalA(\phi)| \cdot |\ccalS_r| \right)\left(|\ccalQ_\ccalA(\phi)| \cdot |\ccalS_r| \right).$ Furthermore, $|\symtext{\zeta}{in}|$, $|\symtext{\zeta}{inter}^1|$ and $|\symtext{\zeta}{inter}^2|$ are, respectively, upper bounded by
\begin{align}\label{eq:complexity}
    |\symtext{\zeta}{in}| & \leq \sum_{\phi \in \Phi_\text{leaf}} \sum_{r\in [N-1]_+} \left(|\ccalD_\ccalA(\phi)| \cdot |\ccalS_r|\right)  \left(|\ccalD_\ccalA(\phi)|\cdot  |\ccalS_{r+1}|\right), \nonumber \\
    |\symtext{\zeta}{inter}^1| & \leq \sum_{\phi, \phi' \in \Phi_\text{leaf}} \sum_{r\in [N]_+} \left(|\ccalD_\ccalA(\phi)|\cdot |\ccalS_r| \right)  \left(|\ccalD_\ccalA(\phi')| \cdot |\ccalS_r|\right), \\
    |\symtext{\zeta}{inter}^2| &\leq \sum_{\phi, \phi' \in \Phi_\text{leaf}} \sum_{r\in [N]_+} \left(|\ccalQ^F_\ccalA(\phi)| \cdot |\ccalS_r|\right)  \left(|\ccalD_\ccalA(\phi')| \cdot  |\ccalS_1|\right). \nonumber
\end{align}
\end{theorem}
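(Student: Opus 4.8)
The plan is to treat the statement as a counting argument in two stages: first establishing the additive decomposition of $|\to_\ccalP|$, and then bounding each summand separately. The leading identity is essentially a partition-by-type claim, while the three displayed inequalities are endpoint-counting bounds, so the bulk of the work is bookkeeping, with one genuine subtlety in the disjointness.

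First I would verify the leading identity by partitioning $\to_\ccalP$ according to the defining conditions of the hierarchical team model. By construction every transition of $\ccalP$ belongs to exactly one of four families: (i) an in-PA transition inside some $\ccalP(r,\phi)$, which keeps both $\phi$ and $r$ fixed; (ii) an in-spec switch transition $\zeta_{\text{in}}$, which keeps $\phi$ fixed but advances $r$ to $r+1$; (iii) a type~I inter-spec switch transition $\switch{inter}{1}$, which changes $\phi$ while keeping $r$ and $s$ fixed; and (iv) a type~II inter-spec switch transition $\switch{inter}{2}$, which changes $\phi$ and resets the robot index to $r'=1$. Families (i)--(ii) satisfy $\phi=\phi'$ and families (iii)--(iv) satisfy $\phi\neq\phi'$, separating the two groups; within each group the robot-index conditions ($r'=r$ versus $r'=r+1$, respectively $r'=r$ versus $r'=1$) separate the two members. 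Once these four sets are argued to be pairwise disjoint and to exhaust $\to_\ccalP$, summing their cardinalities yields the first equation, with the in-PA contributions regrouped as $\sum_{\phi\in\Phi_{\text{leaf}}}\sum_{r\in[N]_+}|\to_{\ccalP(r,\phi)}|$ and the remaining three families contributing $|\zeta_{\text{in}}|+|\switch{inter}{1}|+|\switch{inter}{2}|$.

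Next I would bound each family by counting admissible endpoint pairs. For the in-PA term, each $\ccalP(r,\phi)$ has state set $\ccalS_r\times\ccalQ_\ccalA(\phi)$, so its transitions number at most the square of the product-state count, which is exactly the bound cited from~\cite{schillinger2018simultaneous}. For the three switch families the idea is uniform: each switch transition is an edge whose source and target are confined to explicit state subsets dictated by its definition, so its count is at most the product of the sizes of those subsets, i.e.\ I overestimate the induced bipartite relation by the complete bipartite graph between the two subsets. Concretely, an in-spec switch transition has source in $\{(r,s,q): q\in\ccalD_\ccalA(\phi)\}$ and target in $\{(r+1,s',q'): q'\in\ccalD_\ccalA(\phi)\}$, giving the factor $(|\ccalD_\ccalA(\phi)|\,|\ccalS_r|)(|\ccalD_\ccalA(\phi)|\,|\ccalS_{r+1}|)$ summed over $r\in[N-1]_+$; a type~I transition fixes $r'=r$ and $s'=s$ and requires $q\in\ccalD_\ccalA(\phi)$, $q'\in\ccalD_\ccalA(\phi')$; and a type~II transition forces $r'=1$, hence $s'\in\ccalS_1$, with $q\in\ccalQ_\ccalA^F(\phi)$ and $q'\in\ccalD_\ccalA(\phi')$. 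Reading off the constrained and free components in each case reproduces the three stated upper bounds, summed over $\Phi_{\text{leaf}}$ and the appropriate robot ranges.

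The endpoint counts are routine; the point that needs real care---and the main obstacle to making the leading relation an exact equality rather than an inequality---is the pairwise disjointness. I would scrutinize the boundary between type~I and type~II transitions in particular: a type~I edge preserves $r$ and $s$, whereas a type~II edge sets $r'=1$, so the two can only potentially coincide on an edge with $r=r'=1$, and since final states are by convention also decomposition states, such an edge could in principle meet both defining conditions. I would resolve this either by checking that the $s'=s$ requirement of type~I together with the free $s'$ of type~II keeps the two sets distinct in the construction, or, failing a clean separation, by reading the leading relation as the sum of the cardinalities of the four transition types as defined. In any case the three displayed inequalities are robust to such overlap, since they claim only upper bounds and are themselves deliberately loose---for instance the $\zeta_{\text{in}}$ bound ignores the constraint $q=q'$---so only the exactness of the first equation hinges on the disjointness argument.
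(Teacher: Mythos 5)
Your proposal is correct and follows essentially the same route as the paper's proof: the leading identity is taken as a partition of $\to_\ccalP$ by transition type, and each switch family is bounded by the product of the sizes of its admissible source and target state sets (decomposition states times robot states, etc.). Your added scrutiny of pairwise disjointness between the transition families is more careful than the paper, which asserts the decomposition without comment, but it does not change the argument.
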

\begin{proof}
For a given PA \(\ccalP(r, \phi)\), the number of states is upper bounded by \(|\ccalQ_\ccalA(\phi)| \cdot |\ccalS_r|\). Its squared term gives the upper bound for the number of in-spec transitions \(|\to_{\ccalP(r, \phi)}|\), since any pair of states within the PA can potentially be connected. Regarding in-spec switch transitions between two consecutive PAs, \(\ccalP(r, \phi)\) and \(\ccalP(r+1, \phi)\), each decomposition state can be associated with any robot state, which sets the upper bound on the number of possible starting states to \(|\ccalD_\ccalA(\phi)| \times |\ccalS_r|\). Similarly, the maximum number of end states is determined by \(|\ccalD_\ccalA(\phi)| \times |\ccalS_{r+1}|\), subsequently defining the upper bound for \(|\symtext{\zeta}{in}|\). Following the same logic, the upper bound for inter-spec switch transitions can be deduced as outlined in Eq.~\eqref{eq:complexity}.
\end{proof}


\begin{cor}[Complexity of Search with Heuristics] 
    The upper bound on the number of in-spec transitions $|\symtext{\zeta}{in}|$ is the same as that in Eq.~\eqref{eq:complexity}, but the upper bound on the number of switch transitions $|\symtext{\zeta}{inter}|$ is obtained by replacing $\ccalS_r$ in Eq.~\eqref{eq:complexity} with $\Tilde{\ccalS}_r$, where $\Tilde{\ccalS}_r$ is the set of essential states of robot $r$.  
\end{cor}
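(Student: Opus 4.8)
The plan is to rerun the endpoint-counting argument behind the Complexity Theorem, but to intersect every robot-state range with the essential-state set that the heuristic leaves admissible, and to pin down the single coordinate along which the pruning acts. The overarching observation is that the essential-switch-transition rule only \emph{deletes} switch transitions; it never edits the transition relation $\to_{\ccalP(r,\phi)}$ inside an individual product automaton, nor does it change any index set over which the theorem's sums run.

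First I would dispose of the in-spec (internal) transitions. Since $\to_{\ccalP(r,\phi)}$ is untouched, the term $\sum_{\phi\in\Phi_\text{leaf}}\sum_{r\in[N]_+}|\to_{\ccalP(r,\phi)}|$ and its per-PA bound $(|\ccalQ_\ccalA(\phi)|\cdot|\ccalS_r|)(|\ccalQ_\ccalA(\phi)|\cdot|\ccalS_r|)$ are inherited verbatim, which is the part that stays the same. Moreover, because pruning can only remove edges, every original switch-transition bound also remains a valid upper bound, so no quantity can increase.

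Next I would re-derive the switch-transition bounds under the restriction. By definition a switch transition $((\phi,r,s,q),(\phi',r',s',q'))$ in $\symtext{\zeta}{in}\cup\symtext{\zeta}{inter}^1\cup\symtext{\zeta}{inter}^2$ is retained iff both $(r,s)$ and $(r',s')$ are essential, i.e.\ $s\in\Tilde{\ccalS}_r$ and $s'\in\Tilde{\ccalS}_{r'}$. In the theorem each endpoint was counted as an automaton factor (a subset of $\ccalD_\ccalA$ or $\ccalQ^F_\ccalA$, fixed by Def.~\ref{defn:in-spec},~\ref{defn:inter-spec1}, and~\ref{defn:inter-spec2}) times a robot-state factor whose physical coordinate ranged over the whole set $\ccalS_r$ (or $\ccalS_{r+1}$, or $\ccalS_1$). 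Essentiality constrains \emph{only} the physical coordinate, shrinking each such range from $\ccalS_r$ to $\Tilde{\ccalS}_r\subseteq\ccalS_r$ while leaving the automaton factor and the outer index ranges ($\phi,\phi'\in\Phi_\text{leaf}$ and the robot index) intact. Substituting $\ccalS_r\mapsto\Tilde{\ccalS}_r$ into the three switch-transition bounds of the Complexity Theorem then yields the claimed expressions.

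The step I expect to require the most care — bookkeeping rather than a genuine obstacle — is justifying the clean factorization of each endpoint into independent automaton and robot-state coordinates, so that the pruning multiplies out as a pure replacement of $|\ccalS_r|$ by $|\Tilde{\ccalS}_r|$. Concretely I would verify that the essential-state definition places no constraint on the automaton coordinate $q$ beyond the decomposition/accepting membership already demanded by the switch-transition definitions, so that the $|\ccalD_\ccalA(\phi)|$ and $|\ccalQ^F_\ccalA(\phi)|$ factors are genuinely unaffected; and, since $\Tilde{\ccalS}_r\subseteq\ccalS_r$, I would note that the resulting bounds are legitimate and no larger than the originals — precisely the graph-size reduction the heuristic is designed to achieve.
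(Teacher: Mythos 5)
The paper states this corollary without a proof of its own, so there is nothing to compare line by line; your argument is the natural one implied by the Complexity Theorem's counting, and it is correct. Your two key observations --- that the heuristic only deletes switch transitions without touching $\to_{\ccalP(r,\phi)}$, and that essentiality of $(r,s)$ constrains only the robot-state coordinate of each endpoint, so every endpoint set still factorizes as (automaton factor fixed by Def.~\ref{defn:in-spec}--\ref{defn:inter-spec2}) $\times$ (robot-state factor, now $\Tilde{\ccalS}_r$ instead of $\ccalS_r$) --- are exactly what makes the substitution legitimate. One point where you diverge from the corollary as literally stated: since Def.~of essential switch transitions prunes $\zeta_{\text{in}}\cup\switch{inter}{1}\cup\switch{inter}{2}$, you also tighten the bound on $|\symtext{\zeta}{in}|$ to $(|\ccalD_\ccalA(\phi)|\cdot|\Tilde{\ccalS}_r|)(|\ccalD_\ccalA(\phi)|\cdot|\Tilde{\ccalS}_{r+1}|)$, whereas the corollary keeps the original $|\symtext{\zeta}{in}|$ bound unchanged and applies the replacement only to $|\symtext{\zeta}{inter}|$. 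Both statements are valid upper bounds (yours is simply sharper and more faithful to the heuristic's definition), so this is a discrepancy with the paper's phrasing rather than a gap in your reasoning; it is worth noting only because a reader checking your version against the corollary would see a mismatch on the $\zeta_{\text{in}}$ term.
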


\begin{theorem}[Soundness]
    The state-specification sequence given by Alg.~\ref{alg:search} satisfies \hltl\ specifications $\Phi$.
\end{theorem}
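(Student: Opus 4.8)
The plan is to show that the state-specification sequence $\tau$ returned by \texttt{ExtractPlan} satisfies $\Phi$ in the sense of Def.~\ref{def:semantics}, i.e., that $\level{1}{1}$ appears in the output word of the root specification $\level{1}{1}$ computed by Alg.~\ref{alg:outputword}. The central idea is that the automaton-state vector $\bbq$ carried inside every search state is not incidental bookkeeping but a faithful, running encoding of the bottom-up output-word construction of Alg.~\ref{alg:outputword}. Accordingly, I would prove a loop invariant: for every state $v=(r,\phi,\bbs,\bbq)$ explored by Alg.~\ref{alg:search}, and for every specification $\psi\in\Phi$, the component $\bbq(\psi)$ equals the automaton state reached by $\ccalA(\psi)$ after reading the input word of $\psi$ induced by the prefix path $\Lambda_{\text{explored}}(v)$ (observations for leaf specifications, children output words for non-leaf ones). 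Establishing this invariant reduces soundness to observing that the termination test $\bbq(\level{1}{1})\in\ccalQ_\ccalA^F(\level{1}{1})$ on line~\ref{search:exit} is exactly acceptance of the root automaton.

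First I would treat the leaf specifications, the base case of the invariant. By Def.~\ref{defn:PA}, each in-spec transition in $\ccalP(r,\phi)$ is the product of a robot move $s_r\to_r s_r'$ and an NFA move $q\xrightarrow{\ccalL_r(s_r)}_\ccalA q'$; hence the subsequence of $\bbq(\phi)$ updates along the portion of $\Lambda$ lying in $\ccalP(\phi)$ is precisely the NFA run on the observation word generated by the robots active on $\phi$. The only subtlety is that this portion is searched \emph{sequentially} across robots, stitched together by in-spec switch transitions at decomposition states, whereas \texttt{ExtractPlan} runs the per-robot pieces \emph{in parallel}. Here I would invoke Asm.~\ref{asmp:decomp}: because every switch occurs at a state of $\ccalD_\ccalA(\phi)$, the portion decomposes into independent subtasks whose completion in any order (hence concurrently) still drives $\ccalA(\phi)$ to an accepting state, so the word that Def.~\ref{def:semantics} extracts at those timesteps still yields an accepting run. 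This is where I would lean directly on the decomposition-set machinery of~\cite{schillinger2018simultaneous}.

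Next I would handle non-leaf specifications by induction up the hierarchy tree $\ccalG_h$. The point is that \texttt{UpdateNonLeafSpecs} (Alg.~\ref{alg:nonleaf}) feeds the composite proposition $\{\phi\}$ into a parent's NFA exactly when $\phi$ first reaches $\ccalQ_\ccalA^F(\phi)$, which matches the semantic rule that the output word of a level-$L_{k+1}$ specification is the input word of its level-$L_k$ parent; Asm.~\ref{asmp:semantics} guarantees that treating each such satisfaction once is adequate, so the single update per child is faithful. Combining the inductive step with the leaf base case, the invariant propagates to the root, and when line~\ref{search:exit} fires the input word of $\level{1}{1}$ satisfies $\level{1}{1}$ by the equivalence $\ccalL_\ccalA=\texttt{Words}(\phi)$ from Section~\ref{sec:preliminaries}; thus $\level{1}{1}$ appears in the root output word and $\tau\models\Phi$.

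I expect the main obstacle to be the parallelization step inside \texttt{ExtractPlan}, namely showing that collapsing switch states and re-indexing the concurrent per-robot segments onto a common timeline preserves the acceptance recorded in $\bbq$. The semantics collects, at each timestep, the \emph{union} of observations of all robots currently assigned to $\phi$ (the leaf branch of Alg.~\ref{alg:outputword} at line~\ref{semantics:leaf}), whereas the search produced those observations along disjoint robot-indexed segments; reconciling the two requires the independence half of Asm.~\ref{asmp:decomp} to rule out interference between concurrently contributed symbols, and the one-time-satisfaction Asm.~\ref{asmp:semantics} to ensure no spurious re-triggering of a composite proposition during the reindexing. Once that alignment lemma is in hand, the remainder is a routine induction over $\ccalG_h$.
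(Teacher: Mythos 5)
Your proposal follows essentially the same route as the paper's proof: both argue correctness-by-construction, treating the automaton-state vector $\bbq$ as a faithful running encoding of the bottom-up output-word semantics of Alg.~\ref{alg:outputword} (leaf automaton states advanced by in-spec product transitions, parent states advanced by \texttt{UpdateNonLeafSpecs} exactly when a child first reaches acceptance), with the termination test $\bbq(\level{1}{1})\in\ccalQ_\ccalA^F(\level{1}{1})$ supplying root acceptance and hence satisfaction per Def.~\ref{def:semantics}. The one place you go beyond the paper is in explicitly flagging and discharging (via the decomposition machinery of Asm.~\ref{asmp:decomp}) the alignment between the sequentially searched path $\Lambda$ and the parallelized sequence produced by \texttt{ExtractPlan} --- a step the paper's proof dispatches with the single assertion that the correctness of $\tau$ ``aligns with'' the correctness of $\Lambda$ --- so your version is, if anything, the more complete argument.
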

\begin{proof}
{\color{\modifycolor}The soundness of Alg.~\ref{alg:search} follows directly from its correct-by-construction design. The correctness of the state-specification sequence aligns with the correctness of the computed path $\Lambda = v_0, v_1, \ldots, v_n$. Each state $v \in \Lambda$ maintains its specification-related information through its component $v.\bbq$, which tracks the automaton states for all specifications. Consequently, for each specification $\phi \in \Phi$, we can derive an output word $\sigma(\phi) = \sigma_0, \sigma_1, \ldots, \sigma_n$, where $\sigma_i = \{\phi\}$ if $v_i.\bbq(\phi) \in \ccalQ_\ccalA^F(\phi)$ and $\sigma_i = \varnothing$ otherwise. Output words at lower levels propagate upward, generating those at the next higher level according to the bottom-up updates of non-leaf specifications in Alg.~\ref{alg:nonleaf}. Ultimately, the output word for the root specification $\level{1}{1}$ includes $\level{1}{1}$ itself, which meets the termination condition of Alg.~\ref{alg:search} at line~\ref{search:exit}. As a result, the approach conforms to the semantic definition in Def.~\ref{def:semantics}.}
\end{proof}

\begin{asmp}[Segmented Plan]\label{asmp:exist}
Assuming that there exists a feasible state-specification sequence \(\tau = \tau_0 \tau_1 \ldots \tau_h\) that can be sequentially divided into $\ell$ segments with \(\ell \leq |\Phi_\text{leaf}|\), represented as \(\tau = \langle\tau\rangle_1 \langle\tau\rangle_2 \ldots \langle\tau\rangle_{\ell}\) where no two segments share the same specification. Each segment \(\langle\tau\rangle_l\) fulfills a single leaf specification. That is, for every aggregated state-specification pair {\color{black}\(\tau_i = ((\state{s}{1}{i}, \state{\psi}{1}{i}), (\state{s}{2}{i}, \state{\psi}{2}{i}), \ldots, (\state{s}{N}{i}, \state{\psi}{N}{i}))\)} in the segment \(\langle\tau\rangle_l = \tau_{m} \tau_{m+1} \ldots \tau_{n}\), each {\color{black}\(\psi^i_r\)} is either the same leaf specification \(\phi\) that \(\langle\tau\rangle_l\) satisfies, or it is the null specification \(\epsilon\).
\end{asmp}

Note that the number of segments $\ell$ might be less than the number of leaf specifications $|\Phi_\text{leaf}|$, since it is possible that not all leaf specifications are required to be satisfied, such as leaf specifications that are connected by disjunction. The following properties only apply to Alg.~\ref{alg:search} without heuristics.
\begin{theorem}[Completeness] 
    Assuming that a state-specification sequence $\tau$ satisfying Asm.~\ref{asmp:exist} exists, Alg.~\ref{alg:search} is complete in that it is guaranteed to return a feasible plan that fulfills the \hltl\ specifications $\Phi$.
\end{theorem}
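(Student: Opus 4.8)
The plan is to reduce completeness to two facts: (i) the hierarchical team models $\ccalP$ form a \emph{finite} graph with non-negative edge costs, on which Alg.~\ref{alg:search} runs a plain Dijkstra search that, without the heuristics, expands states in nondecreasing order of cost and hence pops a least-cost accepting state whenever one is reachable; and (ii) the feasible sequence $\tau$ guaranteed by Asm.~\ref{asmp:exist} can be realized as a concrete walk $\Lambda$ from an initial state (Def.~\ref{def:initial}) to a state whose root component $\bbq(\level{1}{1})$ is accepting. Granting (i) and (ii), the search cannot terminate with $\ccalH_{\text{front}}=\varnothing$ without first popping an accepting state at finite cost, so it returns the plan produced by \texttt{ExtractPlan} (Alg.~\ref{alg:exe}), which by the soundness theorem fulfills $\Phi$. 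Finiteness in (i) is immediate: $\bbs$ ranges over the finite product $\ccalS_1\otimes\ldots\otimes\ccalS_N$, each $\bbq(\phi)$ over a finite NFA state set, and $(r,\phi)$ over finitely many choices, while every transition type contributes only finitely many edges.

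The substance lies in constructing $\Lambda$ for (ii). Using the segmentation $\tau=\langle\tau\rangle_1\ldots\langle\tau\rangle_\ell$ from Asm.~\ref{asmp:exist}, I would build $\Lambda$ segment by segment. The $l$-th segment satisfies a single leaf specification $\phi_l$; by Asm.~\ref{asmp:decomp} the active robots' traces are independent and jointly complete $\phi_l$, so each active robot's trace corresponds to a path inside its product model $\ccalP(r,\phi_l)$ made of in-spec transitions, and consecutive active robots are joined by in-spec switch transitions $\zeta_{\text{in}}$ (Def.~\ref{defn:in-spec}), whose endpoints lie in $\ccalD_\ccalA(\phi_l)$ as required. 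At the boundary between segment $l$ and $l+1$ I would insert an inter-spec type II switch transition $\switch{inter}{2}$ (Def.~\ref{defn:inter-spec2}): its start is the accepting state reached at the end of $\langle\tau\rangle_l$ (so $q\in\ccalQ_\ccalA^F(\phi_l)$) and its target is the initial/decomposition state of $\phi_{l+1}$ on the first robot, matching the defining conditions exactly, with the joint state $\bbs$ carried across so idle robots resume correctly. Since the start of $\Lambda$ places every robot at $s_r^0$ and every automaton at its initial state, it begins at a legitimate member of $\ccalV_{\text{init}}$.

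The main obstacle, which I expect to carry essentially all the real work, is showing that traversing $\Lambda$ drives the non-leaf automata—updated incrementally by \texttt{UpdateNonLeafSpecs} (Alg.~\ref{alg:nonleaf})—to an accepting state of $\level{1}{1}$, in agreement with the output-word semantics of Def.~\ref{def:semantics}. The subtlety is that Def.~\ref{def:semantics} computes satisfaction through \texttt{GenerateOutputWord} (Alg.~\ref{alg:outputword}) with its reset-after-satisfaction rule, whereas the search merely follows NFA transitions without resetting. I would bridge the two via Asm.~\ref{asmp:semantics}: the segmentation completes each leaf specification exactly once, so for every non-leaf specification the induced input word—the completion events of its children, ordered by the segments—has each composite proposition appearing at most once, precisely the one-time-satisfaction regime of Asm.~\ref{asmp:semantics}. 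In this regime no reset ever fires, so the NFA state tracked by the search coincides with the output-word computation; since $\tau\models\Phi$ forces that computation to yield $\level{1}{1}$ in the root output word, the tracked root state is accepting at the end of $\Lambda$.

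I would also verify that the early-termination guard in \texttt{GetSucc} (Alg.~\ref{alg:succ}), which halts exploration of $\ccalP(r,\phi)$ once $\phi$ or a parent is satisfied, never prunes $\Lambda$; this follows from condition~\ref{cond:union} of Def.~\ref{def:hltl} together with Asm.~\ref{asmp:semantics}, exactly the conditions guaranteeing that one completion per specification suffices. With $\Lambda$ thus established as a finite initial-to-accepting walk whose bookkeeping matches the semantics, Dijkstra completeness on the finite graph closes the argument. The delicate point—reconciling the search's plain NFA tracking with the bottom-up, reset-based output-word semantics—rather than any graph-search reasoning, is where I anticipate the proof's difficulty.
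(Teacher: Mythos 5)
Your proposal is correct and follows essentially the same route as the paper's proof: both reduce completeness to the segment-by-segment structure guaranteed by Asm.~\ref{asmp:exist}, observe that type~I inter-spec switch transitions are not needed (each segment is a complete search within one product team model, joined to the next by a type~II transition into an initial/decomposition state of the first PA), and then invoke Dijkstra-based completeness on the finite graph. The main difference is one of rigor rather than strategy: the paper argues informally about how the search must progress and defers within-team-model completeness to the cited prior work, whereas you explicitly construct the witness walk $\Lambda$ and address two points the paper leaves implicit --- that the search's plain NFA tracking agrees with the reset-based output-word semantics under Asm.~\ref{asmp:semantics}, and that the early-termination guard in \texttt{GetSucc} never prunes $\Lambda$ --- both of which are genuine gaps your version fills.
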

\begin{proof}
We proved the completeness of Alg.~\ref{alg:search} by first disregarding inter-spec type I switch transitions. The inclusion of type I transitions merely enlarges the search space, without impacting the algorithm's completeness. In this case, only inter-spec type II switch transitions exist between team models of any two leaf specifications \(\phi\) and \(\phi'\). Type II transitions connect accepting states of every PA within the product team model of \(\phi\) (or \(\phi'\)) to decomposition states of the first PA of the team model of \(\phi'\) (or \(\phi\)). As a result, the search does not advance to product team models of other leaf specifications until it discovers a path segment that satisfies the current one, e.g., $\phi$. Furthermore, once an accepting state of a PA for \(\phi\) is reached, the search only moves to initial states, not other decomposition states, of the first PA in another team model, e.g., \(\phi'\), as the search on this first PA has not started yet.

Following this logic, Alg.~\ref{alg:search} initiates from an initial state of a certain specification \(\phi\). Upon reaching one of its accepting states, the search transitions to the initial state of another specification \(\phi'\). This process is repeated until the termination condition is satisfied. Given that we employ a Dijkstra-based approach and the search within a product team model is complete as per~\cite{schillinger2018simultaneous}, Alg.~\ref{alg:search} is complete even without considering inter-spec type I switch transitions, concluding the proof.
\end{proof}
\begin{theorem}[Optimality]
Assuming the existence of a state-specification sequence \(\tau^*\) that meets Asm.~\ref{asmp:exist} with least cost, Alg.~\ref{alg:search} is optimal in that the cost of the plan produced by Alg.~\ref{alg:search} is equal to or less than the cost of \(\tau^*\).
\end{theorem}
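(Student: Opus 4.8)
The plan is to argue optimality by combining the completeness result just established with the standard optimality guarantee of Dijkstra's algorithm, while carefully accounting for the way the sequential search path is parallelized into a state-specification sequence by \texttt{ExtractPlan}. The key subtlety is that the cost function in~\eqref{eq:objective} is the sum over robots of additive path costs, and the search in Alg.~\ref{alg:search} accumulates costs along a single sequential path through the hierarchical team models. Because switch transitions ($\zeta_{\text{in}}$, $\zeta_{\text{inter}}^1$, $\zeta_{\text{inter}}^2$) all carry zero cost (as assigned in Alg.~\ref{alg:succ}), the total cost accumulated along the search path equals exactly the sum of the in-spec transition costs, which in turn equals $\sum_{r} c_r$ after the path is reorganized by robot in \texttt{ExtractPlan}. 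I would first make this cost-equivalence precise: the cost $c_v$ that Dijkstra associates with the terminal state equals $J(\tau)$ for the extracted sequence $\tau$, since parallelization only reassigns segments to robots for concurrent execution without adding or deleting any in-spec transition, and the objective is the additive total over all robots rather than the makespan.

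Next I would set up the correspondence between feasible state-specification sequences satisfying Asm.~\ref{asmp:exist} and paths in the hierarchical team models from an initial state to a state whose root automaton component is accepting. The completeness proof already establishes (by disregarding type I transitions) that any segmented plan corresponds to a path that visits the product team models of the leaf specifications in sequence, entering each at an initial state and leaving at an accepting state. I would argue the converse direction needed for optimality: given the optimal sequence $\tau^*$ with least cost among those satisfying Asm.~\ref{asmp:exist}, there exists a corresponding path $\Lambda^*$ in $\ccalP$ from an initial state to a root-accepting state whose accumulated Dijkstra cost equals $J(\tau^*)$. This uses the decomposition guarantee of Asm.~\ref{asmp:decomp}: each segment $\langle\tau\rangle_l$ fulfilling a single leaf specification can be serialized across its active robots through in-spec and in-spec switch transitions without changing the total additive cost, precisely because task decomposition allows the parallel sub-plans to be laid out sequentially in the product team model.

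Then the core optimality argument follows from the Dijkstra invariant: Alg.~\ref{alg:search} explores states in non-decreasing order of accumulated cost and, upon popping the first root-accepting state, returns the minimum-cost path to any such state. Since $\Lambda^*$ is one admissible path reaching a root-accepting state with cost $J(\tau^*)$, the path $\Lambda$ returned by the algorithm satisfies $c_{\Lambda} \leq c_{\Lambda^*} = J(\tau^*)$. Combining this with the cost-equivalence from the first step, the extracted plan $\tau$ has cost $J(\tau) = c_{\Lambda} \leq J(\tau^*)$, which is the claimed optimality. As in the completeness proof, type I transitions are then reintroduced with the observation that they only enlarge the search space with zero-cost edges and thus cannot increase the returned optimum.

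The main obstacle I anticipate is justifying the cost-equivalence across the parallelization step rigorously, because the sequential search path and the parallel plan $\tau$ have different horizons and different notions of ``when'' actions occur. The crux is that the objective $J$ in~\eqref{eq:objective} is the \emph{sum} of per-robot additive costs, not the makespan, so reordering and parallelizing segments is cost-invariant; I must confirm that \texttt{ExtractPlan} neither duplicates nor drops any in-spec transition when it removes switch states and regroups segments by active robot, and that idle (null-specification) periods contribute zero cost. A secondary subtlety is ensuring that the serialization of a parallel segment in the product team model---required to realize $\tau^*$ as a genuine path $\Lambda^*$---does not inflate cost through the $r$-axis in-spec switch transitions; this is exactly where the zero-cost assignment to $\zeta_{\text{in}}$ and the independence clause of Asm.~\ref{asmp:decomp} are indispensable.
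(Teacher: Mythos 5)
Your proposal is correct and follows essentially the same route as the paper's proof: set aside the type I switch transitions, reduce to the completeness argument's sequential traversal of product team models combined with the Dijkstra optimality invariant, and then reintroduce type I transitions as a pure enlargement of the search space that can only lower the returned cost. The paper's own proof is considerably terser—it leaves implicit both the cost-equivalence across \texttt{ExtractPlan}'s parallelization and the realization of \(\tau^*\) as an admissible path \(\Lambda^*\) in \(\ccalP\)—so your explicit treatment of those two steps is a faithful elaboration rather than a different argument.
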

\begin{proof}
To prove the optimality, we first set aside inter-spec type I switch transitions. Following the same logic as in the proof of completeness, and employing a Dijkstra-based search algorithm, we can assert that searching within a product team model is optimal, as established in~\cite{schillinger2018simultaneous}. The addition of inter-spec type I switch transitions expands the search space, allowing for the potential discovery of a feasible plan that might not adhere to Asm.~\ref{asmp:exist} yet could yield a lower cost.
\end{proof}

\section{Simulation Experiments}\label{sec:sim}
We use Python 3.10.12 on a computer with 3.5 GHz Apple M2 Pro and 16G RAM. Our code can be available via the \href{https://github.com/XushengLuo92/Hierarchical-LTL-STAP}{https://github.com/XushengLuo92/Hierarchical-LTL-STAP}. 
The video is accessible via the link~\href{https://www.youtube.com/watch?v=N3f8pUHDPF4\&t=4s}{https://www.youtube.com/watch?v=N3f8pUHDPF4\&t=4s}.

\subsection{User Study on Understanding \hltl}\label{sec:user_study}
We conduct a user study to assess if \hltl\ makes task specifications more straightforward.\footnote{The survey did not gather any personal information from the participants.} We use a robotic arm to sort blocks into designated areas, as depicted in Fig.~\ref{fig:user-study}, where blocks at source locations are color-filled, while their corresponding target locations are of the same shape but unfilled. Here, $s$ and $t$ represent the source and target locations for the blocks, with the subscripts $\text{cc}, \text{dm}, \text{st}$, and $\text{ht}$ indicating the shapes circle, diamond, star, and heart, respectively.

\begin{figure}[!t]
    \centering
    \includegraphics[width=\linewidth]{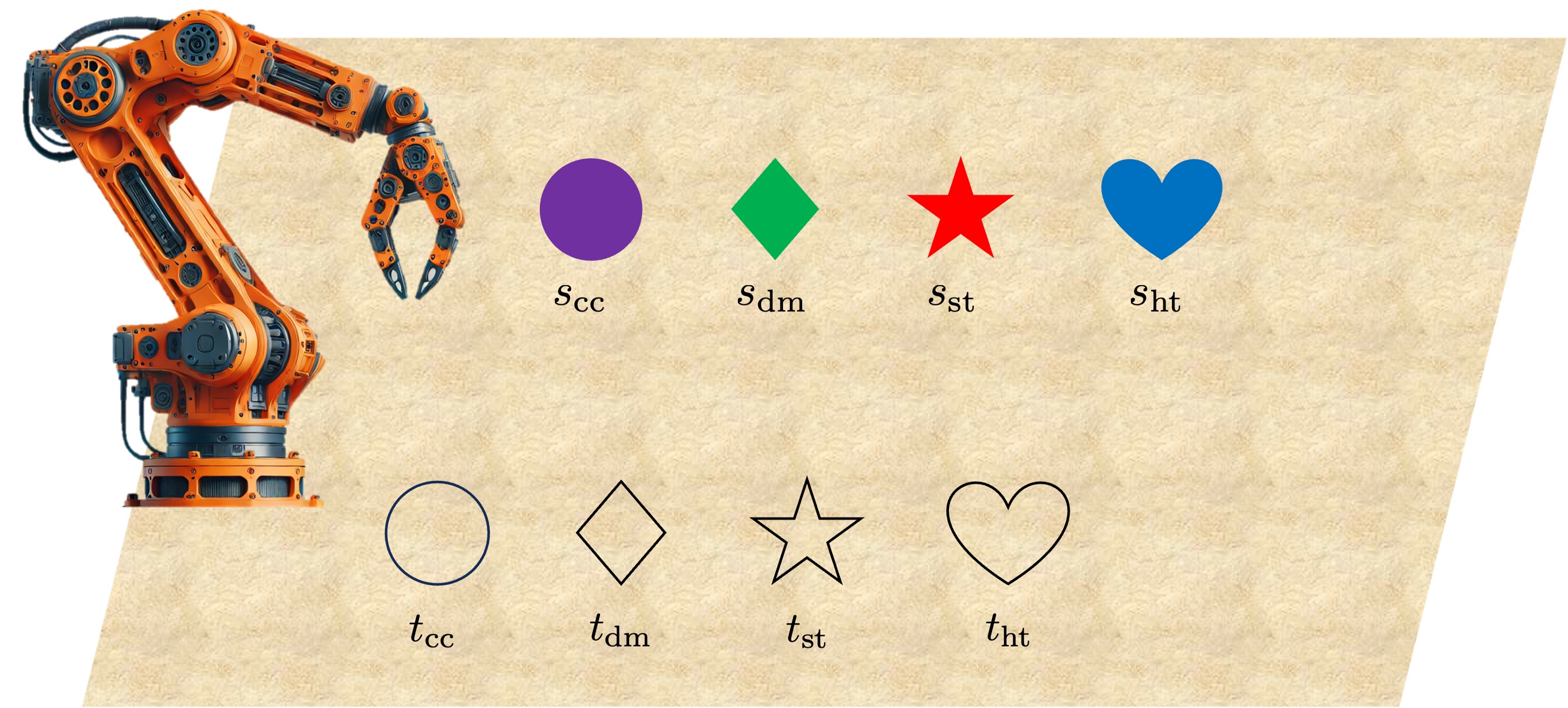}
    \caption{Pick-and-place platform for user study.}
    \label{fig:user-study}
    \vspace{-10pt}
\end{figure}

\subsubsection{Survey} We carry out the survey on Prolific (\href{www.prolific.com}{www.prolific.com}), recruiting 25 anonymous participants. The screening criteria specify that participants must have a STEM background and be at least undergraduate students, ensuring familiarity with mathematical logic. We begin the survey with an introduction to linear temporal logic, using the sorting task as a demonstration. Following this, participants answer two types of questions: subjective and objective. The subjective section consists of 7 questions where participants are presented with natural language instructions for 7 tasks. For each task, we provide a standard \ltl\ formula and hierarchical formulas. The participants are asked to choose which formula is easier to understand and clearer in representing the task described, of which task 5 is, ``First place the circle, and then place diamond and star in any order, and lastly place the heart.'' The standard \ltl\ is 
            \begin{align*}
            \begin{aligned}
               & \Diamond (s_{\text{cc}} \wedge \Diamond t_{\text{cc}}) \wedge \Diamond (s_{\text{dm}} \wedge \Diamond t_{\text{dm}})  \wedge \Diamond (s_{\text{st}} \wedge \Diamond t_{\text{st}}) \wedge \Diamond (s_{\text{ht}} \wedge \Diamond t_{\text{ht}}) \\ 
               &\wedge \neg s_{\text{dm}} \mathcal{U} t_{\text{cc}} 
               \wedge \neg s_{\text{dm}}  \mathcal{U} t_{\text{cc}}
               \wedge \neg s_{\text{st}}  \mathcal{U} t_{\text{cc}} 
               \wedge \neg s_{\text{ht}}  \mathcal{U} t_{\text{dm}}
               \wedge \neg s_{\text{ht}}  \mathcal{U} t_{\text{st}}
               \end{aligned}
            \end{align*}
        and the \hltl\ specifications are
 \begin{align}\label{eq:survey_hltl}
 \begin{aligned}
       L_1: \quad &  \level{1}{1}= \Diamond  (\level{1}{2} \wedge \Diamond  (\level{2}{2} \wedge  \Diamond  \level{3}{2} ))    \\
       L_2: \quad &  \level{1}{2} = \Diamond (s_{\text{cc}} \wedge \Diamond t_{\text{cc}})\\  
                & \level{2}{2} = \Diamond (s_{\text{dm}} \wedge \Diamond t_{\text{dm}}) \wedge  \Diamond (s_{\text{st}} \wedge \Diamond t_{\text{st}}) \\  
               & \level{3}{2} = \Diamond (s_{\text{ht}} \wedge \Diamond t_{\text{ht}}) 
       \end{aligned}
    \end{align}
{\color{\modifycolor}Note that standard \ltl\ and \hltl\ formulas are not equivalent, as achieving such equivalence is nontrivial. However, both formalisms are capable of capturing task descriptions to a large extent.} For the objective section, we prepare 4 pairs of standard and hierarchical formulas, each pair corresponding to the same task.
This results in a total of 8 questions, with each one featuring either a standard \ltl\ formula or \hltl\ formulas.
We randomly order the formulas so that adjacent questions do not necessarily involve the same task. Participants are required to select from four natural language instructions the one that correspond to the given formula.  
\begin{figure}
    \centering
    \includegraphics[width=\linewidth]{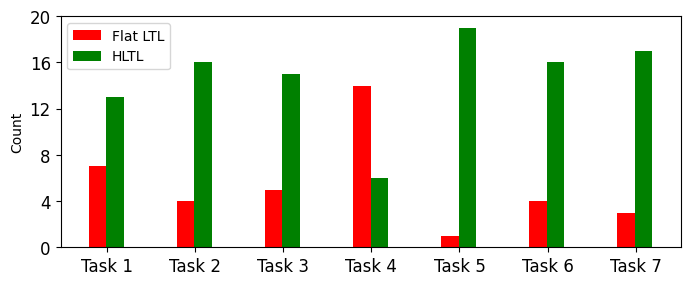}
    \caption{The count of preferences per task.}
    \label{fig:subjective}
    \vspace{-10pt}
\end{figure}
\subsubsection{Results analysis}Responses with low accuracy on objective questions were excluded, indicating that respondents lack a solid understanding of temporal logics. This results in 20 valid surveys. For the subjective questions, we compute the percentage of participants who prefer standard versus hierarchical formulas across 140 instances (20 participants, 7 questions each). For the objective questions, we calculate the accuracy for each type of \ltl\ across 80 instances per \ltl\ type (20 participants, 4 questions per \ltl\ type). 
We track the time taken by each participant to respond to each question. 

For the subjective questions, 27.1\% of all responses show a preference for standard \ltl, while 72.9\% favor \hltl. A detailed count of preferences is illustrated in Fig.~\ref{fig:subjective}, where respondents showed a clear preference over \hltl\ except for task 4. The specification is that ``Place the circle, diamond, star and heart. Diamond should be after circle and heart should be after star.'' The temporal constraints are simpler compared to those in task 5 in~\eqref{eq:survey_hltl}, making standard \ltl\ adequate for this scenario.  Regarding the objective questions, the accuracy rates are 65.0\% for standard \ltl\ and 86.3\% for \hltl. Additionally, the average response time for questions involving standard \ltl\ formulas is 78.3 seconds, compared to 47.1 seconds for \hltl\ formulas. These findings further validate the user-friendliness of \hltl, as evidenced by both higher accuracy and reduced response times. Details are presented in Fig.~\ref{fig:objective}, where respondents achieved higher accuracy rates in all tasks specified under \hltl.

\begin{figure}
    \centering
    \includegraphics[width=\linewidth]{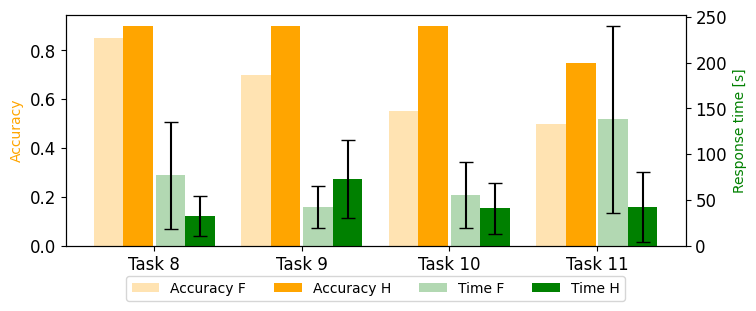}
    \caption{The accuracy and response time per task.}
    \label{fig:objective}
\end{figure}

\subsection{Scenarios}

We use the office environment described in~\cite{schillinger2018simultaneous} for service tasks, as depicted in Fig.~\ref{fig:bosch-building}. This 30$\times7$ grid map features 14 desk areas, 6 meeting rooms, and several other functional rooms. Additionally, we distribute $m$ robots at various locations. 
The first three scenarios are derived from the work~\cite{schillinger2018simultaneous}. We outline the hierarchical specifications for these scenarios below, while the corresponding standard versions of these specifications are referred to~\cite{schillinger2018simultaneous}. Note that the hierarchical specifications were derived directly from task descriptions, rather than from standard \ltl\ specifications.
\subsubsection{Scenario 1} {\color{\modifycolor}Transport the paper bin from desk $d_5$ to area $g$ for emptying, avoiding the public area while it is full. Return an empty bin from $g$ to desk $d_5$. The atomic propositions are: $\mathsf{default}$: the robot is not carrying any object, $\mathsf{carrybin}$: the robot is carrying a full paper bin, $\mathsf{dispose}$: the robot is disposing of garbage, $\mathsf{emptybin}$: the robot is carrying an empty paper bin, and $\mathsf{public}$: the robot is located in a public area.}
\begin{sizeddisplay}{\small}
{\allowdisplaybreaks\begin{align*}
 L_1: \quad &  \level{1}{1} =  \Diamond \level{1}{2} \wedge \Diamond\level{2}{2}\\
 L_2: \quad &  \level{1}{2} = \Diamond (d_5 \wedge \mathsf{default} \wedge \bigcirc ((\mathsf{carrybin}\; \mathcal{U}\; \mathsf{dispose}) \wedge \Diamond \mathsf{default}))  \\ 
 & \quad \quad \quad \quad \wedge \square (\mathsf{carrybin \Rightarrow \neg \mathsf{public}})\\
                & \level{2}{2} = \Diamond (g \wedge \bigcirc (g \wedge \mathsf{emptybin}) \wedge \Diamond (d_5 \wedge \bigcirc (d_5 \wedge \mathsf{default}))) 
\end{align*}}
\end{sizeddisplay}
\begin{figure}[!t]
    \centering
    \includegraphics[width=\linewidth]{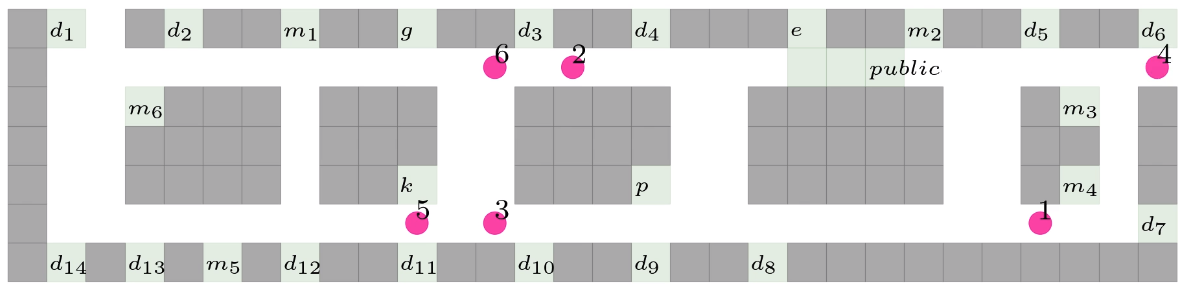}
    \caption{The office building in a grid-based layout, where areas \(d_1\) to \(d_{14}\) represent desks, \(m_1\) to \(m_6\) are meeting rooms, \(e\) stands for the elevator, \(g\) for the garbage room, \(p\) for the printer room, and \(k\) for the coffee kitchen. Areas marked as ``public'' indicate public spaces. Obstacles are illustrated in gray. The locations of robots are shown as numbered red dots.}
    \label{fig:bosch-building}
    \vspace{-10pt}
\end{figure}
\vspace{-1em}
\subsubsection{Scenario 2} Distribute documents to desks \(d_{10}\), \(d_7\), and \(d_5\), and avoid public areas while carrying the document. {\color{\modifycolor}Let $\mathsf{carry}$ denote the action of the robot carrying the document.}
\vspace{-1em}
\begin{sizeddisplay}{\small}
{ \allowdisplaybreaks
\begin{align*}
L_1: \quad &  \level{1}{1} =  \Diamond\level{1}{2} \wedge \Diamond\level{2}{2} \wedge \Diamond\level{3}{2}\\
 L_2: \quad &  \level{1}{2} = \Diamond (p \wedge \mathsf{carry}\; \mathcal{U}\; (d_{10} \wedge \bigcirc \neg \mathsf{carry}))  \wedge \mathsf{notpublic}  \\
 &  \level{2}{2} = \Diamond (p \wedge \mathsf{carry}\; \mathcal{U}\; (d_{7} \wedge \bigcirc \neg \mathsf{carry}))  \wedge \mathsf{notpublic}  \\
 &  \level{3}{2} = \Diamond (p \wedge \mathsf{carry}\; \mathcal{U}\; (d_{5} \wedge \bigcirc \neg \mathsf{carry}))  \wedge \mathsf{notpublic}  \\
 \quad & \mathsf{notpublic} :=   \square (\mathsf{carry} \Rightarrow \neg \mathsf{public}) 
\end{align*}}
\vspace{-10pt}
\end{sizeddisplay}

\begin{table*}[!t]
\renewcommand{\arraystretch}{1.2}
\centering\footnotesize
\setlength{\tabcolsep}{4.5pt}
\begin{tabular}{ccccccccccc}
\bhline
scenario &  $t$ &  $t_{\text{heur}}$ &  $t_{\text{heur}}^1$ &  $t_{\text{heur}}^2$ & $t_{\text{heur}}^3$ & $c$ &  $c_{\text{heur}}$ &  $c_{\text{heur}}^1$ &  $c_{\text{heur}}^2$ &  $c_{\text{heur}}^3$\\
\bhline
1 & 242.0$\pm$3.1 & \cellcolor{gray!20}{\bf 2.5$\pm$0.8 }& 242.0$\pm$3.1 & 11.1$\pm$0.2 & 41.7$\pm$33.5 & \cellcolor{gray!20}{\bf 63.7$\pm$6.9} & 77.4$\pm$1.2 & 63.7$\pm$6.9 & 66.8$\pm$8.7 & 80.3$\pm$8.2 \\
1$'$ & 232.9$\pm$18.0 &\cellcolor{gray!20}{\bf 2.3$\pm$0.4} & 93.7$\pm$14.6 & 20.8$\pm$6.7 & 27.9$\pm$20.4 & \cellcolor{gray!20}{\bf 81.4$\pm$8.8}& 82.8$\pm$6.7 & 82.1$\pm$6.9 & 82.7$\pm$9.2 & 82.4$\pm$9.7\\
2 & 1036.7$\pm$13.6  & \cellcolor{gray!20}{\bf 4.1$\pm$0.9} & 1036.7$\pm$13.6 & 46.0$\pm$1.8 & 21.1$\pm$13.8 & \cellcolor{gray!20}{\bf 91.5$\pm$6.7} & 98.5$\pm$5.6 &91.5$\pm$6.7 & 94.7$\pm$ 8.6 & 95.4$\pm$7.2 \\
2$'$ & 859.6$\pm$15.4&\cellcolor{gray!20}{\bf 2.1$\pm$0.2} & 283.2$\pm$2.4 & 39.8$\pm$0.8 & 289.7$\pm$211.1 & \cellcolor{gray!20}{\bf 93.3$\pm$3.2} & 94.4$\pm$2.2 & 96.3$\pm$5.9 & 95.0$\pm$5.9 & 93.6$\pm$5.4\\
3 & timeout &\cellcolor{gray!20}{\bf 10.2$\pm$3.4} & timeout & 594.6$\pm$23.0 &69.6$\pm$40.1 & --- & 120.2$\pm$3.4 & --- & 106.5$\pm$4.6 & 116.1$\pm$13.6\\
3$'$ & timeout & \cellcolor{gray!20}{\bf 19.1$\pm$10.4} & 1371.0$\pm$28.6 & 572.5$\pm$20.9 & timeout & --- & 120.0$\pm$5.7 & 119.4$\pm$7.3 & 117.6$\pm$6.4 & ---\\
\bhline
\end{tabular}
\caption{Comparison of results between methods using heuristics and those not employing heuristics. Scenarios 1$'$, 2$'$, and 3$'$ are modifications of the first three scenarios by arranging non-leaf specifications sequentially. The symbol $t$ represents the runtime without using any heuristics. The terms $t_{\text{heur}}$, $t_{\text{heur}}^1$, $t_{\text{heur}}^2$, and $t_{\text{heur}}^3$ correspond to the runtimes when applying all heuristics, the heuristic of temporal order, the heuristic of essential switch transitions, and the heuristic of automaton state, respectively. This notation is similarly used for the cost, denoted as $c$.}
\label{tab:heur}
\vspace{-10pt}
\end{table*}
\subsubsection{Scenario 3}  Take a photo in meeting rooms \(m_1\), \(m_4\), and \(m_6\). The camera should be turned off for privacy reasons when not in meeting rooms. Deliver a document from desk \(d_5\) to \(d_3\), ensuring it does not pass through any public areas, as the document is internal and confidential. Guide a person waiting at desk \(d_{11}\) to meeting room \(m_6\). {\color{\modifycolor}Let $\mathsf{guide}$, $\mathsf{photo}$, and $\mathsf{camera}$ denote the actions of the robot guiding a person, capturing a photo, and activating the camera, respectively.}
\vspace{-1em}
\begin{sizeddisplay}{\small}
{ 
\allowdisplaybreaks
\begin{align*}
 L_1: \quad &  \level{1}{1} =  \Diamond\level{1}{2} \wedge \Diamond \level{2}{2} \wedge  \Diamond \level{3}{2}\\
 L_2: \quad &  \level{1}{2} = \Diamond \level{1}{3}\wedge \Diamond \level{2}{3}\wedge \Diamond \level{3}{3}\\
  &  \level{2}{2} = \Diamond (d_5 \wedge \mathsf{carry} \; \mathcal{U} \; (d_3 \wedge \bigcirc \neg \mathsf{carry})) \wedge \mathsf{notpublic}\\
  &  \level{3}{2} = \Diamond (d_{11}\wedge \mathsf{guide} \; \mathcal{U}\; (m_6 \wedge \bigcirc \neg \mathsf{guide})) \\
 L_3: \quad &  \level{1}{3} = \Diamond(m_1 \wedge \mathsf{photo}) \wedge \square (\neg \mathsf{meeting} \Rightarrow \neg \mathsf{camera} ) \\
 &  \level{2}{3} = \Diamond(m_4 \wedge \mathsf{photo})\wedge \square (\neg \mathsf{meeting} \Rightarrow \neg \mathsf{camera} ) \\
 &  \level{3}{3} = \Diamond(m_6 \wedge \mathsf{photo}) \wedge \square (\neg \mathsf{meeting} \Rightarrow \neg \mathsf{camera} ) \\
 & \mathsf{meeting} := \; m_1 \vee  m_2 \vee  m_3 \vee  m_4 \vee  m_5 \vee  m_6 
\end{align*}}
\vspace{-10pt}
\end{sizeddisplay}
 \subsubsection{Combinations of scenarios 1, 2 and 3}
We examine combinations of any two of these tasks as well as the combination of all three. {\color{\modifycolor}Due to space constraints, we only detail the scenario involving the final occurrence of all three tasks.} 
\begin{sizeddisplay}{\small} 
{\allowdisplaybreaks
\begin{align}\label{eq:task123}
 L_1: \quad &  \level{1}{1} =  \Diamond \level{1}{2} \wedge \Diamond\level{2}{2}  \wedge \Diamond\level{3}{2} \nonumber \\
 L_2: \quad &  {\color{black}\level{1}{2} =  \Diamond \level{1}{3} \wedge \Diamond\level{2}{3}} \quad \text{\makebox[5.4cm][r]{(scenario 1)}}  \nonumber \\
            &  {\color{black}\level{2}{2} =  \Diamond\level{3}{3} \wedge \Diamond\level{4}{3} \wedge \Diamond\level{5}{3}} \quad \text{\makebox[4.5cm][r]{(scenario 2)}}  \nonumber\\
            &  {\color{black}\level{3}{2} =  \Diamond\level{6}{3} \wedge \Diamond\level{7}{3} \wedge \Diamond\level{8}{3}} \quad \text{\makebox[4.5cm][r]{(scenario 3)}}  \nonumber \\
 L_3: \quad &  {\color{black}\level{1}{3} = \Diamond (d_5 \wedge \mathsf{default}} \wedge \bigcirc ((\mathsf{carrybin}\; \mathcal{U}\; \mathsf{dispose}) \wedge \Diamond \mathsf{default}))  \nonumber\\ 
 & \quad \quad \quad \quad {\color{black}\wedge \square (\mathsf{carrybin \Rightarrow \neg \mathsf{public}})} \nonumber \\
              & \level{2}{3}  = \Diamond (g \wedge \bigcirc (g \wedge \mathsf{emptybin}) \wedge \Diamond (d_5 \wedge \bigcirc (d_5 \wedge \mathsf{default}))) \nonumber \\ 
 & {\color{black}\level{3}{3} = \Diamond (p \wedge \mathsf{carry}\; \mathcal{U}\; (d_{10} \wedge \bigcirc \neg \mathsf{carry}))} \wedge \mathsf{notpublic}  \\
 &  {\color{black}\level{4}{3} = \Diamond (p \wedge \mathsf{carry}\; \mathcal{U}\; (d_{7} \wedge \bigcirc \neg \mathsf{carry}))} \wedge \mathsf{notpublic}  \nonumber\\
 &  {\color{black}\level{5}{3} = \Diamond (p \wedge \mathsf{carry}\; \mathcal{U}\; (d_{5} \wedge \bigcirc \neg \mathsf{carry}))} \wedge \mathsf{notpublic}  \nonumber\\
                & {\color{black}\level{6}{3} = \Diamond  \level{1}{4} \wedge \Diamond  \level{2}{4} \wedge \Diamond  \level{3}{4}} \nonumber\\
                  & {\color{black} \level{7}{3} = \Diamond (d_5 \wedge \mathsf{carry} \; \mathcal{U} \; (d_3 \wedge \bigcirc \neg \mathsf{carry}))} \wedge \mathsf{notpublic}  \nonumber\\
  &  {\color{black}\level{8}{3} = \Diamond (d_{11}\wedge \mathsf{guide} \; \mathcal{U}\; (m_6 \wedge \bigcirc \neg \mathsf{guide}))} \nonumber\\
 L_4: \quad &  {\color{black}\level{1}{4} = \Diamond(m_1 \wedge \mathsf{photo}) \wedge \square (\neg \mathsf{meeting} \Rightarrow \neg \mathsf{camera} )} \nonumber\\
 &  {\color{black}\level{2}{4} = \Diamond(m_4 \wedge \mathsf{photo})\wedge \square (\neg \mathsf{meeting} \Rightarrow \neg \mathsf{camera} )} \nonumber\\
 &  {\color{black}\level{3}{4} = \Diamond(m_6 \wedge \mathsf{photo}) \wedge \square (\neg \mathsf{meeting} \Rightarrow \neg \mathsf{camera} )} \nonumber
 \end{align}
}%
 \end{sizeddisplay}
{\color{\modifycolor} If the objective is to accomplish either scenario, this can be indicated by replacing the formula at level $L_1$ with \( \level{1}{1} = \Diamond (\level{1}{2} \vee \level{2}{2}  \vee \level{3}{2}) \). Furthermore, the expression \( \level{1}{1} = \Diamond (\level{1}{2} \vee \level{2}{2})  \wedge \Diamond \level{3}{2} \) specifies that either task 1 or task 2 must be completed, in addition to task 3.}
\begin{table*}[!t]
 \renewcommand{\arraystretch}{1.2}
\centering \footnotesize
\begin{tabular}{lcccccccc}
\bhline
scenario &  $l_{\text{std}}$ &  $l_{\text{hier}}$ &  $\ccalA_{\text{std}}$ &  $\ccalA_{\text{hier}}$ &  $t_{\text{std}}$ &  $t_{\text{hier}}$ & $c_{\text{std}}$ &  $c_{\text{hier}}$\\
\bhline
1 & \cellcolor{gray!20}{\bf 18} & 19 & (17, 39) & \cellcolor{gray!20}{\bf (12, 13)} & 14.1$\pm$3.7 & \cellcolor{gray!20}{\bf 4.9$\pm$2.3} & 71.0$\pm$6.3 & \cellcolor{gray!20}{\bf 69.0$\pm$5.7}\\
2 & \cellcolor{gray!20}{\bf 24} & 35 & (56, 326) & \cellcolor{gray!20}{\bf (20, 31)} & 39.6$\pm$2.5 & \cellcolor{gray!20}{\bf 7.0$\pm$3.1} & 90.4$\pm$4.0 &\cellcolor{gray!20}{\bf 88.6$\pm$7.0} \\ 
3 & \cellcolor{gray!20}{\bf 35} & 52 & (180, 1749) & \cellcolor{gray!20}{\bf (30, 49)}  &  timeout   & \cellcolor{gray!20}{\bf 14.5$\pm$4.1} &  --- &\cellcolor{gray!20}{\bf 97.1$\pm$5.9} \\ 
1 $\wedge$ 2 &\cellcolor{gray!20}{\bf 43} & 57 & (868, 12654) & \cellcolor{gray!20}{\bf (36, 49)} & timeout & \cellcolor{gray!20}{\bf 16.4$\pm$7.1} & --- & \cellcolor{gray!20}  {\bf 148.8$\pm$8.4} \\ 
1 $\wedge$ 3 & \cellcolor{gray!20}{\bf 54} & 74 & (2555, 69858) & \cellcolor{gray!20}{\bf (46, 67)}  & timeout & \cellcolor{gray!20}{\bf 47.9$\pm$22.3}& --- &  \cellcolor{gray!20}{\bf 166.4$\pm$8.0} \\ 
2 $\wedge$ 3 &\cellcolor{gray!20}{\bf 60} & 90 & (6056, 325745) & \cellcolor{gray!20}{\bf (54, 85)}   & timeout &\cellcolor{gray!20}{\bf 42.5$\pm$16.9}& --- &  \cellcolor{gray!20}{\bf 175.4$\pm$6.9} \\ 
1 $\wedge$ 2 $\wedge$ 3 & \cellcolor{gray!20}{\bf 79} & 111 & --- & \cellcolor{gray!20}  {\bf (70, 112)}   & timeout & \cellcolor{gray!20}{\bf 89.6$\pm$28.1}& --- & \cellcolor{gray!20}{\bf 246.6$\pm$8.3}  \\ 
(1 $\vee$ 2) $\wedge$ 3 & \cellcolor{gray!20}{\bf 79} & 110 & --- & \cellcolor{gray!20}  {\bf \color{\modifycolor}{(66, 98)}}   & timeout & \cellcolor{gray!20}{\bf  71.5$\pm$17.6}& --- & \cellcolor{gray!20}{\bf \color{\modifycolor} 213.1$\pm$23.4 } \\
1 $\vee$ 2 $\vee$ 3 & \cellcolor{gray!20}{\bf 79} & 109 & --- & \cellcolor{gray!20}  {\bf \color{\modifycolor}(64, 94)}   & timeout & \cellcolor{gray!20}{\bf {\color{\modifycolor}63.4$\pm$20.9}}& --- & \cellcolor{gray!20}{\bf {\color{\modifycolor}163.9$\pm$40.5}} \\
\bhline
\end{tabular}
\caption{The comparative analysis focuses on two different types of \ltl\ specifications. We denote the lengths of the standard and \hltl\ specifications as \(l_{\text{std}}\) and \(l_{\text{hier}}\), respectively. The sizes of the corresponding NFAs are represented by \(\ccalA_{\text{std}}\) and \(\ccalA_{\text{hier}}\), which detail the number of nodes and edges, with the node count listed first. In terms of solutions, the runtimes for the standard and \hltl\ specifications are indicated by \(t_{\text{std}}\) and \(t_{\text{hier}}\), respectively. Additionally, the plan horizons, or the lengths of the solutions, for the standard and hierarchical specifications are denoted by \(c_{\text{std}}\) and \(c_{\text{hier}}\), respectively.
}
\label{tab:result}
\end{table*}

\subsection{Ablation Study on Effects of Heuristics}
We set $m=2$. The results of the effects of different heuristics in scenarios 1 to 3 are detailed in Tab.~\ref{tab:heur}, which presents the average runtimes and cost over 20 trials per scenario. During each trial, the locations of the robots are randomly sampled within areas without labels. Our method was evaluated under three conditions: without any heuristics, with a single heuristic (either the temporal order as in Sec.~\ref{sec:heuristic_order}, critical switch transitions as in Sec.~\ref{sec:heuristic_switch}, or automaton state as in Sec.~\ref{sec:heuristic_automaton}), and with the combination of all heuristics. It was observed that in scenarios 1 to 3 where non-leaf specifications have no precedence temporal order, the impact of solely using the temporal order heuristic is equivalent to not using any heuristic at all. To further investigate this, three additional scenarios were introduced where non-leaf specifications were arranged sequentially. For example, in scenario 1, $\level{1}{1} = \Diamond \level{1}{2} \wedge \Diamond \level{2}{2}$ was modified to $\level{1}{1} = \Diamond (\level{1}{2} \wedge \Diamond \level{2}{2})$. Each navigational and manipulative action carried out incurs a cost of 1, but transitions between switches do not incur any costs. A maximum time limit of one hour was set for these tests, same as other simulations.

As expected, the method without heuristics yields solutions with minimum overall costs. However, employing all three heuristics simultaneously results in a substantial acceleration of the search process, roughly by two orders of magnitude. This significant improvement in runtimes outweighs the minor increase in costs associated with the use of heuristics. In six different scenarios, the heuristic based on essential switch transitions outperforms the other two heuristics in four scenarios. Its effectiveness is attributed to its ability to significantly reduce the number of switch transitions among team models, thus keeping the search confined within individual team models at the most time. Conversely, the temporal order heuristic is the least effective in five of the six scenarios. This inefficiency stems from the absence of precedent constraints in the non-leaf specifications of scenarios 1, 2, and 3. The cost differences between these heuristics are insignificant. In conclusion, each heuristic demonstrates improvement over the baseline method, and all of them should be applied in order to maximize the performance gain.

\subsection{Comparison with Existing Works}
We use $m=6$ robots and compare with the approach in~\cite{schillinger2018simultaneous}, both using all heuristics. Note that~\cite{schillinger2018simultaneous}'s method is tailored to standard \ltl\ specifications. Due to the absence of open-source code, we implemented their method. 
Robot locations in each scenario are randomly assigned within the free space. The performance, in terms of average runtimes and costs over 20 runs, is detailed in Tab.~\ref{tab:result} and includes the length of formulas and sizes of {\color{\modifycolor}automata}. The length of a formula is the total number of logical and temporal operators. 
Upon reviewing the results,~\cite{schillinger2018simultaneous}'s method failed to generate solutions for the last seven tasks within the one-hour limit. For tasks 1 and 2, our method produced solutions more quickly and with comparable costs. The failure of ~\cite{schillinger2018simultaneous}'s approach is attributed to the excessively large {\color{\modifycolor}automata} it generates, sometimes with hundreds of thousands of edges, e.g., 325745 edges for scenario $2 \wedge 3$, making the computation of the decomposition set time-consuming as it requires iterating over all possible runs. For the most complex scenario, such as $1 \wedge 2 \wedge 3$, generating an automaton within one hour is impossible. In contrast, our method was able to find a solution in around 90 seconds. Moreover, considering the last three tasks, which require completing all, two, or only one task, both the runtime and cost decrease as the number of required tasks is reduced.

\begin{table}[!t]
 \renewcommand{\arraystretch}{1.2}
\centering \footnotesize
\begin{tabular}{lclc}
\bhline
flattened specifications  &  $\ccalA_{\text{hier}}$ &  $t_{\text{hier}}$  &  $c_{\text{hier}}$\\
\bhline
None & (70, 112) &  89.6$\pm$28.1 & 246.6$\pm$8.3 \\
$\level{6}{3}$ & (64, 109) &  98.4$\pm$50.0 & 245.4$\pm$10.3 \\
$\level{6}{3}, \level{1}{2}$  & (69, 135) &  173.5$\pm$118.1 &  247.9$\pm$8.7\\
$\level{6}{3}, \level{1}{2}, \level{2}{2}$  & (105, 430) & 309.4$\pm$487.4 (7) &  237.6$\pm$12.7 \\
$\level{6}{3}, \level{1}{2}, \level{2}{2}, \level{3}{2}$  & (261, 2133) & timeout & --- \\
\bhline
\end{tabular}
\caption{Results illustrating the impact of hierarchy granularity on performance. The first column lists the set of non-leaf specifications that have been flattened. The number within the parentheses in the third column indicates the number of runs that resulted in a timeout.}
\label{tab:granularity}
\end{table}
\begin{figure*}[!t]
    \centering
     \subfigure[Runtimes w.r.t number of robots]{
      \label{fig:runtimes}
      \includegraphics[width=0.48\linewidth]{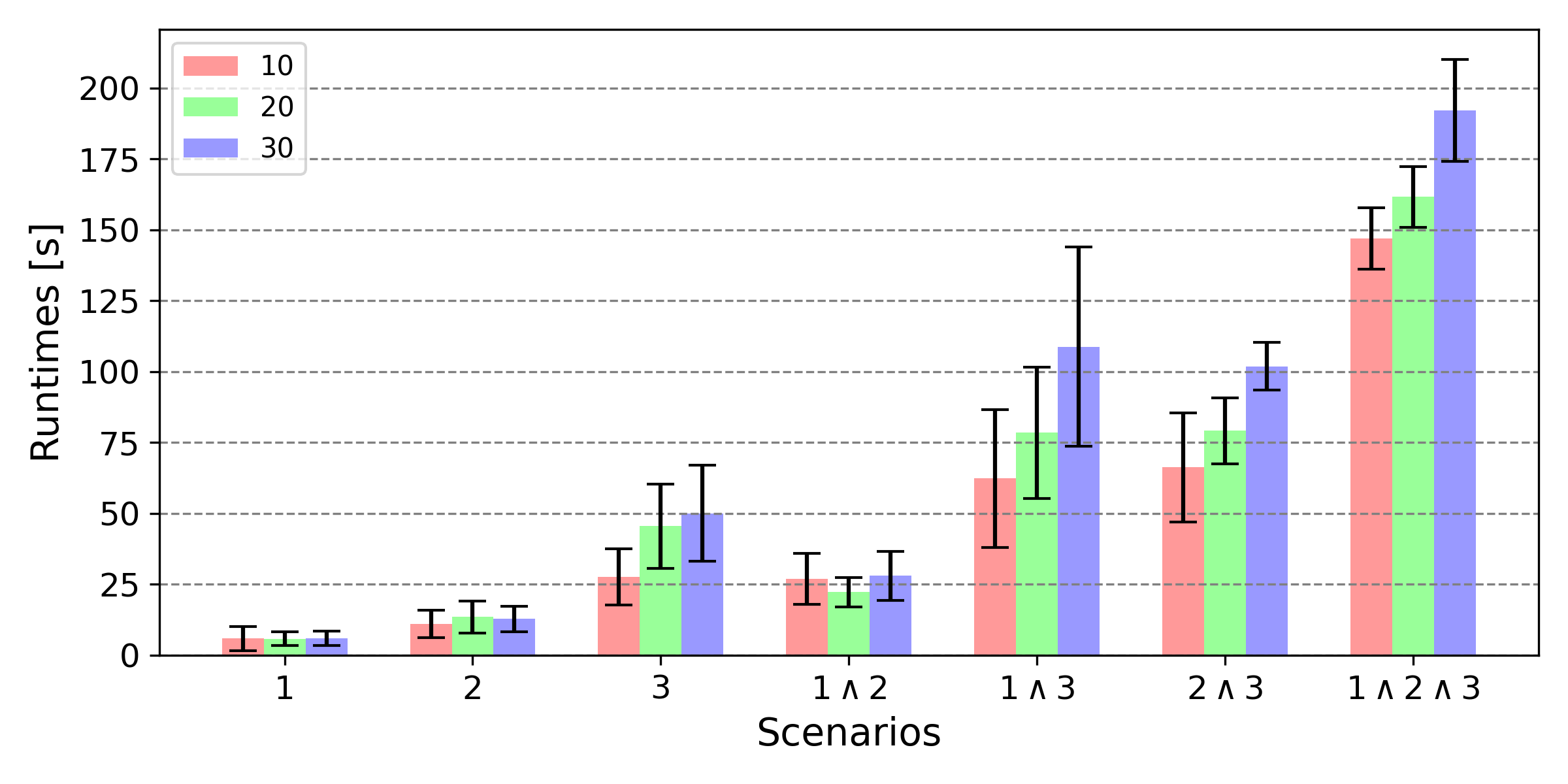}}
     \subfigure[Cost w.r.t number of robots]{
      \label{fig:cost}
      \includegraphics[width=0.48\linewidth]{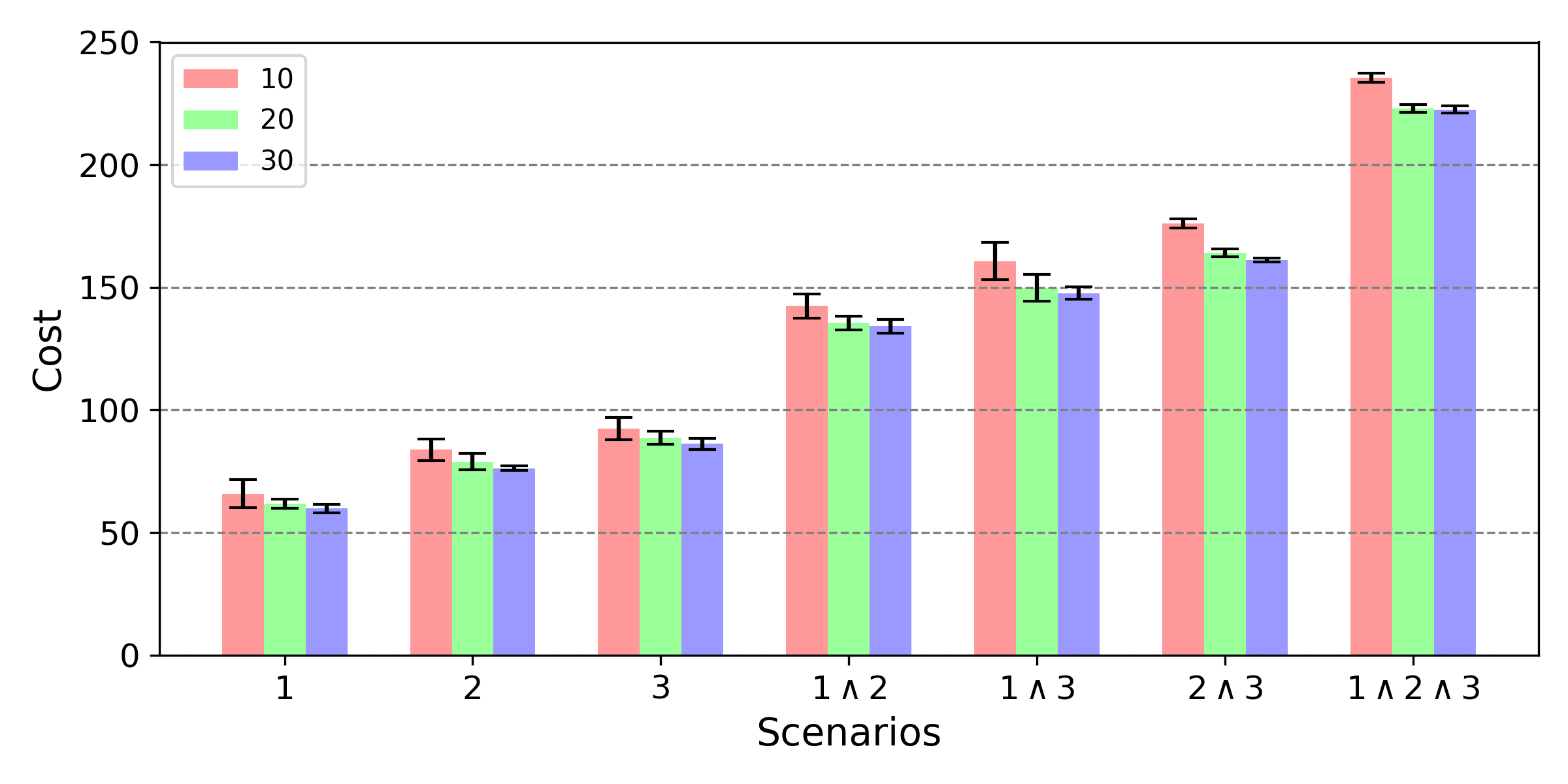}}
    \caption{Scalability results of runtimes and cost w.r.t number of robots.}
    \label{fig:scalability}
    \vspace{-10pt}
 \end{figure*}

\subsection{Effects of Granularity of Hierarchy}
For the same task, the performance of the planning algorithm is influenced by the granularity of hierarchical specifications. To investigate this, we evaluate the most complex task, which combines scenarios 1, 2, and 3 as defined in~\eqref{eq:task123}. The specifications in~\eqref{eq:task123} represent the finest granularity. We then progressively increase the coarseness by flattening more non-leaf specifications through the merging of their respective leaf specifications, thus gradually transforming the hierarchical structure towards a more flattened form. Eventually, this process results in a single standard specification. For example, flattening specification $\level{6}{3}$ results in $\level{6}{3} = \Diamond(m_1 \wedge \mathsf{photo}) \wedge \Diamond(m_4 \wedge \mathsf{photo})  \wedge \Diamond(m_6 \wedge \mathsf{photo}) \wedge \square (\neg \mathsf{meeting} \Rightarrow \neg \mathsf{camera})$. The outcomes from 20 runs are presented in Tab.~\ref{tab:granularity}. {Generally, as more non-leaf specifications are flattened, the {\color{\modifycolor}automata} increase in size, resulting in longer runtimes, while the costs remain relatively stable. Consequently, finer granularity in the hierarchy is associated with enhanced planning performance.}

\subsection{Scalability}\label{sec:scalability}
For assessing scalability, we varied the number of robots from 10 to 30. The statistical findings are presented in Fig.~\ref{fig:scalability}. Across different scenarios, a consistent pattern emerged: as the number of robots increased, the runtimes tended to increase, while the overall costs decreased. This trend is intuitive since a greater number of robots introduces a wider range of potential solutions. In the most complex scenario, our method was capable of identifying solutions in approximately 200 seconds, even with 30 robots. For tasks 1, 2, and 3, increasing the number of robots to 20 or 30 does not result in a substantial cost reduction compared to using 10 robots, indicating that 10 robots are sufficient for tasks of this complexity. However, when tasks are combined, the cost reduction becomes more significant with 20 or 30 robots. Additionally, 20 robots are capable of handling tasks of this complexity effectively, with the marginal benefit of increasing to 30 robots being minor.
\begin{figure*}[!t]
    \centering
      \includegraphics[width=0.95\linewidth]{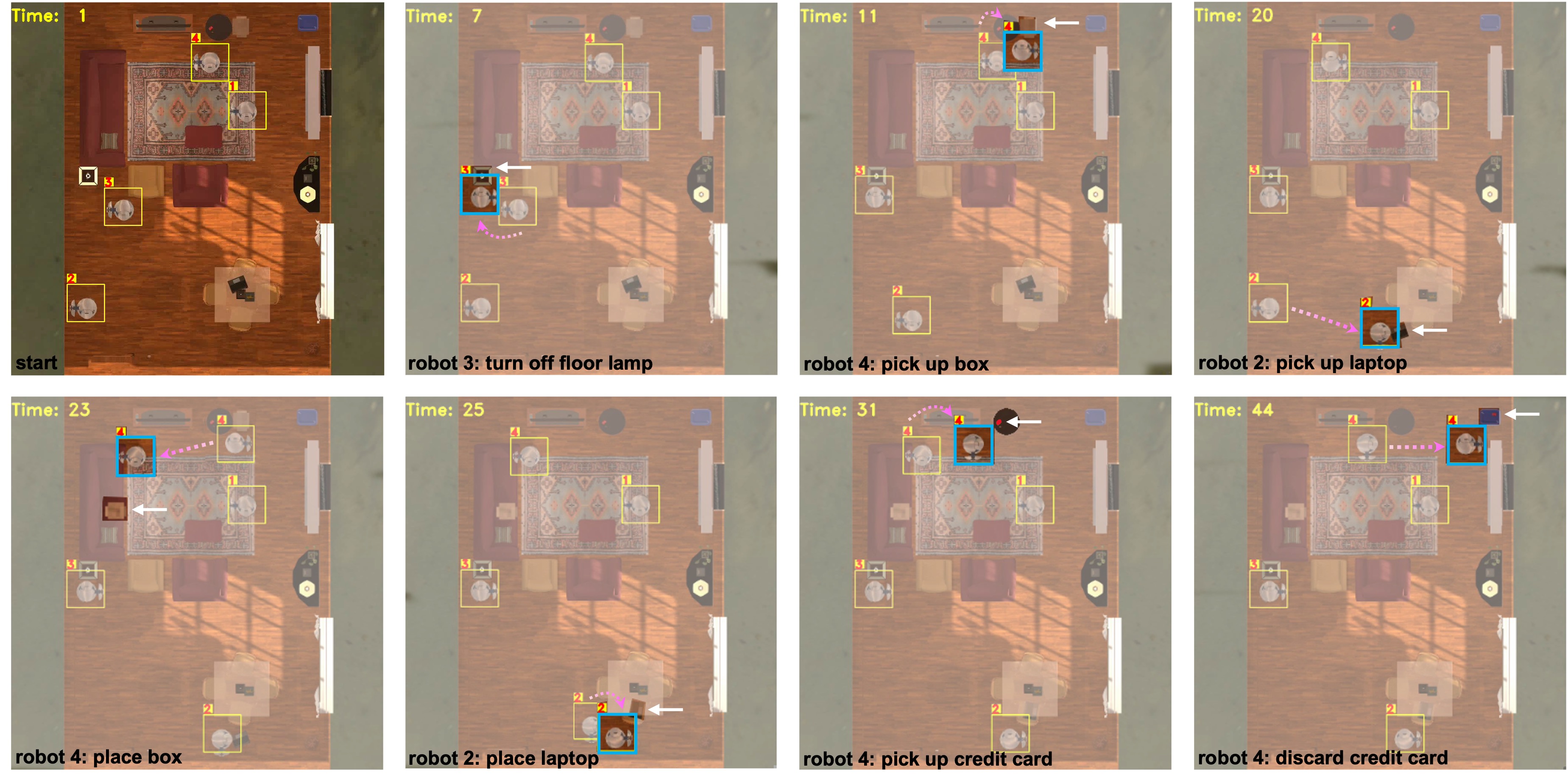}
    \caption{The keyframes illustrate the execution of the derivative task described in Eq.~\eqref{eq:ai2thor} within the AI2-THOR framework, indicating the crucial phases in the completion of task. Each of the four robots is inside yellow boxes with distinct numbers. {\color{\modifycolor}The upper-left corner of the keyframes indicates the time instants, while the bottom-left corner displays the working robot and its corresponding action. The working robots are enclosed in blue boxes, with dashed lines in a magenta gradient color connecting the position where they performed most recent manipulation action to their current location. White arrows highlight the objects with which the robots are interacting.} Robots 2 and 4 are simultaneously manipulating the box and the laptop.}
    \label{fig:ai2thor}
 \end{figure*}

\subsection{Evaluation on simulator AI2-THOR}
We conduct our experiments using the AI2-THOR simulator and the ALFRED dataset. AI2-THOR~\cite{kolve2017ai2} provides a realistic 3D simulated environment for robots to perform interactions within various household settings such as kitchens and living rooms. This setting allows for the assessment of algorithms in tasks that include navigation and object manipulation. The ALFRED dataset~\cite{shridhar2020alfred}, used within AI2-THOR, focuses on natural language instructions for simple, sequentially ordered tasks. Examples of such tasks in a living room setting include (i) ``Turn off the floor lamp,'' (ii) ``Move the box to a sofa,'' (iii) ``Move the laptop from the table to an armchair,'' and (iv) ``Dispose of the credit card in the garbage can.'' We refer to these tasks as \textit{base} tasks. Then we construct \textit{derivative} tasks by combining several base tasks within the same floor plan. One derivative task from base tasks (i)-(iv) is that ``First turn off the floor lamp. After that, move the box to a sofa and the laptop to an armchair. Finally, dispose of the credit card in the garbage can.'' The \hltl\ formulas are:
\begin{sizeddisplay}{\small}
{\allowdisplaybreaks
\begin{align}\label{eq:ai2thor}
       L_1: \quad &  \level{1}{1}= \Diamond  (\level{1}{2} \wedge \Diamond (\level{2}{2} \wedge \Diamond \level{3}{2} ))  \nonumber \\
       L_2: \quad         & \level{1}{2} = \Diamond (\mathsf{floorlamp} \wedge \bigcirc \mathsf{turnoff}) \nonumber \\  
        & \level{2}{2} =  \Diamond \level{1}{3} \wedge \Diamond  \level{2}{3}   \\
         & \level{3}{2} = \Diamond (\mathsf{creditcard} \wedge \bigcirc (\mathsf{pickup}  \wedge \Diamond (\mathsf{garbagecan} \wedge \bigcirc \mathsf{place}) ))  \nonumber \\
        L_3: \quad         &   \level{1}{3}  = \Diamond (\mathsf{box} \wedge \bigcirc (\mathsf{pickup}  \wedge \Diamond ( (\mathsf{sofa1} 
      \vee \mathsf{sofa2}) \wedge \bigcirc \mathsf{place}) )) \nonumber\\  
                & \level{2}{3} = \Diamond (\mathsf{laptop} \wedge \bigcirc (\mathsf{pickup} \wedge \Diamond ((\mathsf{armchair1} \nonumber \\
                & \quad \quad \vee \mathsf{armchair2} \vee \mathsf{armchair3} \vee \mathsf{armchair4})  \wedge \bigcirc \mathsf{place} ) )).   \nonumber
\end{align}}%
\end{sizeddisplay}
We represent navigation destinations using nouns and the manipulative actions to be performed at these destinations with verbs. In $\level{2}{3}$, given that there are four armchair available and the instructions do not specify a particular one, placing the laptop on any of these armchairs meets the task requirements.

{\color{\modifycolor} }
We vary the number of base tasks included in each derivative task, selecting from the set \{1, 2, 3, 4\}. A total of 35 derivative tasks are constructed, each comprising an equal number of randomly chosen base tasks from the dataset. For each derivative, we assign 1, 2, or 4 robots, each placed at random starting locations within the floor plan. This results in $4 \times 35 \times 3 = 420$ testing scenarios. The floor map is transformed into a grid world where each grid unit measures 0.25 meters. The grid world's dimensions vary from 10 to 40 units per side due to various floor plans. Recall that in objective~\eqref{eq:objective}, $c_r$ denotes the cost for robot $r$. In this part, we modified the goal~\eqref{eq:objective} as
\begin{align}\label{eq:minmax_objective}
    J(\tau) = \omega \max_{r\in [N]_+}  c_r + (1 - \omega)  \sum_{r=1}^N  c_r
\end{align}
where $\omega \in [0, 1]$. Here, setting $\omega = 0$ simplifies~\eqref{eq:minmax_objective} to objective~\eqref{eq:objective}, while $\omega = 1$ seeks to reduce the maximum cost among robots, with the secondary term regularizing the costs of the remaining robots. In the evaluation, we set $\omega = 0.9$ to prioritize minimizing the completion time (makespan). The results, detailed in Table~\ref{tab:ai2thor}, indicate an increase in runtime for solutions as the number of base tasks or robots increases. Additionally, with more robots engaged, the completion time is reduced due to the ability to distribute tasks more evenly among multiple robots. The execution of derivative task~\eqref{eq:ai2thor}  is illustrated in Fig.~\ref{fig:ai2thor}.

\renewcommand{\arraystretch}{1.2} 
\begin{table}[!t]
    \centering \footnotesize
    \setlength{\tabcolsep}{3pt}  
    \begin{tabular}{ccccccccccc}
    \bhline
    {\# robots}   & \# tasks    & $t$ & $c$ & \# tasks    & $t$ & $c$ \\
    \bhline
     
   1 &  \multirow{3}{*}{1} & \cellcolor{gray!20}{\bf 2.6$\pm$0.9} & 11.7$\pm$8.7 &  \multirow{3}{*}{3}  & \cellcolor{gray!20}{\bf 10.8$\pm$7.7} & 38.2$\pm$25.4 \\
   2 &    & 3.0$\pm$1.2 & 11.7$\pm$9.8 & &  12.1$\pm$9.2 & 27.9$\pm$21.6  \\
   4&    & 4.1$\pm$1.3 &\cellcolor{gray!20}{\bf 9.0$\pm$7.6 }  & & 14.6$\pm$9.1 & \cellcolor{gray!20}{\bf 22.0$\pm$14.2}  \\
     \hline
   1 &  \multirow{3}{*}{2}    & \cellcolor{gray!20}{\bf 4.3$\pm$2.9} & 23.4$\pm$21.0  &  \multirow{3}{*}{4}  & \cellcolor{gray!20}{\bf 13.0$\pm$13.9} & 50.1$\pm$29.6  \\
   2 &    & 4.8$\pm$3.2 & 17.1$\pm$15.4 & & 13.3$\pm$14.9 & 30.7$\pm$16.9  \\
   4 &   & 6.5$\pm$4.7 & \cellcolor{gray!20}{\bf 14.8$\pm$12.7}& & 14.9$\pm$11.4 & \cellcolor{gray!20}{\bf 23.9$\pm$14.0}   \\
      \bhline
    \end{tabular}
    \caption{Results across different numbers of base tasks and robots. Here, $t$ represents the runtimes required to return solutions, and $c$ denotes the completion time. }
    \label{tab:ai2thor}
    \vspace{-10pt}
\end{table}

\section{Conclusion and Discussion}

We addressed the problem of STAP for multiple robots subject to \hltl\ specifications. First, we presented the syntax and semantics for \hltl. Next, we developed a search-based planning algorithm, and introduced several heuristics leveraging the task structure to expedite it. We  provided theoretical analysis on the completeness and optimality.

Our simulations, focused on service tasks involving navigation and manipulation, included an ablation study on heuristics. This study demonstrated that each heuristic variably accelerates the search, with their combination yielding a significant speedup. Comparative studies with existing works illustrate that our approach can handle complex tasks beyond the reach of current methods. Scalability tests reveal that our method can manage up to 30 robots within 200 seconds. We attribute the computational efficiency to two factors. First, the adoption of hierarchical specifications results in smaller automatons, as indicated in Tab.~\ref{tab:result}. The results from the user study also highlighted the user-friendliness of \hltl. The second factor is  the planning algorithm, particularly its heuristics. As shown in Tab.~\ref{tab:heur}, the planning algorithm performs faster with any heuristic than without. Furthermore, employing a combination of all three heuristics results in shorter runtimes compared to using any single heuristic.

While the \hltl\ introduced has proven effective, there are several areas that require investigation. {\color{\modifycolor}An open challenge is developing an algorithmic approach to convert standard \ltl\ formulas into \hltl\ formulas and exploring how to introduce the hierarchical form to full LTL. The second aspect relates to the planning algorithm.} Currently, the planning algorithm assumes that robots operate independently, without considering scenarios where coordination, such as jointly carrying heavy items, is necessary. Therefore, it is crucial to include such situations, as in work~\cite{wei2025hierarchical}. Since \hltl\ can encompass multiple conflicting specifications, a future direction could be to define and maximize degree of satisfaction  when  not all specifications can be  met. 
Finally, integrating natural languages, as in work~\cite{luo2023obtaining}, to embed task hierarchies from instructions into structured formal languages, presents an intriguing prospect.



\begin{sizeddisplay}{\small}
\bibliographystyle{unsrtnat}
\bibliography{stap}
\end{sizeddisplay}

 \end{document}